\theoremstyle{plain}
\newtheorem{theorem}{Theorem}[section]
\newtheorem{proposition}[theorem]{Proposition}
\newtheorem{lemma}[theorem]{Lemma}
\theoremstyle{definition}
\newtheorem{assumption}[theorem]{Assumption}
\theoremstyle{remark}
\newcommand{\RR}{\mathbb{R}}
\newcommand{\EE}{\mathbb{E}}
\newcommand{\cX}{\mathcal{X}}
\newcommand{\cY}{\mathcal{Y}}
\newcommand{\cA}{\mathcal{A}}
\newcommand{\cD}{\mathcal{D}}
\newcommand{\cL}{\mathcal{L}}
\newcommand{\cV}{\mathcal{V}}
\newcommand{\cE}{\mathcal{E}}
\newcommand{\eps}{\varepsilon}
\newcommand{\simiid}{\overset{\textrm{i.i.d.}}{\sim}}
\DeclareMathOperator*{\argmin}{argmin}
\DeclareMathOperator*{\argmax}{argmax}
\newcommand\blfootnote[1]{%
  \begingroup
  \renewcommand\thefootnote{}\footnote{#1}%
  \addtocounter{footnote}{-1}%
  \endgroup
}
\title{On the Entropy Calibration of Language Models}
\author{%
  Steven Cao \\
  Stanford University\\
  \texttt{shcao@stanford.edu} \\
  \And
  Gregory Valiant \\
  Stanford University \\
  \texttt{valiant@stanford.edu} \\
  \And
  Percy Liang \\
  Stanford University \\
  \texttt{pliang@cs.stanford.edu}
}
\begin{document}

\maketitle

\begin{abstract}
    We study the problem of entropy calibration, which asks whether a language model's entropy over generations matches its log loss on human text. Past work found that models are miscalibrated, with entropy per step increasing as generations grow longer, due to error accumulation. To calibrate the model and improve text quality, it has become standard practice to truncate the distribution, but this approach reduces output diversity, which we would like to avoid. Therefore, in this paper, we ask: does miscalibration improve automatically with scale, and if not, is it theoretically possible to calibrate without tradeoffs? To build intuition, we first study a simplified theoretical setting to characterize the scaling behavior of miscalibration with respect to dataset size. We find that the rate of scaling depends on the power law exponent of the data distribution --- in particular, for a power law exponent close to 1, the scaling exponent is close to 0, meaning that miscalibration improves very slowly with scale. Next, we measure miscalibration empirically in language models ranging from 0.5B to 70B parameters. We find that the observed scaling behavior is similar to what is predicted theoretically: our fitted scaling exponents for text are close to 0, meaning that larger models accumulate error at a similar rate as smaller ones. This scaling (or, lack thereof) provides one explanation for why we sample from larger models with similar amounts of truncation as smaller models, even though the larger models are of higher quality. However, truncation is not a satisfying solution because it comes at the cost of increased log loss. In theory, is it even possible to reduce entropy while preserving log loss? We prove that it is possible, if we assume access to a black box which can fit models to predict the future entropy of text.
\end{abstract}

\section{Introduction}
We study entropy calibration, which asks whether a language model's entropy over generations matches its log loss on human text.\blfootnote{https://github.com/stevenxcao/entropy-calibration} This definition is a natural notion of calibration for generative tasks, and is more challenging than calibration for classification tasks because the output space is exponentially large. \citet{braverman2020calibration} were the first to study entropy calibration in language models, and they found that models are miscalibrated: entropy per step for model generations increases with the length of the document, in contrast with log loss on human text which is roughly flat over the length of the document~\citep{genzel-charniak-2002-entropy,verma-etal-2023-revisiting}. Entropy calibration can be thought of as a quantitative formalization of the well-known ``error accumulation'' or ``teacher forcing'' problem: entropy rises when the model generates erroneous tokens which are fed back into the context, derailing the generation~\citep[also see Appendix~\ref{appendix:additional}]{williams1989learning,Ranzato2015SequenceLT,basu2021mirostatneuraltextdecoding, hewitt-etal-2022-truncation}. Therefore, we study entropy calibration to gain fundamental insights into improving generation quality.

To correct error accumulation and calibrate the model, it has become standard practice to truncate the next token distribution~\citep{fan-etal-2018-hierarchical,holtzman2020curious,hewitt-etal-2022-truncation}, suppressing low probability tokens to improve quality at the cost of diversity~\citep{hashimoto-etal-2019-unifying,zhang-etal-2021-trading}. This solution is not satisfying: diversity is especially important for difficult tasks where we must aggregate multiple answers~\citep{wang2024mixtureofagentsenhanceslargelanguage,brown2024largelanguagemonkeysscaling}, as well as for synthetic data generation, which has seen a resurgence of interest as the community has begun worrying about running out of internet data~\citep{wang-etal-2023-self-instruct,gunasekar2023textbooksneed,maini-etal-2024-rephrasing}. Therefore, it is natural to ask: do we expect miscalibration to improve with scale? If not, is it at least theoretically possible to calibrate without sacrificing diversity?

To build intuition, we first study a simplified theoretical setting, where instability comes from the fact that the model might generate a token that it saw only a few times during training. This unfamiliar token then derails subsequent steps when it is fed back into the context autoregressively. Drawing on classic results, we calculate a scaling exponent capturing how quickly the probability of generating a rare token decreases with the number of training examples~\citep{good-1953-population,karlin-1967-central}. We find that this exponent depends on how heavy-tailed the data distribution is: in particular, for power law exponents close to 1, as is typical for human text~\citep{zipf-1936-psycho,zipf-1949-human}, the scaling exponent is close to 0. Therefore, this setup predicts that stability in generation improves very slowly with scale.

Next, we measure miscalibration empirically in large language models with up to 70B parameters, on three datasets. We find that the observed scaling behavior is similar to what is predicted by the simplified setting: fitting scaling exponents relating calibration error to model size, we find that the exponent for the two text datasets is around $-0.05$, meaning that larger models are similarly miscalibrated as smaller ones. On the other hand, for the code dataset, the scaling exponent is around $-0.3$, meaning that miscalibration improves moderately with scale. We measure the power law exponent to be around $1$ for the two text datasets, and $1.5$ for the code dataset. Therefore, these findings are consistent with the theory: the code dataset has more quickly decaying tails, so the scaling should indeed be faster. However, further work on more datasets is needed to more strongly establish this relationship between the power law and scaling exponents.

If even large models suffer from error accumulation, why are reasoning and instruction-tuned models able to produce long, coherent outputs? We find that much like distribution truncation, instruction tuning reduces entropy at the cost of increased log loss, with the largest models now having entropy too low. This tradeoff relates to past work which found that alignment degrades model capabilities, a phenomenon known as the alignment tax~\citep{ouyang2022training,bai2022helpful,lin-etal-2024-mitigating}.

Given that all known mitigations increase the model's log loss, is it even possible in theory to calibrate without this tradeoff? Drawing on ideas from reinforcement learning theory, we prove that it is possible, if we assume access to a black box which can fit models on the future entropy of text prefixes and attain low test error. Specifically, we describe a polynomial-time calibration procedure that adjusts each candidate token's probability based on the expected entropy of its continuations. While the resulting procedure is impractical to implement, we prove that it calibrates while preserving log loss, suggesting that generation stability and diversity might be possible to attain simultaneously.

\section{Preliminaries}\label{section:preliminaries}
We first review key definitions and properties for entropy calibration, introduced in \citet{braverman2020calibration}. Our setup is as follows: we are given prompts $X \in \cX$ drawn from some prompt distribution $X \sim q$, and responses $Y \in \cY$ drawn from the true conditional distribution $Y \sim p^*_X$. For example, $X$ might contain a description of a coding task, while $Y$ contains a solution to the task. We then train a language model $\hat p: \cX \to \Delta^\cY$ to fit the true conditional distribution $p^*$. We say that $\hat p$ is \textit{entropy calibrated} if its entropy over generations is equal to its log loss:
\begin{align}
     H(\hat p) &= \cL(p^*\ \|\ \hat p),
\end{align}
where the total/per-step entropy and total/per-step log loss are given by
\begin{align}
    H(\hat p) &= \EE_{X \sim q}\EE_{\hat Y \sim \hat p_X} [-\log \hat p_X(\hat Y)],\ &H_t(\hat p) = \EE_{X \sim q}\EE_{\hat Y \sim \hat p_X} [-\log \hat p_X(\hat Y_t \mid \hat Y_{<t})] \\
    \cL(p^*\ \|\ \hat p) &= \EE_{X \sim q}\EE_{Y \sim p^*_X} [-\log \hat p_X(Y)],\ &\cL_t(p^*\ \|\ \hat p) = \EE_{X \sim q}\EE_{Y \sim p^*_X} [-\log \hat p_X(\hat Y_t \mid \hat Y_{<t})].\label{equation:log-loss}
\end{align}
To build intuition for this definition, entropy can be thought of as a measure of the model's uncertainty, which should be calibrated to match the actual loss it incurs on real data. This definition mirrors that of binary calibration, and we derive this connection more formally in Appendix~\ref{appendix:connection}. Qualitatively, if a model is underconfident, then its generations have too much entropy and appear incoherent; if it is overconfident, then its generations have too little entropy and appear repetitive \citep{braverman2020calibration, basu2021mirostatneuraltextdecoding}; see Appendix~\ref{appendix:additional} for a replication of this finding and Appendix~\ref{appendix:examples} for examples. Entropy calibration is then the problem of adjusting the entropy to be just right. Empirically, \citet{braverman2020calibration} found that neural autoregressive language models have entropy too high: entropy per step matches the log loss at earlier steps but increases as the generation grows.

Why does entropy per step grow with the length of the generation? The main problem, as has been observed in empirical work, is that autoregressive language models accumulate error during generation. At training time, models are given input from the true distribution and asked to produce only a single additional token. In contrast, models must generate multiple tokens at deployment time, which they do by producing one token at a time and taking their own production as subsequent input. Therefore, even models with very low single-step error can degrade over multiple steps as they take their own slightly erroneous outputs as input and accumulate errors (see, e.g., \citet{Ranzato2015SequenceLT}, \citet{Welleck2019NeuralTG}, \citet{holtzman2020curious} for error accumulation in language modeling; and \citet{daume2009search}, \citet{ross2010efficient}, \citet{ross2011reduction} for imitation learning). This intuition is formalized in the context of entropy calibration in Corollary~{4.2} of \citet{braverman2020calibration}, which states that for a model with $\eps$ KL divergence to the true distribution, the entropy at step $t$ can deviate as much as $\eps + \sqrt{\eps t}$ from the log loss, growing with $t$.

How does one calibrate the entropy? Unlike binary and multiclass calibration, entropy calibration is challenging because the models have an exponentially large output space. In practice, practitioners use a number of distribution truncation methods, each of which uses a different heuristic to suppress low probability tokens in each generation step. Some standard methods include temperature reduction, top-k sampling~\citep{fan-etal-2018-hierarchical}, top-p sampling ~\citep{holtzman2020curious}, and min-p sampling~\citep{hewitt-etal-2022-truncation}. These methods improve text quality at the cost of diversity~\citep{hashimoto-etal-2019-unifying,zhang-etal-2021-trading,pillutla2021mauve,welleck2024decodingmetagenerationinferencetimealgorithms}. Following \citet{hashimoto-etal-2019-unifying}, we define a model's diversity to be its log loss on reference documents. The intuition behind this definition is that log loss (also known as cross entropy or forward KL) is a coverage metric: to attain low log loss, the model must ``cover'' as much as the reference distribution as possible. Our goal, then, is to calibrate entropy to match log loss without also causing the log loss to increase.

In theory, \citet{braverman2020calibration} show that one can calibrate entropy while preserving log loss via globally normalized temperature scaling, where the adjusted model is given by $\hat p_\tau(y_1, ..., y_L) \propto \hat p (y_1, ..., y_L)^{1/\tau}$. Unfortunately, this adjustment is intractable to compute because it involves normalizing over the entire output space. It remains unclear, then, whether this goal is possible in polynomial time. Specifically, we wish to take in a model $\hat p$ and produce a calibrated model $\tilde p$ with at most $\eps$ entropy calibration error per step, as well as log loss at most that of the original model $\hat p$:
\begin{align}
    \frac{1}{T} \left| \text{EntCE}(p^*\ \|\ \tilde p) \right| &\leq \eps, \\
    \cL(p^*\ \|\ \tilde p) &\leq \cL(p^*\ \|\ \hat p),
\end{align}
where the entropy calibration error is defined as the difference between the entropy and the log loss:
\begin{align}
    | \text{EntCE}(p^*\ \|\ \hat p) | = |H(\hat p) - \cL(p^*\ \|\ \hat p)|.
\end{align}

\section{Intuition: Singleton Mass in Power Law Data}
Before putting in the work to develop better calibration algorithms, it is natural to first ask whether we expect miscalibration to automatically improve with scale, as we train larger models on more data. To gain intuition, we first explore this question in a simplified theoretical setting. We define the setup to capture the following hypothesis regarding error accumulation (see, e.g., \citet{hewitt-etal-2022-truncation}): because the language distribution is heavy-tailed, the model must assign non-zero probability to a large number of rare tokens when fitting the data to achieve low log loss. However, if it happens to generate one such rare token, the model derails when that token is fed back into the context autoregressively, leading to a jump in entropy. Over many generation steps, then, the model will eventually derail. The degree of instability then depends on the probability of producing a rare token.

Accordingly, our setup is as follows: at training time, the model stores the counts for $m$ tokens drawn i.i.d.\ from an $\alpha$-power-law distribution $p$ over a vocabulary of size $v$, defined as $p_i \propto 1/i^\alpha$ for $i = 1,...,v$. The model then generates a sequence token-by-token as follows: if all tokens in context were seen at least twice at training time, the model samples a random token seen during training. But if any token in context was seen only once, the model samples from a high entropy ``derailed'' distribution instead. This simple stylized setting captures our intuition about error accumulation and lets us study the effect of $\alpha$, representing the heavy-tailedness of the data distribution.

In this setting, the expected entropy per step grows with slope proportional to the probability of emitting a rare token (see Appendix~\ref{appendix:connection}). Computing the rare token mass in power law data is a classic problem, and we can compute the asymptotic scaling exponent with respect to the number of training examples $m$ as follows~\citep{good-1953-population,karlin-1967-central}:
\begin{proposition}[informal]\label{prop:scaling}
    For $v$ infinite and $m$ large, the per-step probability of generating a singleton, in expectation over draws of the training set, is given by
    \begin{align*}
        \EE \frac{K_{m,1}}{m} = C_\alpha m^{1/\alpha -1},
    \end{align*}
    where $C_\alpha$ is a constant depending only on $\alpha$, and $K_{m,1}$ is a random variable denoting the number of items seen exactly once in a set of $m$ samples.
\end{proposition}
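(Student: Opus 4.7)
The plan is to compute $\EE K_{m,1}$ directly by linearity of expectation, then extract the asymptotic scaling via Poissonization and a Gamma-function substitution, following the classical calculations of Good and Karlin. Since the count $N_{m,i}$ of token $i$ is $\text{Binomial}(m, p_i)$,
$$\EE K_{m,1} = \sum_{i=1}^\infty \PP[N_{m,i} = 1] = m \sum_{i=1}^\infty p_i (1-p_i)^{m-1},$$
with $p_i = c_\alpha i^{-\alpha}$ and $c_\alpha = 1/\zeta(\alpha)$ (which requires $\alpha > 1$ for $v = \infty$). A term in this sum is appreciable only when $m p_i = \Theta(1)$, so the dominant contribution comes from indices $i \sim m^{1/\alpha}$.

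Next I would replace $(1-p_i)^{m-1}$ by $e^{-m p_i}$, valid since $p_i \to 0$ in the relevant range, and approximate the sum by the integral
$$m \int_1^\infty c_\alpha x^{-\alpha} \exp(-m c_\alpha x^{-\alpha})\, dx.$$
The substitution $u = m c_\alpha x^{-\alpha}$ converts this to
$$\frac{(m c_\alpha)^{1/\alpha}}{\alpha} \int_0^{m c_\alpha} u^{-1/\alpha} e^{-u}\, du,$$
and the remaining integral converges to $\Gamma(1 - 1/\alpha)$ as $m \to \infty$, which is finite because $\alpha > 1$. Dividing by $m$ yields $\EE K_{m,1}/m \sim C_\alpha m^{1/\alpha - 1}$ with $C_\alpha = c_\alpha^{1/\alpha}\, \Gamma(1 - 1/\alpha)/\alpha$, matching the claimed scaling.

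The main obstacle is justifying the two approximation steps rigorously: the Poisson replacement for the binomial tail, and the sum-to-integral step. Both errors should be of lower order than $m^{1/\alpha}$: the Poisson error is non-negligible only for the finitely many indices with $p_i = \Theta(1)$, which contribute $O(1)$ total, and the Euler--Maclaurin correction for the sum-to-integral step is similarly subleading. Rather than doing this bookkeeping by hand, the cleanest route is to invoke Karlin's framework directly: the counting function $\nu(x) = \#\{i : p_i \geq x\}$ is regularly varying with index $-1/\alpha$, and Karamata's Tauberian theorem then packages the entire asymptotic expansion, producing the exponent $1/\alpha$ for $\EE K_{m,1}$ (hence $1/\alpha - 1$ per step) that the proposition claims.
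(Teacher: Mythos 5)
Your proposal is correct and follows essentially the same route as the paper's derivation: linearity of expectation giving $\EE K_{m,1} = m\sum_i p_i(1-p_i)^{m-1}$, passage to an integral over the infinite power-law tail, the exponential (Poisson-type) replacement, and the substitution reducing everything to an incomplete gamma function that tends to $\Gamma(1-1/\alpha)$, yielding the same constant $C_\alpha = Z^{1/\alpha}\Gamma(1-1/\alpha)/\alpha$. Your added care about the $\alpha>1$ normalizability condition and the appeal to Karlin's regular-variation framework for rigor only strengthens what the paper presents informally.
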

We provide the derivation in the appendix. The key takeaway from this proposition is that the derailing probability scales as $m^{1/\alpha - 1}$, which is very slow if the power law exponent $\alpha$ is close to $1$, as is typical for text~\citep{zipf-1936-psycho,zipf-1949-human}. The reason for this slow scaling is that as $m$ increases, there are always more rare items to be sampled from the tail of the distribution. In practice, of course, we are not training unigram models, but the same intuition holds if we posit that semantic concepts in text are similarly heavy tailed: as larger models are trained on more data, there will always be new rare phenomena that they see during training only once. These phenomena are then memorized, and potentially generated at deployment, derailing the model.

While asymptotic analysis gives us a clean expression, we can also estimate the scaling exponent in simulation for finite values of $m$ and $v$. We find that the non-asymptotic simulated slopes are close to the asymptotic expression as long is $m$ is smaller than $v/3$ (see Appendix~\ref{appendix:additional}). We also calculate the power law exponent for our three datasets, finding that it is around 1 for WikiText and WritingPrompts and $1.5$ for CodeContests, which predicts slow scaling for the first two datasets and slow-to-moderate scaling for the third.

\begin{figure*}[t]
\begin{center}
\centerline{\includegraphics[width=\linewidth]{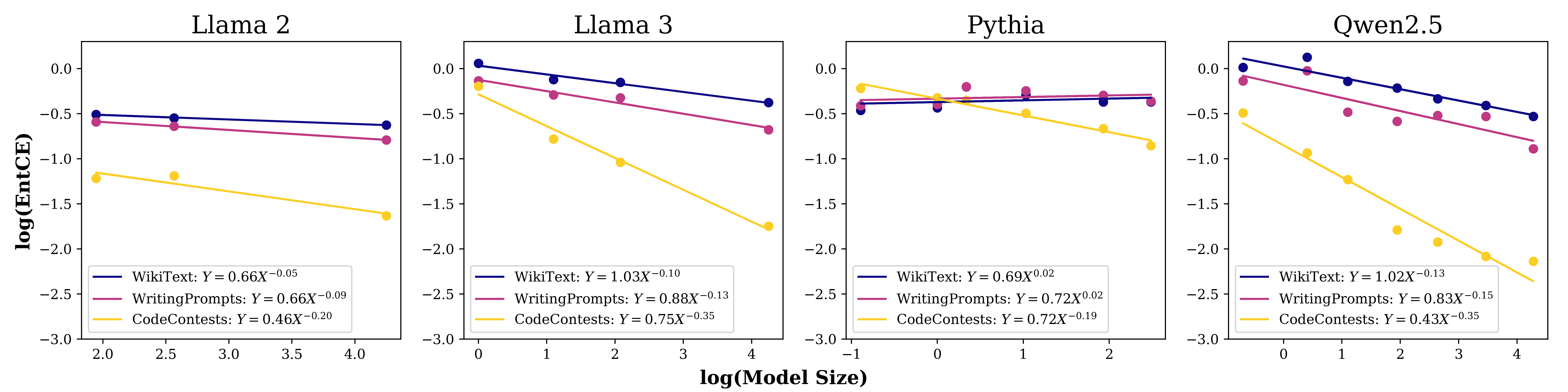}}
\caption{Log calibration error versus log model size for four model families and three datasets. We find that the scaling laws fit relatively well, suggesting that the relationship between calibration and scale is predictable. Furthermore, while there is variation between model families, the scaling exponents for each dataset are somewhat close to those predicted by theory (WikiText: $0.089$, WritingPrompts: $-0.10$, CodeContests: $-0.33$), with heavier-tailed datasets having slower scaling.
}
\label{figure:ent-vs-size}
\end{center}
\vskip -0.2in
\end{figure*}

\section{Experiments: Miscalibration in Large Language Models}
Next, we measure miscalibration empirically in large language models. We study four model families (\textbf{Qwen2.5} (0.5B, 1.5B, 3B, 7B, 14B, 32B, 72B)~\citep{qwen2025qwen25technicalreport}, \textbf{Llama~3} (1B, 3B, 8B, 70B)~\citep{grattafiori2024llama3herdmodels}, \textbf{Llama~2} (7B, 13B, 70B)~\citep{touvron2023llama2openfoundation}, and \textbf{Pythia} (410M, 1.4B, 2.8B, 6.9B, 12B)~\citep{biderman2023pythiasuiteanalyzinglarge}) applied to the three datasets listed below. In each setting, we use 5000 examples and limit samples to 1024 tokens; see Appendix~\ref{appendix:experimental} for more experimental details. We primarily study base models because we are interested in the problem of modeling human text; we study the effect of instruction tuning in Section~\ref{subsection:tradeoffs}.
\begin{enumerate}[(a)]
    \setlength{\itemsep}{0pt}
    \item \textbf{WikiText-103}~\citep{merity2016pointersentinelmixturemodels}: given 128 tokens of context from a Wikipedia passage, the model is tasked with completing the passage.
    \item \textbf{WritingPrompts}~\citep{fan-etal-2018-hierarchical}: given a prompt from r/writingprompts along with 128 tokens of context from a human-written story, the model is tasked with completing the story.
    \item \textbf{CodeContests}~\citep{li-2022-competition}: given a coding problem from one of five websites and 128 tokens of context from a human-written solution, the model is tasked with completing the solution.
\end{enumerate}

\subsection{Miscalibration scaling in base models}

\begin{figure*}[t]
\begin{center}
\centerline{\includegraphics[width=0.85\linewidth]{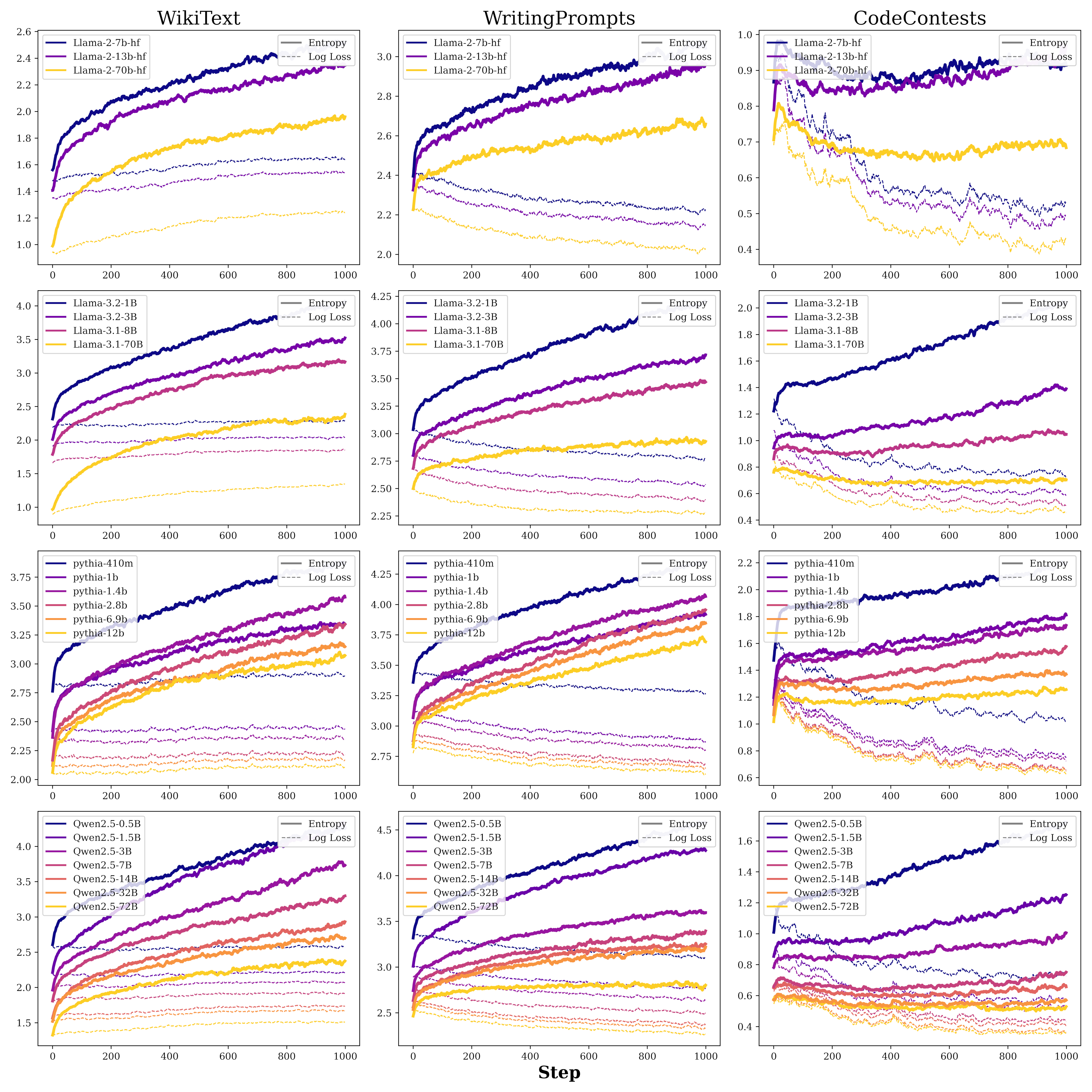}}
\caption{Entropy for each generation step (solid) and log loss for each token in the ground truth (dashed), for each dataset (columns) and each model family (rows), with models colored by size. Models have entropy much higher than their log loss, with the gap growing with the number of generation steps, a sign of error accumulation. For the text datasets, models of different sizes seem to be similarly miscalibrated, while for code the degree of miscalibration seems to improve with size.}
\label{figure:ent-over-time}
\end{center}
\vskip -0.2in
\end{figure*}

Past work has found that many model capabilities improve predictably with model size, with task loss and model size following a linear relationship when plotted on a log scale~\citep{kaplan2020scalinglawsneurallanguage,hoffmann2022trainingcomputeoptimallargelanguage}. We use a similar methodology to study the relationship between entropy miscalibration and model size. If model size and dataset size are scaled proportionally, Proposition~\ref{prop:scaling} suggests a scaling law of $\log \text{EntCE} = (1/\alpha - 1) \log m + C$, where $\alpha$ is the power law exponent of the data distribution and $m$ is the parameter count. Does the actual data also follow a clean scaling law, and how close is the scaling exponent to that predicted by the simplified setting?

For each model-dataset combination, we compute the model's calibration error as the difference between its average entropy per generation step and its average log loss on ground truth data. We then plot log calibration error versus log model size, as shown in Figure~\ref{figure:ent-vs-size}. 

First, we find that the linear fit is accurate, suggesting that the relationship between calibration and scale is predictable. Next, we find that the scaling exponents are dataset-dependent: for the older model families (Llama 2 and Pythia), the exponents are around $0.0$ for WikiText and WritingPrompts and $-0.2$ for CodeContests, while for the newer model families (Llama 3 and Qwen2.5), the exponents are around $-0.13$ for WikiText and WritingPrompts and $-0.35$ for CodeContests. Notably, these exponents are somewhat close to what is predicted theoretically (Figure~\ref{figure:singleton}): WikiText and WritingPrompts, with power law exponents of $0.918$ and $1.114$, are predicted to have slow scaling exponents of $0.089$ and $-0.10$, while CodeContests, with a power law exponent of $1.5$, is predicted to have a moderate scaling exponent of $-0.33$. However, future work on more datasets would be needed to more strongly establish the relationship between these exponents empirically. 

We speculate that recent model families have better scaling due to changes in their pretraining data mixtures, and especially the addition of a midtraining step with higher quality and less diverse data. However, training details for three out of the four model families (all but Pythia) are not public, and future work with controlled data mixtures would be useful to disentangle the effects of model size, dataset size, and dataset composition. 

Overall, these plots suggest that miscalibration in text generation improves very slowly with scale: a scaling exponent of $-0.10$ means that to reduce calibration error by a factor of $10$, dataset size must increase by a factor of $10^{10}$. 

\subsection{Entropy over time}

\begin{figure*}[t]
\begin{center}
\centerline{\includegraphics[width=0.9\linewidth]{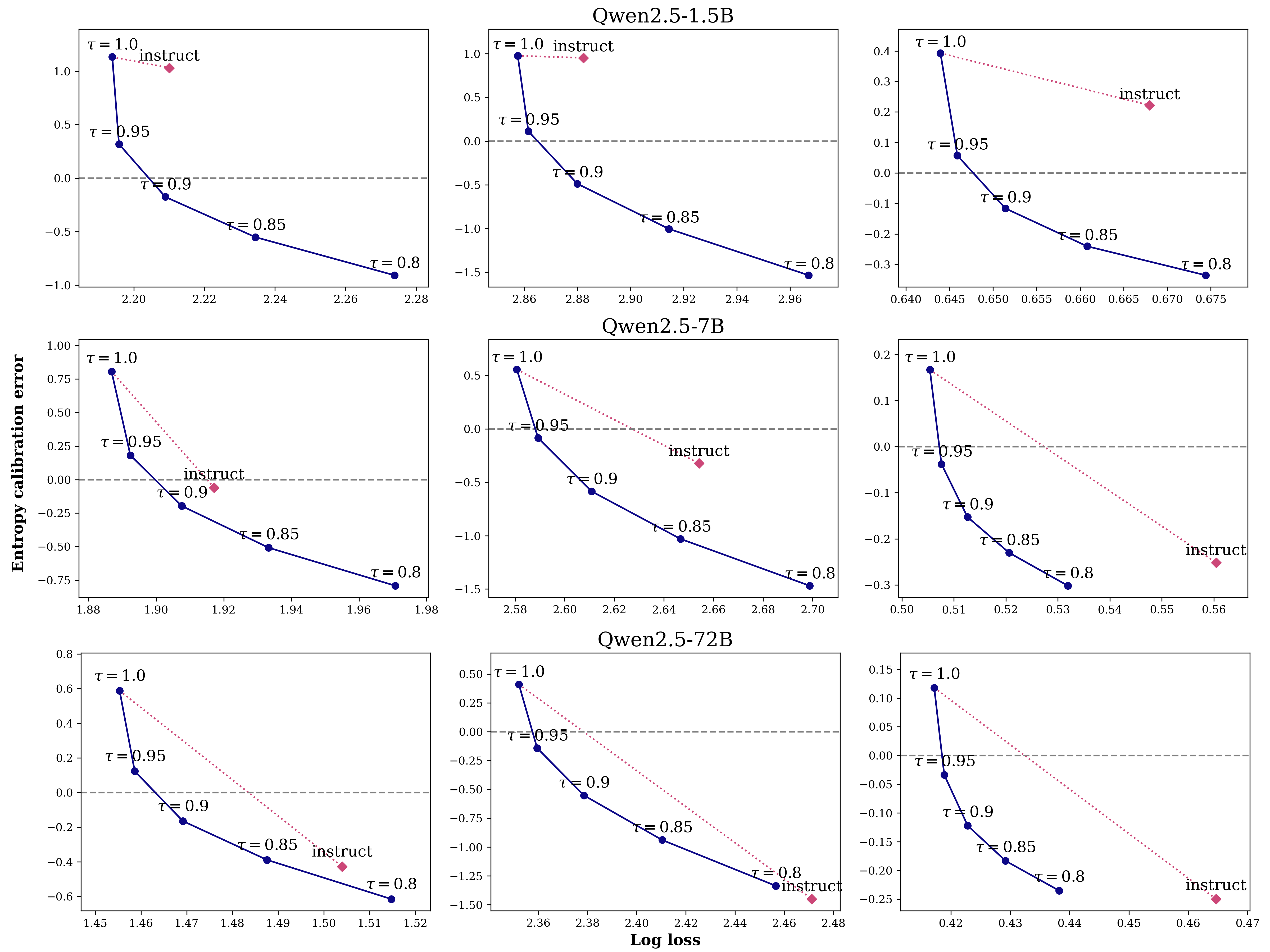}}
\caption{Entropy calibration error versus log loss for base Qwen2.5 (1.5B, 7B, 72B) compared to the instruction-tuned versions, along with various temperature settings (please see Appendix~\ref{appendix:additional} for all model sizes). Positive calibration error means that the model's entropy is higher than its log loss, while negative means that its entropy is lower than its log loss. We find that each modification of the base model reduces entropy while increasing log loss, calibrating at the cost of diversity.}
\label{figure:entCE-vs-logloss-main}
\end{center}
\vskip -0.2in
\end{figure*}

To gain a more fine-grained understanding of entropy blowup, we also produce entropy over time plots for each model and each dataset, as shown in Figure~\ref{figure:ent-over-time}. Specifically, we plot each model's entropy at each generation step $t$, averaged over 5000 generated samples. We then compare this curve to the model's log loss on each token $t$ of a ground truth example, averaged over 5000 examples. Recall from Section~\ref{section:preliminaries} that theoretically, for an accurate model, entropy is initially close to log loss, but can deviate as much as $\sqrt{t}$ at the $t$-th step. A calibrated model which does not experience error accumulation should have entropy close to the log loss for all generation steps. 

First, we find that for each model and dataset, the log loss is mostly constant or slightly decreasing over time. Past papers use a model's log loss to estimate the actual entropy of the underlying text, as the former is an upper bound for the latter that grows tighter if the model is more accurate. This part of the plot replicates past findings that the entropy per step of human text is stable over time, also known as the entropy rate constancy principle~\citep{genzel-charniak-2002-entropy,verma-etal-2023-revisiting}.

On the other hand, unlike human text, the entropy per step of model generations is not stable and instead increases over time. The lack of scaling shown quantitatively in the previous subsection is reflected visually in Figure~\ref{figure:ent-over-time}, with larger models having entropy growing at similar rates as smaller models for WikiText and WritingPrompts (the left and middle columns). For CodeContests, the slopes decrease with model size, visually confirming that there is slow-to-moderate scaling.

\subsection{Calibration-diversity tradeoffs}\label{subsection:tradeoffs}

In this subsection, we seek to better understand how distribution truncation and post-training affect entropy. For each model-dataset combination (excluding Pythia, which has no instruction-tuned version), we compare the model with temperature $1.0$ to that with temperature $0.95$, $0.9$, $0.85$, or $0.8$, as well as to the instruction-tuned version of the model. We then plot entropy calibration error against log loss, where each setting of the model is one point on the plot, as shown in Figure~\ref{figure:entCE-vs-logloss-main}.

First, we find that reducing temperature below $1$ reduces entropy but increases log loss, replicating similar findings in past work~\citep{hashimoto-etal-2019-unifying}. Furthermore, the temperature attaining zero calibration error is similar across model sizes, which makes sense given that they are similarly miscalibrated. We find that instruction tuning also reduces entropy while increasing log loss, which is consistent with past work showing that instruction tuning harms diversity~\citep{ghosh2024closer}. Unlike temperature scaling, the magnitude of the effect varies across model sizes, with larger models experiencing both a larger reduction in entropy and larger increase in log loss. However, this pattern is not robust across model families (see Appendix~\ref{appendix:additional}). Further work with more controlled instruction tuning would be necessary to explore this relationship further. These experiments reconcile our previous finding, that even large models accumulate errors, with the fact that in practice, one can use truncation or post-training to generate long, coherent pieces of text. The tradeoff is that each of these mitigations comes at the cost of diversity. % When does this tradeoff actually matter? For difficult or creative tasks, model practitioners have found diversity to be important: for example, AlphaCode solved difficult coding questions by generating a large number of solutions, and the authors chose to increase the temperature past 1 to increase diversity~\citep{li-2022-competition}. In a similar vein, those using models for creative writing have anecdotally found it useful to increase the temperature or use base models rather than instruction-tuned ones to produce more creative stories~\citep{redditpost1}.

\section{Theory: Future Entropy Scaling}\label{section:theory}
If all known mitigations increase log loss, is it even possible in theory to calibrate without this tradeoff? In this section, we provide evidence that this tradeoff is not inevitable: given the assumption that there exists a procedure to fit regression models that generalize to i.i.d.\ test data, we show that there exists a tractable, albeit impractical, procedure that calibrates while preserving log loss.

\begin{algorithm}[t]
\caption{Future entropy scaling}\label{alg:futureent}
\textbf{Inputs:} model $\hat p$, length $T$, vocab $\cV$, future entropy fitting algorithm $\cA$, future entropy dataset size $n$, sample size $m$, prompt distribution $q$, true conditional distribution $p^*$, optimization tolerance $\eps$
\begin{enumerate}[1:]
    \item Initialize $\alpha_1 = ... = \alpha_T = 0$, $\hat f_2 = ... = \hat f_{T+1} = 0$.
    \item For $t = T, ..., 1$:
    \item $\quad$ Choose $\alpha_t$ to minimize expected log loss at step $t$, until the gradient is at most $\eps$:
    \begin{align*}
        \alpha_t = \argmin_{\alpha_t'} \cL_t\left(p^*\ \|\ \hat p_{\alpha', \hat f}^\text{(ent)}\right)
    \end{align*}
    $\quad$ where $\alpha' = (0, ..., 0, \alpha_t', \alpha_{t+1}, ..., \alpha_T)$. ($\cL_t$: Equation~\ref{equation:log-loss}, $\hat p_{\alpha', \hat f}^\text{(ent)}$: Equation~\ref{equation:approx-future-ent}).
    \item $\quad$ Fit the future entropy predictor $\hat f_t$ as follows:
    \item $\qquad$ Sample prefixes $\left(X^{(i)}, Y_{<t-1}^{(i)}\right)_{i=1}^n \sim (q, p^*)$.
    \item $\qquad$ For each token $v \in \cV$, compute labels $(h^{(i, v)})_{i=1}^n$ by passing each prefix $\left(X^{(i)}, \left[Y_{<t-1}^{(i)}, v\right]\right)$ into Algorithm~\ref{alg:futureentest}, along with inputs $\hat p_{\alpha, \hat f}^\text{(ent)}$, $T$, $m$.
    \item $\qquad$ Fit one future entropy predictor for each token $v$, setting $\hat f_t(X, [Y_{<t-1}, v]) = \hat f_{t, v}(X, Y_{<t-1})$, where each $\hat f_{t, v}$ is the output $\cA\left\{\left(X^{(i)}, Y_{<t-1}^{(i)}, h^{(i, v)}\right)_{i=1}^n\right\}$.
    \item Return $(\alpha_1, ..., \alpha_T), (\hat f_2, ..., \hat f_{T+1})$.
\end{enumerate}
\end{algorithm}

\begin{algorithm}[t]
\caption{Future entropy estimation via sampling}\label{alg:futureentest}
\textbf{Inputs:} model $\hat p$, length $T$, prefix $(X, Y_{\leq t})$, samples $m$
\begin{enumerate}[1:]
    \item Sample $m$ trajectories from the model: $\left(\hat Y_{t+1}^{(i)}, ..., \hat Y_{T}^{(i)}\right)_{i=1}^m \simiid \hat p_X(\hat Y_{>t} \mid Y_{\leq t})$.
    \item Return $\hat H = \frac{1}{m} \sum_{i=1}^m \sum_{s=t+1}^T -\log \hat p_X(\hat Y_s^{(i)} \mid \hat Y_{<s}^{(i)})$.
\end{enumerate}
\end{algorithm}

\subsection{Definitions}

For a model $\hat p_X(Y_1, ..., Y_T)$ mapping a prompt $X$ to a distribution $\Delta^{\cY}$ over the output space $\cY = \cV^T$, let the \textit{future entropy} of the prefix $(X, Y_{\leq t})$ be given by
\begin{align}
    &H_{\hat p_X}(Y_{>t} \mid Y_{\leq t})= \sum_{Y_{>t}} - \hat p_X(Y_{>t} \mid Y_{\leq t}) \log \hat p_X(Y_{>t} \mid Y_{\leq t}).
\end{align}
Given a prefix $Y_{\leq t}$, this expression computes the model's entropy over the remaining generation $Y_{>t}$. We can then define the \textit{future entropy adjusted} model, for parameters $\alpha = (\alpha_1, ..., \alpha_T) \in \RR^T$, as
\begin{align}\label{equation:future-ent-adjustment}
    \hat p_{\alpha; X}^\text{(ent)}(Y_t \mid Y_{<t}) &\propto \exp\left\{ (1 + \alpha_t) \log \hat p_X(Y_t \mid Y_{<t})- \alpha_t H_{\hat p_{\alpha; X}^\text{(ent)}}(Y_{>t} \mid Y_{\leq t}) \right\}.
\end{align}
This expression adjusts each candidate token's probability based on what the future entropy would be if that token were chosen. The calibration procedure then involves fitting models to predict the future entropy of prefixes, and choosing the weights $\alpha_t$ to calibrate the model (Algorithm~\ref{alg:futureent}).

\subsection{Assumptions}
For a distribution $\hat p$ that can be tractably sampled from, we can take a Monte Carlo estimate to compute the future entropy, which concentrates because entropy is bounded (Algorithm~\ref{alg:futureentest}). However, we cannot assume that $\hat p_{\alpha; X}^\text{(ent)}$ can be tractably sampled from, so we cannot compute its future entropy naively. Instead, we will use our assumed model fitting procedure to iteratively replace each intractable future entropy term $H_{\hat p_{\alpha; X}^\text{(ent)}}(Y_{>t} \mid Y_{\leq t})$ with a tractable fitted model $\hat f(X, Y_{\leq t})$, leading to the following approximate future entropy adjustment:
\begin{align}\label{equation:approx-future-ent}
    \hat p_{\alpha, \hat f; X}^\text{(ent)}(Y_t \mid Y_{<t}) &\propto \exp\left\{ (1 + \alpha_t) \log \hat p_X(Y_t \mid Y_{<t}) - \alpha_t \hat f_{t+1}(X, Y_{\leq t}) \right\}.
\end{align}
Then, we can initialize $\alpha = (0, ..., 0)$ and first fit $\alpha_T$ for the last generation step. Next, now that the last generation step is calibrated, we can fit future entropy model $\hat f_T$, taking in length $T-1$ prefixes and predicting the entropy at step $T$. Given $\hat f_T$, we can then fit $\alpha_{T-1}$, calibrating the second to last step. This procedure proceeds from $t = T,...,1$, resulting in a calibrated model. 

The future entropy model fitting relies on the following assumption, which states that we can fit regression models that attain good i.i.d.\ test error:
\begin{assumption}\label{assumption:fitting}
    Let $\left(X^{(i)}, Y_{\leq t}^{(i)}, h^{(i)}\right)_{i=1}^n$ be a dataset with inputs $X^{(i)} \simiid q$ and $Y_{\leq t}^{(i)} \sim p^*_{X^{(i)}}$, along with noisy labels $h^{(i)}$. Furthermore, suppose each noisy label is given by $h^{(i)} = f^*(X^{(i)}, Y_{\leq t}^{(i)}) + \eps_i$ for the true label $f^*(X^{(i)}, Y_{\leq t}^{(i)}) \in \RR$ and noise $\eps_i \simiid \cE$, where $\cE$ is a mean-zero noise distribution bounded by $\sigma$. Then, there exists a polynomial time algorithm $\cA$ which takes in a dataset of size $n \in \text{poly}(\sigma, \delta)$ and outputs a fitted model $\hat f$ attaining at most $\delta$ test error:
    \begin{align*}
        \EE_{X \sim q} \EE_{Y_{\leq t} \sim p^*_X} | \hat f(X, Y_{\leq t}) - f^*(X, Y_{\leq t}) | \leq \delta.
    \end{align*}
\end{assumption}
Empirically, neural networks can fit almost anything while attaining low test error in distribution, and the future entropy prediction problem described here is not particularly complex, involving mapping a set of tokens to a single bounded scalar. However, future empirical work is needed to determine how accurately a large neural model can predict the future entropy.

\subsection{Main theorem}
With this assumption, we now state the main result:
\begin{theorem}
    Suppose that Assumption~\ref{assumption:fitting} holds, where each future entropy predictor attains test error $\delta$. Also, let $(\alpha, \hat f)$ be the output of Algorithm~\ref{alg:futureent}, where each $\alpha_t$ is an $\eps$-stationary point. Then,
    \begin{align*}
        \left|\text{EntCE}\left(p^*\ \|\ \hat p_{\alpha, \hat f}^\text{(ent)}\right)\right| &\leq 2T\delta + \sum_{t=1}^T (1 + \alpha_t) \eps, \\
        \cL\left(p^*\ \|\ \hat p_{\alpha, \hat f}^\text{(ent)}\right) &\leq \cL(p^*\ \|\ \hat p).
    \end{align*}
\end{theorem}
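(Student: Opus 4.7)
The plan is to prove the two bounds separately. The log-loss inequality follows from a direct per-step argument, while the calibration bound comes from a telescoping identity that identifies $(1 + \alpha_t)\,\partial \cL_t / \partial \alpha_t$ (up to a $2\delta$ approximation) with the difference $G_{t+1} - G_t$, where $G_t$ denotes a ``calibration error from step $t$ onward'' under a $p^*$-prefix.

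For the log-loss bound, observe that $\cL_t(p^*\,\|\,\hat p_{\alpha,\hat f}^{(\text{ent})})$ depends only on $\alpha_t$: the prefix $Y_{<t}$ is drawn from $p^*$ so earlier $\alpha_s$ do not appear, and the adjustments at steps $>t$ do not enter the $t$-th conditional. Plugging $\alpha_t = 0$ into Equation~\ref{equation:approx-future-ent} recovers $\hat p(\cdot \mid Y_{<t})$, so $\cL_t$ at $\alpha_t = 0$ equals the base-model log loss. Since the algorithm minimizes $\cL_t$ over $\alpha_t$, we obtain $\cL_t(p^*\,\|\,\hat p_{\alpha,\hat f}^{(\text{ent})}) \le \cL_t(p^*\,\|\,\hat p)$ for every $t$; summing over $t$ proves the second claim.

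For the calibration bound, I would differentiate Equation~\ref{equation:approx-future-ent} with respect to $\alpha_t$, using the exponential-family identity that the derivative of $\log Z_t$ equals the model's expectation of the sufficient statistic:
\[
\frac{\partial \cL_t}{\partial \alpha_t} = \EE_X \EE_{Y_{<t}\sim p^*}\bigl[\EE_{v\sim \hat p_{\alpha,\hat f}^{(\text{ent})}}[g_t(v)] - \EE_{v\sim p^*}[g_t(v)]\bigr], \qquad g_t(v) := \log \hat p(v \mid Y_{<t}) - \hat f_{t+1}(X, [Y_{<t}, v]).
\]
Solving Equation~\ref{equation:approx-future-ent} for $\log \hat p$ and substituting, the $\log Z_t$ piece drops out because it is $v$-independent, so $(1+\alpha_t)\,\partial \cL_t / \partial \alpha_t$ equals the same difference with $\log \hat p_{\alpha,\hat f}^{(\text{ent})}$ in place of $\log \hat p$. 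I would then invoke Assumption~\ref{assumption:fitting} to replace each $\hat f_{t+1}$ by the exact future entropy $H_{\hat p_{\alpha,\hat f}^{(\text{ent})}}(Y_{>t} \mid Y_{\le t})$, at total cost $2\delta$ (once per expectation over $v$). The chain rule of entropy under $\hat p_{\alpha,\hat f}^{(\text{ent})}$ then collapses the $v\sim \hat p_{\alpha,\hat f}^{(\text{ent})}$ side into $-H_{\hat p_{\alpha,\hat f}^{(\text{ent})}}(Y_{\ge t}\mid Y_{<t})$, and the analogous recursion for log loss under $p^*$ collapses the $v\sim p^*$ side, giving
\[
(1 + \alpha_t)\,\frac{\partial \cL_t}{\partial \alpha_t} \approx G_{t+1} - G_t, \qquad G_t := \EE_X \EE_{Y_{<t}\sim p^*}\bigl[H_{\hat p_{\alpha,\hat f}^{(\text{ent})}}(Y_{\ge t}\mid Y_{<t}) - \EE_{Y_{\ge t}\sim p^*}[-\log \hat p_{\alpha,\hat f}^{(\text{ent})}(Y_{\ge t}\mid Y_{<t})]\bigr].
\]
Note that $G_1 = \text{EntCE}(p^*\,\|\,\hat p_{\alpha,\hat f}^{(\text{ent})})$ and $G_{T+1} = 0$ by the empty-sum convention. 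The $\eps$-stationarity condition then gives $|G_t - G_{t+1}| \le 2\delta + (1 + \alpha_t)\eps$, and telescoping over $t = 1, \ldots, T$ yields $|G_1 - G_{T+1}| \le 2T\delta + \sum_t (1+\alpha_t)\eps$, proving the first claim.

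The main obstacle I expect is the change-of-measure step: Assumption~\ref{assumption:fitting} bounds the per-$v$ test error under prefixes from $(q, p^*)$, but the gradient formula requires an expectation over $v\sim \hat p_{\alpha,\hat f}^{(\text{ent})}$, where $v$ is weighted by the adjusted model rather than $p^*$. Pushing the per-$v$ bound through this weighting, and separately absorbing the Monte Carlo noise from Algorithm~\ref{alg:futureentest} into $\delta$ by choosing $m$ large enough, is the main technical step. One should also verify that $\hat f_{t+1}$ remains a valid predictor throughout the inner optimization of $\alpha_t$; this holds because the conditional distribution of steps $>t$ given $Y_{\le t}$ does not depend on $\alpha_t$, only on $\alpha_{t+1}, \ldots, \alpha_T$, so the backward induction is consistent.
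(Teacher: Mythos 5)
Your proposal follows essentially the same route as the paper: the log-loss bound via the observation that $\alpha_t$ affects only $\cL_t$ and that $\alpha_t=0$ recovers the base conditional, and the calibration bound via the gradient-of-$\cL_t$ computation (normalizer cancellation, the $2\delta$ swap of $\hat f_{t+1}$ for the true future entropy) feeding a telescoping hybrid argument --- your $G_t$ differs from the paper's hybrid quantities only by the constant $\cL\left(p^*\ \|\ \hat p_{\alpha,\hat f}^\text{(ent)}\right)$, so $G_{t+1}-G_t$ is exactly the paper's per-step swap, bounded by $(1+\alpha_t)\eps+2\delta$. The change-of-measure issue you flag is handled in the paper just as you suggest, by fitting one predictor per token so the error bound holds uniformly over $Y_t\in\cV$ (in expectation over $p^*$-prefixes), which covers the expectation under the adjusted model.
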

This theorem tells us that if each future entropy predictor has error $\delta$ and we choose each $\alpha_t$ to be an $\eps$-stationary point with respect to the log loss, the calibrated model will have entropy within $O(\delta + \eps)$ of its log loss at each time step, and its log loss will be better than that of the original model.

\textit{Why does future entropy preserve log loss?} Future entropy adjustment can be derived as a first-order approximation of globally normalized temperature adjustment; we provide this derivation in Appendix~\ref{appendix:connection}, along with a derivation in the MaxEnt RL framework~\citep{ziebart-2008-maximum}. Global temperature adjustment attains calibration as long as the gradient of the log loss with respect to temperature is small~\citep{braverman2020calibration}, which is a first-order condition. Then, intuitively, a first-order approximation of global temperature scaling should preserve this property.

The procedure described in Algorithm~\ref{alg:futureent} is not practical to implement, as one would need to a fit a separate future entropy predictor for each generation step and each candidate token, each of which involves a slow data collection process based on a repeated sampling. Nonetheless, the existence of such an algorithm provides evidence that log loss tradeoffs are not inevitable in entropy calibration, despite the output space being exponentially large. One other point to note is that our analysis holds for any approximation of the future entropy that attains error $\delta$, with worse approximations just weakening the calibration error guarantee. For example, one could use the one-step future entropy~\citep{braverman2020calibration}, or truncate to $k$ steps instead. We hope that our theory, which establishes future entropy as the target to approximate, guides future work to achieve better quality-diversity tradeoffs than existing approaches.

\section{Additional Related Work}
Calibration is most commonly studied in binary and multiclass classification, with some classic algorithms including binning, Platt scaling, and isotonic regression~\citep{PlattProbabilisticOutputs1999,zadrozny2002transforming,guo2017on,kumar2019verified}. %Entropy calibration can be thought of as a relaxation of multiclass calibration, where each class is one of exponentially many possible strings. Relaxing multiclass calibration to calibration of a loss function is related to \citet{zhao2021calibrating}, who define a calibration notion for multiclass classifiers in decision theoretic settings but where the number of classes is not exponential.
In the language modeling setting, \citet{liang2023holisticevaluationlanguagemodels} evaluate the calibration of language models prompted to perform a wide range of classification tasks, finding that models are almost always miscalibrated and overconfident. In such a setting, one can simply apply standard calibration techniques to adjust the model's outputted probabilities. More challenging is linguistic calibration, where models appear overconfident in the language they use to answer a question. To address this problem, past works propose techniques based on controllable generation and reinforcement learning~\citep{mielke-etal-2022-reducing,band2024linguistic}. Finally, the term ``calibration'' is also used to describe the procedure of eliminating the model's innate bias toward certain tokens when doing in-context learning, to improve task performance~\citep{pmlr-v139-zhao21c}. All of these settings are distinct from our setting, which studies the calibration of a model's entropy over an entire generation, and whose related work we discuss in Section~\ref{section:preliminaries}.

\section{Conclusion}

We find both theoretically and experimentally that entropy miscalibration improves very slowly with scale. Furthermore, while all current methods calibrate at the cost of diversity, we provide theoretical evidence that this tradeoff can be avoided. Therefore, given recent community interest in test-time scaling and synthetic data, both for which diversity is centrally important, we are excited about work which seeks to attain both generation stability and diversity simultaneously.

\section*{Acknowledgements}

We would like to thank Rishi Bommasani, Sarah Cen, Irena Gao, Konwoo Kim, Suhas Kotha, John Thickstun, and anonymous reviewers for useful conversations about the paper. GV is currently affiliated with OpenAI but did this work while at Stanford. GV and SC were supported by NSF Award AF-2341890 and the Simons Foundation Investigator Award, PL and SC were supported by the Open Philanthropy Project Award, and SC was supported by the NSF Graduate Research Fellowship Program.

% % \section*{Impact Statement}
% % While our work primarily involves analysis and theory, it has implications for downstream tasks like creative writing and code generation. The advancement of language model capabilities in these domains would lead to useful tools, but would also disrupt online communities and people's livelihoods. We hope that language models can be deployed responsibly, in ways that maintain the health and well-being of the communities they are trained on.

\bibliography{neurips_2025}

@misc{liang2023holisticevaluationlanguagemodels,
      title={Holistic Evaluation of Language Models}, 
      author={Percy Liang and Rishi Bommasani and Tony Lee and Dimitris Tsipras and Dilara Soylu and Michihiro Yasunaga and Yian Zhang and Deepak Narayanan and Yuhuai Wu and Ananya Kumar and Benjamin Newman and Binhang Yuan and Bobby Yan and Ce Zhang and Christian Cosgrove and Christopher D. Manning and Christopher Ré and Diana Acosta-Navas and Drew A. Hudson and Eric Zelikman and Esin Durmus and Faisal Ladhak and Frieda Rong and Hongyu Ren and Huaxiu Yao and Jue Wang and Keshav Santhanam and Laurel Orr and Lucia Zheng and Mert Yuksekgonul and Mirac Suzgun and Nathan Kim and Neel Guha and Niladri Chatterji and Omar Khattab and Peter Henderson and Qian Huang and Ryan Chi and Sang Michael Xie and Shibani Santurkar and Surya Ganguli and Tatsunori Hashimoto and Thomas Icard and Tianyi Zhang and Vishrav Chaudhary and William Wang and Xuechen Li and Yifan Mai and Yuhui Zhang and Yuta Koreeda},
      year={2023},
      eprint={2211.09110},
      archivePrefix={arXiv},
      primaryClass={cs.CL},
      url={https://arxiv.org/abs/2211.09110}, 
}

@inproceedings{band2024linguistic,
      title={Linguistic Calibration of Long-Form Generations}, 
      author={Neil Band and Xuechen Li and Tengyu Ma and Tatsunori Hashimoto},
      booktitle={Forty-first International Conference on Machine Learning},
      year={2024},
      url={https://openreview.net/forum?id=rJVjQSQ8ye}
}

@InProceedings{pmlr-v139-zhao21c,
  title = 	 {Calibrate Before Use: Improving Few-shot Performance of Language Models},
  author =       {Zhao, Zihao and Wallace, Eric and Feng, Shi and Klein, Dan and Singh, Sameer},
  booktitle = 	 {Proceedings of the 38th International Conference on Machine Learning},
  pages = 	 {12697--12706},
  year = 	 {2021},
  editor = 	 {Meila, Marina and Zhang, Tong},
  volume = 	 {139},
  series = 	 {Proceedings of Machine Learning Research},
  month = 	 {18--24 Jul},
  publisher =    {PMLR},
  pdf = 	 {http://proceedings.mlr.press/v139/zhao21c/zhao21c.pdf},
  url = 	 {https://proceedings.mlr.press/v139/zhao21c.html},
  abstract = 	 {GPT-3 can perform numerous tasks when provided a natural language prompt that contains a few training examples. We show that this type of few-shot learning can be unstable: the choice of prompt format, training examples, and even the order of the examples can cause accuracy to vary from near chance to near state-of-the-art. We demonstrate that this instability arises from the bias of language models towards predicting certain answers, e.g., those that are placed near the end of the prompt or are common in the pre-training data. To mitigate this, we first estimate the model’s bias towards each answer by asking for its prediction when given a training prompt and a content-free test input such as "N/A". We then fit calibration parameters that cause the prediction for this input to be uniform across answers. On a diverse set of tasks, this contextual calibration procedure substantially improves GPT-3 and GPT-2’s accuracy (up to 30.0% absolute) across different choices of the prompt, while also making learning considerably more stable.}
}

@article{mielke-etal-2022-reducing,
    title = "Reducing Conversational Agents' Overconfidence Through Linguistic Calibration",
    author = "Mielke, Sabrina J.  and
      Szlam, Arthur  and
      Dinan, Emily  and
      Boureau, Y-Lan",
    editor = "Roark, Brian  and
      Nenkova, Ani",
    journal = "Transactions of the Association for Computational Linguistics",
    volume = "10",
    year = "2022",
    address = "Cambridge, MA",
    publisher = "MIT Press",
    url = "https://aclanthology.org/2022.tacl-1.50/",
    doi = "10.1162/tacl_a_00494",
    pages = "857--872",
    abstract = "While improving neural dialogue agents' factual accuracy is the object of much research, another important aspect of communication, less studied in the setting of neural dialogue, is transparency about ignorance. In this work, we analyze to what extent state-of-the-art chit-chat models are linguistically calibrated in the sense that their verbalized expression of doubt (or confidence) matches the likelihood that the model`s responses are factually incorrect (or correct). We find that these models are poorly calibrated, yet we show that likelihood of correctness can accurately be predicted. By incorporating such metacognitive features into the training of a controllable generation model, we obtain a dialogue agent with greatly improved linguistic calibration."
}

@inproceedings{lin-etal-2024-mitigating,
  title       = {Mitigating the Alignment Tax of {RLHF}},
  author      = {Lin, Yong and Lin, Hangyu and Xiong, Wei and Diao, Shizhe and
                 Liu, Jianmeng and Zhang, Jipeng and Pan, Rui and Wang, Haoxiang and
                 Hu, Wenbin and Zhang, Hanning and Dong, Hanze and Pi, Renjie and
                 Zhao, Han and Jiang, Nan and Ji, Heng and Yao, Yuan and Zhang, Tong},
  booktitle   = {Proceedings of the 2024 Conference on Empirical Methods in Natural Language Processing},
  pages       = {580--606},
  address     = {Miami, Florida, USA},
  month       = {November},
  year        = {2024},
  publisher   = {Association for Computational Linguistics},
  url         = {https://aclanthology.org/2024.emnlp-main.35},
  doi         = {10.18653/v1/2024.emnlp-main.35}
}

@inproceedings{ziebart-2008-maximum,
author = {Ziebart, Brian D. and Maas, Andrew and Bagnell, J. Andrew and Dey, Anind K.},
title = {Maximum entropy inverse reinforcement learning},
year = {2008},
isbn = {9781577353683},
publisher = {AAAI Press},
abstract = {Recent research has shown the benefit of framing problems of imitation learning as solutions to Markov Decision Problems. This approach reduces learning to the problem of recovering a utility function that makes the behavior induced by a near-optimal policy closely mimic demonstrated behavior. In this work, we develop a probabilistic approach based on the principle of maximum entropy. Our approach provides a well-defined, globally normalized distribution over decision sequences, while providing the same performance guarantees as existing methods.We develop our technique in the context of modeling real-world navigation and driving behaviors where collected data is inherently noisy and imperfect. Our probabilistic approach enables modeling of route preferences as well as a powerful new approach to inferring destinations and routes based on partial trajectories.},
booktitle = {Proceedings of the 23rd National Conference on Artificial Intelligence - Volume 3},
pages = {1433–1438},
numpages = {6},
location = {Chicago, Illinois},
series = {AAAI'08}
}

@article{bai2022helpful,
  title   = {Training a Helpful and Harmless Assistant with Reinforcement Learning from Human Feedback},
  author  = {Bai, Yuntao and Jones, Andy and Ndousse, Kamal and Askell, Amanda and
             Chen, Anna and DasSarma, Nova and Drain, Dawn and Fort, Stanislav and
             Ganguli, Deep and Henighan, Tom and others},
  journal = {arXiv preprint arXiv:2204.05862},
  year    = {2022},
  url     = {https://arxiv.org/abs/2204.05862}
}

@book{zipf-1936-psycho,
  title={The psycho-biology of language: An introduction to dynamic philology},
  author={Zipf, George Kingsley},
  year={1936},
  publisher={Routledge}
}

@book{zipf-1949-human,
  title={Human behavior and the principle of least effort: An introduction to human ecology},
  author={Zipf, George Kingsley},
  year={1949},
  publisher={Ravenio books}
}

@article{karlin-1967-central,
 ISSN = {00959057, 19435274},
 URL = {http://www.jstor.org/stable/24902077},
 author = {Samuel Karlin},
 journal = {Journal of Mathematics and Mechanics},
 number = {4},
 pages = {373--401},
 publisher = {Indiana University Mathematics Department},
 title = {Central Limit Theorems for Certain Infinite Urn Schemes},
 urldate = {2025-05-14},
 volume = {17},
 year = {1967}
}

@article{good-1953-population,
 ISSN = {00063444, 14643510},
 URL = {http://www.jstor.org/stable/2333344},
 abstract = {A random sample is drawn from a population of animals of various species. (The theory may also be applied to studies of literary vocabulary, for example.) If a particular species is represented r times in the sample of size N, then r/N is not a good estimate of the population frequency, p, when r is small. Methods are given for estimating p, assuming virtually nothing about the underlying population. The estimates are expressed in terms of smoothed values of the numbers nr (r = 1, 2, 3...), where nr is the number of distinct species that are each represented r times in the sample. (nr may be described as `the frequency of the frequency r'.) Turing is acknowledged for the most interesting formula in this part of the work. An estimate of the proportion of the population represented by the species occurring in the sample is an immediate corollary. Estimates are made of measures of heterogeneity of the population, including Yule's characteristic' and Shannon's entropy'. Methods are then discussed that do depend on assumptions about the underlying population. It is here that most work has been done by other writers. It is pointed out that a hypothesis can give a good fit to the numbers nr but can give quite the wrong value for Yule's characteristic. An example of this is Fisher's fit to some data of Williams's on Macrolepidoptera.},
 author = {I. J. Good},
 journal = {Biometrika},
 number = {3/4},
 pages = {237--264},
 publisher = {[Oxford University Press, Biometrika Trust]},
 title = {The Population Frequencies of Species and the Estimation of Population Parameters},
 urldate = {2025-05-14},
 volume = {40},
 year = {1953}
}

@inproceedings{maini-etal-2024-rephrasing,
    title = "Rephrasing the Web: A Recipe for Compute and Data-Efficient Language Modeling",
    author = "Maini, Pratyush  and
      Seto, Skyler  and
      Bai, Richard  and
      Grangier, David  and
      Zhang, Yizhe  and
      Jaitly, Navdeep",
    editor = "Ku, Lun-Wei  and
      Martins, Andre  and
      Srikumar, Vivek",
    booktitle = "Proceedings of the 62nd Annual Meeting of the Association for Computational Linguistics (Volume 1: Long Papers)",
    month = aug,
    year = "2024",
    address = "Bangkok, Thailand",
    publisher = "Association for Computational Linguistics",
    url = "https://aclanthology.org/2024.acl-long.757/",
    doi = "10.18653/v1/2024.acl-long.757",
    pages = "14044--14072",
    abstract = "Large language models are trained on massive scrapes of the web, which are often unstructured, noisy, and poorly phrased. Current scaling laws show that learning from such data requires an abundance of both compute and data, which grows with the size of the model being trained. This is infeasible both because of the large compute costs and duration associated with pre-training, and the impending scarcity of high-quality data on the web. In this work, we propose Web Rephrase Augmented Pre-training (WRAP) that uses an off-the-shelf instruction-tuned model prompted to paraphrase documents on the web in specific styles such as {\textquotedblleft}like Wikipedia{\textquotedblright} or in {\textquotedblleft}question-answer format{\textquotedblright} to jointly pre-train LLMs on real and synthetic rephrases. First, we show that using WRAP on the C4 dataset, which is naturally noisy, speeds up pre-training by {\textasciitilde}3x. At the same pre-training compute budget, it improves perplexity by more than 50{\%} on average across different subsets of the Pile, and improves zero-shot question answer accuracy across 13 tasks by more than 2{\%}. Second, we investigate the impact of the re-phrasing style on the performance of the model, offering insights into how the composition of the training data can impact the performance of LLMs in OOD settings. Our gains are attributed to the fact that re-phrased synthetic data has higher utility than just real data because it (i) incorporates style diversity that closely reflects downstream evaluation style, and (ii) has higher {\textquoteleft}quality' than web-scraped data."
}

@InProceedings{ghosh2024closer,
  title = 	 {A Closer Look at the Limitations of Instruction Tuning},
  author =       {Ghosh, Sreyan and Evuru, Chandra Kiran Reddy and Kumar, Sonal and S, Ramaneswaran and Aneja, Deepali and Jin, Zeyu and Duraiswami, Ramani and Manocha, Dinesh},
  booktitle = 	 {Proceedings of the 41st International Conference on Machine Learning},
  pages = 	 {15559--15589},
  year = 	 {2024},
  editor = 	 {Salakhutdinov, Ruslan and Kolter, Zico and Heller, Katherine and Weller, Adrian and Oliver, Nuria and Scarlett, Jonathan and Berkenkamp, Felix},
  volume = 	 {235},
  series = 	 {Proceedings of Machine Learning Research},
  month = 	 {21--27 Jul},
  publisher =    {PMLR},
  pdf = 	 {https://raw.githubusercontent.com/mlresearch/v235/main/assets/ghosh24a/ghosh24a.pdf},
  url = 	 {https://proceedings.mlr.press/v235/ghosh24a.html},
  abstract = 	 {Instruction Tuning (IT), the process of training large language models (LLMs) using instruction-response pairs, has emerged as the predominant method for transforming base pre-trained LLMs into open-domain conversational agents. While IT has achieved notable success and widespread adoption, its limitations and shortcomings remain underexplored. In this paper, through rigorous experiments and an in-depth analysis of the changes LLMs undergo through IT, we reveal various limitations of IT. In particular, we show that (1) IT fails to enhance knowledge or skills in LLMs. LoRA fine-tuning is limited to learning response initiation and style tokens, and full-parameter fine-tuning leads to knowledge degradation. (2) Copying response patterns from IT datasets derived from knowledgeable sources leads to a decline in response quality. (3) Full-parameter fine-tuning increases hallucination by inaccurately borrowing tokens from conceptually similar instances in the IT dataset for generating responses. (4) Popular methods to improve IT do not lead to performance improvements over a simple LoRA fine-tuned model. Our findings reveal that responses generated solely from pre-trained knowledge consistently outperform responses by models that learn any form of new knowledge from IT on open-source datasets. We hope the insights and challenges revealed in this paper inspire future work in related directions.}
}

@misc{kaplan2020scalinglawsneurallanguage,
      title={Scaling Laws for Neural Language Models}, 
      author={Jared Kaplan and Sam McCandlish and Tom Henighan and Tom B. Brown and Benjamin Chess and Rewon Child and Scott Gray and Alec Radford and Jeffrey Wu and Dario Amodei},
      year={2020},
      eprint={2001.08361},
      archivePrefix={arXiv},
      primaryClass={cs.LG},
      url={https://arxiv.org/abs/2001.08361}, 
}

@misc{hoffmann2022trainingcomputeoptimallargelanguage,
      title={Training Compute-Optimal Large Language Models}, 
      author={Jordan Hoffmann and Sebastian Borgeaud and Arthur Mensch and Elena Buchatskaya and Trevor Cai and Eliza Rutherford and Diego de Las Casas and Lisa Anne Hendricks and Johannes Welbl and Aidan Clark and Tom Hennigan and Eric Noland and Katie Millican and George van den Driessche and Bogdan Damoc and Aurelia Guy and Simon Osindero and Karen Simonyan and Erich Elsen and Jack W. Rae and Oriol Vinyals and Laurent Sifre},
      year={2022},
      eprint={2203.15556},
      archivePrefix={arXiv},
      primaryClass={cs.CL},
      url={https://arxiv.org/abs/2203.15556}, 
}

@misc{biderman2023pythiasuiteanalyzinglarge,
      title={Pythia: A Suite for Analyzing Large Language Models Across Training and Scaling}, 
      author={Stella Biderman and Hailey Schoelkopf and Quentin Anthony and Herbie Bradley and Kyle O'Brien and Eric Hallahan and Mohammad Aflah Khan and Shivanshu Purohit and USVSN Sai Prashanth and Edward Raff and Aviya Skowron and Lintang Sutawika and Oskar van der Wal},
      year={2023},
      eprint={2304.01373},
      archivePrefix={arXiv},
      primaryClass={cs.CL},
      url={https://arxiv.org/abs/2304.01373}, 
}

@misc{touvron2023llama2openfoundation,
      title={Llama 2: Open Foundation and Fine-Tuned Chat Models}, 
      author={Hugo Touvron and Louis Martin and Kevin Stone and Peter Albert and Amjad Almahairi and Yasmine Babaei and Nikolay Bashlykov and Soumya Batra and Prajjwal Bhargava and Shruti Bhosale and Dan Bikel and Lukas Blecher and Cristian Canton Ferrer and Moya Chen and Guillem Cucurull and David Esiobu and Jude Fernandes and Jeremy Fu and Wenyin Fu and Brian Fuller and Cynthia Gao and Vedanuj Goswami and Naman Goyal and Anthony Hartshorn and Saghar Hosseini and Rui Hou and Hakan Inan and Marcin Kardas and Viktor Kerkez and Madian Khabsa and Isabel Kloumann and Artem Korenev and Punit Singh Koura and Marie-Anne Lachaux and Thibaut Lavril and Jenya Lee and Diana Liskovich and Yinghai Lu and Yuning Mao and Xavier Martinet and Todor Mihaylov and Pushkar Mishra and Igor Molybog and Yixin Nie and Andrew Poulton and Jeremy Reizenstein and Rashi Rungta and Kalyan Saladi and Alan Schelten and Ruan Silva and Eric Michael Smith and Ranjan Subramanian and Xiaoqing Ellen Tan and Binh Tang and Ross Taylor and Adina Williams and Jian Xiang Kuan and Puxin Xu and Zheng Yan and Iliyan Zarov and Yuchen Zhang and Angela Fan and Melanie Kambadur and Sharan Narang and Aurelien Rodriguez and Robert Stojnic and Sergey Edunov and Thomas Scialom},
      year={2023},
      eprint={2307.09288},
      archivePrefix={arXiv},
      primaryClass={cs.CL},
      url={https://arxiv.org/abs/2307.09288}, 
}

@misc{grattafiori2024llama3herdmodels,
      title={The Llama 3 Herd of Models}, 
      author={Aaron Grattafiori and Abhimanyu Dubey and Abhinav Jauhri and Abhinav Pandey and Abhishek Kadian and Ahmad Al-Dahle and Aiesha Letman and Akhil Mathur and Alan Schelten and Alex Vaughan and Amy Yang and Angela Fan and Anirudh Goyal and Anthony Hartshorn and Aobo Yang and Archi Mitra and Archie Sravankumar and Artem Korenev and Arthur Hinsvark and Arun Rao and Aston Zhang and Aurelien Rodriguez and Austen Gregerson and Ava Spataru and Baptiste Roziere and Bethany Biron and Binh Tang and Bobbie Chern and Charlotte Caucheteux and Chaya Nayak and Chloe Bi and Chris Marra and Chris McConnell and Christian Keller and Christophe Touret and Chunyang Wu and Corinne Wong and Cristian Canton Ferrer and Cyrus Nikolaidis and Damien Allonsius and Daniel Song and Danielle Pintz and Danny Livshits and Danny Wyatt and David Esiobu and Dhruv Choudhary and Dhruv Mahajan and Diego Garcia-Olano and Diego Perino and Dieuwke Hupkes and Egor Lakomkin and Ehab AlBadawy and Elina Lobanova and Emily Dinan and Eric Michael Smith and Filip Radenovic and Francisco Guzmán and Frank Zhang and Gabriel Synnaeve and Gabrielle Lee and Georgia Lewis Anderson and Govind Thattai and Graeme Nail and Gregoire Mialon and Guan Pang and Guillem Cucurell and Hailey Nguyen and Hannah Korevaar and Hu Xu and Hugo Touvron and Iliyan Zarov and Imanol Arrieta Ibarra and Isabel Kloumann and Ishan Misra and Ivan Evtimov and Jack Zhang and Jade Copet and Jaewon Lee and Jan Geffert and Jana Vranes and Jason Park and Jay Mahadeokar and Jeet Shah and Jelmer van der Linde and Jennifer Billock and Jenny Hong and Jenya Lee and Jeremy Fu and Jianfeng Chi and Jianyu Huang and Jiawen Liu and Jie Wang and Jiecao Yu and Joanna Bitton and Joe Spisak and Jongsoo Park and Joseph Rocca and Joshua Johnstun and Joshua Saxe and Junteng Jia and Kalyan Vasuden Alwala and Karthik Prasad and Kartikeya Upasani and Kate Plawiak and Ke Li and Kenneth Heafield and Kevin Stone and Khalid El-Arini and Krithika Iyer and Kshitiz Malik and Kuenley Chiu and Kunal Bhalla and Kushal Lakhotia and Lauren Rantala-Yeary and Laurens van der Maaten and Lawrence Chen and Liang Tan and Liz Jenkins and Louis Martin and Lovish Madaan and Lubo Malo and Lukas Blecher and Lukas Landzaat and Luke de Oliveira and Madeline Muzzi and Mahesh Pasupuleti and Mannat Singh and Manohar Paluri and Marcin Kardas and Maria Tsimpoukelli and Mathew Oldham and Mathieu Rita and Maya Pavlova and Melanie Kambadur and Mike Lewis and Min Si and Mitesh Kumar Singh and Mona Hassan and Naman Goyal and Narjes Torabi and Nikolay Bashlykov and Nikolay Bogoychev and Niladri Chatterji and Ning Zhang and Olivier Duchenne and Onur Çelebi and Patrick Alrassy and Pengchuan Zhang and Pengwei Li and Petar Vasic and Peter Weng and Prajjwal Bhargava and Pratik Dubal and Praveen Krishnan and Punit Singh Koura and Puxin Xu and Qing He and Qingxiao Dong and Ragavan Srinivasan and Raj Ganapathy and Ramon Calderer and Ricardo Silveira Cabral and Robert Stojnic and Roberta Raileanu and Rohan Maheswari and Rohit Girdhar and Rohit Patel and Romain Sauvestre and Ronnie Polidoro and Roshan Sumbaly and Ross Taylor and Ruan Silva and Rui Hou and Rui Wang and Saghar Hosseini and Sahana Chennabasappa and Sanjay Singh and Sean Bell and Seohyun Sonia Kim and Sergey Edunov and Shaoliang Nie and Sharan Narang and Sharath Raparthy and Sheng Shen and Shengye Wan and Shruti Bhosale and Shun Zhang and Simon Vandenhende and Soumya Batra and Spencer Whitman and Sten Sootla and Stephane Collot and Suchin Gururangan and Sydney Borodinsky and Tamar Herman and Tara Fowler and Tarek Sheasha and Thomas Georgiou and Thomas Scialom and Tobias Speckbacher and Todor Mihaylov and Tong Xiao and Ujjwal Karn and Vedanuj Goswami and Vibhor Gupta and Vignesh Ramanathan and Viktor Kerkez and Vincent Gonguet and Virginie Do and Vish Vogeti and Vítor Albiero and Vladan Petrovic and Weiwei Chu and Wenhan Xiong and Wenyin Fu and Whitney Meers and Xavier Martinet and Xiaodong Wang and Xiaofang Wang and Xiaoqing Ellen Tan and Xide Xia and Xinfeng Xie and Xuchao Jia and Xuewei Wang and Yaelle Goldschlag and Yashesh Gaur and Yasmine Babaei and Yi Wen and Yiwen Song and Yuchen Zhang and Yue Li and Yuning Mao and Zacharie Delpierre Coudert and Zheng Yan and Zhengxing Chen and Zoe Papakipos and Aaditya Singh and Aayushi Srivastava and Abha Jain and Adam Kelsey and Adam Shajnfeld and Adithya Gangidi and Adolfo Victoria and Ahuva Goldstand and Ajay Menon and Ajay Sharma and Alex Boesenberg and Alexei Baevski and Allie Feinstein and Amanda Kallet and Amit Sangani and Amos Teo and Anam Yunus and Andrei Lupu and Andres Alvarado and Andrew Caples and Andrew Gu and Andrew Ho and Andrew Poulton and Andrew Ryan and Ankit Ramchandani and Annie Dong and Annie Franco and Anuj Goyal and Aparajita Saraf and Arkabandhu Chowdhury and Ashley Gabriel and Ashwin Bharambe and Assaf Eisenman and Azadeh Yazdan and Beau James and Ben Maurer and Benjamin Leonhardi and Bernie Huang and Beth Loyd and Beto De Paola and Bhargavi Paranjape and Bing Liu and Bo Wu and Boyu Ni and Braden Hancock and Bram Wasti and Brandon Spence and Brani Stojkovic and Brian Gamido and Britt Montalvo and Carl Parker and Carly Burton and Catalina Mejia and Ce Liu and Changhan Wang and Changkyu Kim and Chao Zhou and Chester Hu and Ching-Hsiang Chu and Chris Cai and Chris Tindal and Christoph Feichtenhofer and Cynthia Gao and Damon Civin and Dana Beaty and Daniel Kreymer and Daniel Li and David Adkins and David Xu and Davide Testuggine and Delia David and Devi Parikh and Diana Liskovich and Didem Foss and Dingkang Wang and Duc Le and Dustin Holland and Edward Dowling and Eissa Jamil and Elaine Montgomery and Eleonora Presani and Emily Hahn and Emily Wood and Eric-Tuan Le and Erik Brinkman and Esteban Arcaute and Evan Dunbar and Evan Smothers and Fei Sun and Felix Kreuk and Feng Tian and Filippos Kokkinos and Firat Ozgenel and Francesco Caggioni and Frank Kanayet and Frank Seide and Gabriela Medina Florez and Gabriella Schwarz and Gada Badeer and Georgia Swee and Gil Halpern and Grant Herman and Grigory Sizov and Guangyi and Zhang and Guna Lakshminarayanan and Hakan Inan and Hamid Shojanazeri and Han Zou and Hannah Wang and Hanwen Zha and Haroun Habeeb and Harrison Rudolph and Helen Suk and Henry Aspegren and Hunter Goldman and Hongyuan Zhan and Ibrahim Damlaj and Igor Molybog and Igor Tufanov and Ilias Leontiadis and Irina-Elena Veliche and Itai Gat and Jake Weissman and James Geboski and James Kohli and Janice Lam and Japhet Asher and Jean-Baptiste Gaya and Jeff Marcus and Jeff Tang and Jennifer Chan and Jenny Zhen and Jeremy Reizenstein and Jeremy Teboul and Jessica Zhong and Jian Jin and Jingyi Yang and Joe Cummings and Jon Carvill and Jon Shepard and Jonathan McPhie and Jonathan Torres and Josh Ginsburg and Junjie Wang and Kai Wu and Kam Hou U and Karan Saxena and Kartikay Khandelwal and Katayoun Zand and Kathy Matosich and Kaushik Veeraraghavan and Kelly Michelena and Keqian Li and Kiran Jagadeesh and Kun Huang and Kunal Chawla and Kyle Huang and Lailin Chen and Lakshya Garg and Lavender A and Leandro Silva and Lee Bell and Lei Zhang and Liangpeng Guo and Licheng Yu and Liron Moshkovich and Luca Wehrstedt and Madian Khabsa and Manav Avalani and Manish Bhatt and Martynas Mankus and Matan Hasson and Matthew Lennie and Matthias Reso and Maxim Groshev and Maxim Naumov and Maya Lathi and Meghan Keneally and Miao Liu and Michael L. Seltzer and Michal Valko and Michelle Restrepo and Mihir Patel and Mik Vyatskov and Mikayel Samvelyan and Mike Clark and Mike Macey and Mike Wang and Miquel Jubert Hermoso and Mo Metanat and Mohammad Rastegari and Munish Bansal and Nandhini Santhanam and Natascha Parks and Natasha White and Navyata Bawa and Nayan Singhal and Nick Egebo and Nicolas Usunier and Nikhil Mehta and Nikolay Pavlovich Laptev and Ning Dong and Norman Cheng and Oleg Chernoguz and Olivia Hart and Omkar Salpekar and Ozlem Kalinli and Parkin Kent and Parth Parekh and Paul Saab and Pavan Balaji and Pedro Rittner and Philip Bontrager and Pierre Roux and Piotr Dollar and Polina Zvyagina and Prashant Ratanchandani and Pritish Yuvraj and Qian Liang and Rachad Alao and Rachel Rodriguez and Rafi Ayub and Raghotham Murthy and Raghu Nayani and Rahul Mitra and Rangaprabhu Parthasarathy and Raymond Li and Rebekkah Hogan and Robin Battey and Rocky Wang and Russ Howes and Ruty Rinott and Sachin Mehta and Sachin Siby and Sai Jayesh Bondu and Samyak Datta and Sara Chugh and Sara Hunt and Sargun Dhillon and Sasha Sidorov and Satadru Pan and Saurabh Mahajan and Saurabh Verma and Seiji Yamamoto and Sharadh Ramaswamy and Shaun Lindsay and Shaun Lindsay and Sheng Feng and Shenghao Lin and Shengxin Cindy Zha and Shishir Patil and Shiva Shankar and Shuqiang Zhang and Shuqiang Zhang and Sinong Wang and Sneha Agarwal and Soji Sajuyigbe and Soumith Chintala and Stephanie Max and Stephen Chen and Steve Kehoe and Steve Satterfield and Sudarshan Govindaprasad and Sumit Gupta and Summer Deng and Sungmin Cho and Sunny Virk and Suraj Subramanian and Sy Choudhury and Sydney Goldman and Tal Remez and Tamar Glaser and Tamara Best and Thilo Koehler and Thomas Robinson and Tianhe Li and Tianjun Zhang and Tim Matthews and Timothy Chou and Tzook Shaked and Varun Vontimitta and Victoria Ajayi and Victoria Montanez and Vijai Mohan and Vinay Satish Kumar and Vishal Mangla and Vlad Ionescu and Vlad Poenaru and Vlad Tiberiu Mihailescu and Vladimir Ivanov and Wei Li and Wenchen Wang and Wenwen Jiang and Wes Bouaziz and Will Constable and Xiaocheng Tang and Xiaojian Wu and Xiaolan Wang and Xilun Wu and Xinbo Gao and Yaniv Kleinman and Yanjun Chen and Ye Hu and Ye Jia and Ye Qi and Yenda Li and Yilin Zhang and Ying Zhang and Yossi Adi and Youngjin Nam and Yu and Wang and Yu Zhao and Yuchen Hao and Yundi Qian and Yunlu Li and Yuzi He and Zach Rait and Zachary DeVito and Zef Rosnbrick and Zhaoduo Wen and Zhenyu Yang and Zhiwei Zhao and Zhiyu Ma},
      year={2024},
      eprint={2407.21783},
      archivePrefix={arXiv},
      primaryClass={cs.AI},
      url={https://arxiv.org/abs/2407.21783}, 
}

@misc{qwen2025qwen25technicalreport,
      title={Qwen2.5 Technical Report}, 
      author={Qwen and : and An Yang and Baosong Yang and Beichen Zhang and Binyuan Hui and Bo Zheng and Bowen Yu and Chengyuan Li and Dayiheng Liu and Fei Huang and Haoran Wei and Huan Lin and Jian Yang and Jianhong Tu and Jianwei Zhang and Jianxin Yang and Jiaxi Yang and Jingren Zhou and Junyang Lin and Kai Dang and Keming Lu and Keqin Bao and Kexin Yang and Le Yu and Mei Li and Mingfeng Xue and Pei Zhang and Qin Zhu and Rui Men and Runji Lin and Tianhao Li and Tianyi Tang and Tingyu Xia and Xingzhang Ren and Xuancheng Ren and Yang Fan and Yang Su and Yichang Zhang and Yu Wan and Yuqiong Liu and Zeyu Cui and Zhenru Zhang and Zihan Qiu},
      year={2025},
      eprint={2412.15115},
      archivePrefix={arXiv},
      primaryClass={cs.CL},
      url={https://arxiv.org/abs/2412.15115}, 
}

@misc{gunasekar2023textbooksneed,
      title={Textbooks Are All You Need}, 
      author={Suriya Gunasekar and Yi Zhang and Jyoti Aneja and Caio César Teodoro Mendes and Allie Del Giorno and Sivakanth Gopi and Mojan Javaheripi and Piero Kauffmann and Gustavo de Rosa and Olli Saarikivi and Adil Salim and Shital Shah and Harkirat Singh Behl and Xin Wang and Sébastien Bubeck and Ronen Eldan and Adam Tauman Kalai and Yin Tat Lee and Yuanzhi Li},
      year={2023},
      eprint={2306.11644},
      archivePrefix={arXiv},
      primaryClass={cs.CL},
      url={https://arxiv.org/abs/2306.11644}, 
}

@inproceedings{ouyang2022training,
 author = {Ouyang, Long and Wu, Jeffrey and Jiang, Xu and Almeida, Diogo and Wainwright, Carroll and Mishkin, Pamela and Zhang, Chong and Agarwal, Sandhini and Slama, Katarina and Ray, Alex and Schulman, John and Hilton, Jacob and Kelton, Fraser and Miller, Luke and Simens, Maddie and Askell, Amanda and Welinder, Peter and Christiano, Paul F and Leike, Jan and Lowe, Ryan},
 booktitle = {Advances in Neural Information Processing Systems},
 editor = {S. Koyejo and S. Mohamed and A. Agarwal and D. Belgrave and K. Cho and A. Oh},
 pages = {27730--27744},
 publisher = {Curran Associates, Inc.},
 title = {Training language models to follow instructions with human feedback},
 url = {https://proceedings.neurips.cc/paper_files/paper/2022/file/b1efde53be364a73914f58805a001731-Paper-Conference.pdf},
 volume = {35},
 year = {2022}
}

@article{daume2009search,
author = {Daum\'{e}, Hal and Langford, John and Marcu, Daniel},
title = {Search-based structured prediction},
year = {2009},
issue_date = {June      2009},
publisher = {Kluwer Academic Publishers},
address = {USA},
volume = {75},
number = {3},
issn = {0885-6125},
url = {https://doi.org/10.1007/s10994-009-5106-x},
doi = {10.1007/s10994-009-5106-x},
abstract = {We present Searn, an algorithm for integrating search and learning to solve complex structured prediction problems such as those that occur in natural language, speech, computational biology, and vision. Searn is a meta-algorithm that transforms these complex problems into simple classification problems to which any binary classifier may be applied. Unlike current algorithms for structured learning that require decomposition of both the loss function and the feature functions over the predicted structure, Searn is able to learn prediction functions for any loss function and any class of features. Moreover, Searn comes with a strong, natural theoretical guarantee: good performance on the derived classification problems implies good performance on the structured prediction problem.},
journal = {Mach. Learn.},
month = jun,
pages = {297–325},
numpages = {29},
keywords = {Structured prediction, Search, Reductions}
}

@inproceedings{kwon2023efficient,
  title={Efficient Memory Management for Large Language Model Serving with PagedAttention},
  author={Woosuk Kwon and Zhuohan Li and Siyuan Zhuang and Ying Sheng and Lianmin Zheng and Cody Hao Yu and Joseph E. Gonzalez and Hao Zhang and Ion Stoica},
  booktitle={Proceedings of the ACM SIGOPS 29th Symposium on Operating Systems Principles},
  year={2023}
}

@misc{brown2024largelanguagemonkeysscaling,
      title={Large Language Monkeys: Scaling Inference Compute with Repeated Sampling}, 
      author={Bradley Brown and Jordan Juravsky and Ryan Ehrlich and Ronald Clark and Quoc V. Le and Christopher Ré and Azalia Mirhoseini},
      year={2024},
      eprint={2407.21787},
      archivePrefix={arXiv},
      primaryClass={cs.LG},
      url={https://arxiv.org/abs/2407.21787}, 
}

@article{li-2022-competition,
author = {Yujia Li  and David Choi  and Junyoung Chung  and Nate Kushman  and Julian Schrittwieser  and Rémi Leblond  and Tom Eccles  and James Keeling  and Felix Gimeno  and Agustin Dal Lago  and Thomas Hubert  and Peter Choy  and Cyprien de Masson d’Autume  and Igor Babuschkin  and Xinyun Chen  and Po-Sen Huang  and Johannes Welbl  and Sven Gowal  and Alexey Cherepanov  and James Molloy  and Daniel J. Mankowitz  and Esme Sutherland Robson  and Pushmeet Kohli  and Nando de Freitas  and Koray Kavukcuoglu  and Oriol Vinyals },
title = {Competition-level code generation with AlphaCode},
journal = {Science},
volume = {378},
number = {6624},
pages = {1092-1097},
year = {2022},
doi = {10.1126/science.abq1158},
URL = {https://www.science.org/doi/abs/10.1126/science.abq1158},
eprint = {https://www.science.org/doi/pdf/10.1126/science.abq1158},
abstract = {Programming is a powerful and ubiquitous problem-solving tool. Systems that can assist programmers or even generate programs themselves could make programming more productive and accessible. Recent transformer-based neural network models show impressive code generation abilities yet still perform poorly on more complex tasks requiring problem-solving skills, such as competitive programming problems. Here, we introduce AlphaCode, a system for code generation that achieved an average ranking in the top 54.3\% in simulated evaluations on recent programming competitions on the Codeforces platform. AlphaCode solves problems by generating millions of diverse programs using specially trained transformer-based networks and then filtering and clustering those programs to a maximum of just 10 submissions. This result marks the first time an artificial intelligence system has performed competitively in programming competitions. Computer programming competitions are popular tests among programmers that require critical thinking informed by experience and creating solutions to unforeseen problems, both of which are key aspects of human intelligence but challenging to mimic by machine learning models. Using self-supervised learning and an encoder-decoder transformer architecture, Li et al. developed AlphaCode, a deep-learning model that can achieve approximately human-level performance on the Codeforces platform, which regularly hosts these competitions and attracts numerous participants worldwide (see the Perspective by Kolter). The development of such coding platforms could have a huge impact on programmers’ productivity. It may even change the culture of programming by shifting human work to formulating problems, with machine learning being the main one responsible for generating and executing codes. —YS Modern machine learning systems can achieve average human-level performance in popular competitive programming contests.}}

@misc{wang2024mixtureofagentsenhanceslargelanguage,
      title={Mixture-of-Agents Enhances Large Language Model Capabilities}, 
      author={Junlin Wang and Jue Wang and Ben Athiwaratkun and Ce Zhang and James Zou},
      year={2024},
      eprint={2406.04692},
      archivePrefix={arXiv},
      primaryClass={cs.CL},
      url={https://arxiv.org/abs/2406.04692}, 
}

@inproceedings{wang-etal-2023-self-instruct,
    title = "Self-Instruct: Aligning Language Models with Self-Generated Instructions",
    author = "Wang, Yizhong  and
      Kordi, Yeganeh  and
      Mishra, Swaroop  and
      Liu, Alisa  and
      Smith, Noah A.  and
      Khashabi, Daniel  and
      Hajishirzi, Hannaneh",
    editor = "Rogers, Anna  and
      Boyd-Graber, Jordan  and
      Okazaki, Naoaki",
    booktitle = "Proceedings of the 61st Annual Meeting of the Association for Computational Linguistics (Volume 1: Long Papers)",
    month = jul,
    year = "2023",
    address = "Toronto, Canada",
    publisher = "Association for Computational Linguistics",
    url = "https://aclanthology.org/2023.acl-long.754",
    doi = "10.18653/v1/2023.acl-long.754",
    pages = "13484--13508",
    abstract = "Large {``}instruction-tuned{''} language models (i.e., finetuned to respond to instructions) have demonstrated a remarkable ability to generalize zero-shot to new tasks. Nevertheless, they depend heavily on human-written instruction data that is often limited in quantity, diversity, and creativity, therefore hindering the generality of the tuned model. We introduce Self-Instruct, a framework for improving the instruction-following capabilities of pretrained language models by bootstrapping off their own generations. Our pipeline generates instructions, input, and output samples from a language model, then filters invalid or similar ones before using them to finetune the original model. Applying our method to the vanilla GPT3, we demonstrate a 33{\%} absolute improvement over the original model on Super-NaturalInstructions, on par with the performance of InstructGPT-001, which was trained with private user data and human annotations. For further evaluation, we curate a set of expert-written instructions for novel tasks, and show through human evaluation that tuning GPT3 with Self-Instruct outperforms using existing public instruction datasets by a large margin, leaving only a 5{\%} absolute gap behind InstructGPT-001. Self-Instruct provides an almost annotation-free method for aligning pre-trained language models with instructions, and we release our large synthetic dataset to facilitate future studies on instruction tuning.",
}

@inproceedings{pytorch-2019,
 author = {Paszke, Adam and Gross, Sam and Massa, Francisco and Lerer, Adam and Bradbury, James and Chanan, Gregory and Killeen, Trevor and Lin, Zeming and Gimelshein, Natalia and Antiga, Luca and Desmaison, Alban and Kopf, Andreas and Yang, Edward and DeVito, Zachary and Raison, Martin and Tejani, Alykhan and Chilamkurthy, Sasank and Steiner, Benoit and Fang, Lu and Bai, Junjie and Chintala, Soumith},
 booktitle = {Advances in Neural Information Processing Systems},
 editor = {H. Wallach and H. Larochelle and A. Beygelzimer and F. d\textquotesingle Alch\'{e}-Buc and E. Fox and R. Garnett},
 pages = {},
 publisher = {Curran Associates, Inc.},
 title = {PyTorch: An Imperative Style, High-Performance Deep Learning Library},
 url = {https://proceedings.neurips.cc/paper_files/paper/2019/file/bdbca288fee7f92f2bfa9f7012727740-Paper.pdf},
 volume = {32},
 year = {2019}
}

@inproceedings{wolf-etal-2020-transformers,
    title = "Transformers: State-of-the-Art Natural Language Processing",
    author = "Wolf, Thomas  and
      Debut, Lysandre  and
      Sanh, Victor  and
      Chaumond, Julien  and
      Delangue, Clement  and
      Moi, Anthony  and
      Cistac, Pierric  and
      Rault, Tim  and
      Louf, Remi  and
      Funtowicz, Morgan  and
      Davison, Joe  and
      Shleifer, Sam  and
      von Platen, Patrick  and
      Ma, Clara  and
      Jernite, Yacine  and
      Plu, Julien  and
      Xu, Canwen  and
      Le Scao, Teven  and
      Gugger, Sylvain  and
      Drame, Mariama  and
      Lhoest, Quentin  and
      Rush, Alexander",
    editor = "Liu, Qun  and
      Schlangen, David",
    booktitle = "Proceedings of the 2020 Conference on Empirical Methods in Natural Language Processing: System Demonstrations",
    month = oct,
    year = "2020",
    address = "Online",
    publisher = "Association for Computational Linguistics",
    url = "https://aclanthology.org/2020.emnlp-demos.6",
    doi = "10.18653/v1/2020.emnlp-demos.6",
    pages = "38--45",
    abstract = "Recent progress in natural language processing has been driven by advances in both model architecture and model pretraining. Transformer architectures have facilitated building higher-capacity models and pretraining has made it possible to effectively utilize this capacity for a wide variety of tasks. Transformers is an open-source library with the goal of opening up these advances to the wider machine learning community. The library consists of carefully engineered state-of-the art Transformer architectures under a unified API. Backing this library is a curated collection of pretrained models made by and available for the community. Transformers is designed to be extensible by researchers, simple for practitioners, and fast and robust in industrial deployments. The library is available at \url{https://github.com/huggingface/transformers}.",
}

@inproceedings{merity2016pointersentinelmixturemodels,
  title={Pointer Sentinel Mixture Models}, 
      author={Stephen Merity and Caiming Xiong and James Bradbury and Richard Socher},
  booktitle={Proceedings of the International Conference on Learning Representations: ICLR 2017},
  year = {2017},
}

@Article{Hunter:2007,
  Author    = {Hunter, J. D.},
  Title     = {Matplotlib: A 2D graphics environment},
  Journal   = {Computing in Science \& Engineering},
  Volume    = {9},
  Number    = {3},
  Pages     = {90--95},
  abstract  = {Matplotlib is a 2D graphics package used for Python for
  application development, interactive scripting, and publication-quality
  image generation across user interfaces and operating systems.},
  publisher = {IEEE COMPUTER SOC},
  doi       = {10.1109/MCSE.2007.55},
  year      = 2007
}

@inproceedings{genzel-charniak-2002-entropy,
    title = "Entropy Rate Constancy in Text",
    author = "Genzel, Dmitriy  and
      Charniak, Eugene",
    editor = "Isabelle, Pierre  and
      Charniak, Eugene  and
      Lin, Dekang",
    booktitle = "Proceedings of the 40th Annual Meeting of the Association for Computational Linguistics",
    month = jul,
    year = "2002",
    address = "Philadelphia, Pennsylvania, USA",
    publisher = "Association for Computational Linguistics",
    url = "https://aclanthology.org/P02-1026",
    doi = "10.3115/1073083.1073117",
    pages = "199--206",
}

@inproceedings{verma-etal-2023-revisiting,
    title = "Revisiting Entropy Rate Constancy in Text",
    author = "Verma, Vivek  and
      Tomlin, Nicholas  and
      Klein, Dan",
    editor = "Bouamor, Houda  and
      Pino, Juan  and
      Bali, Kalika",
    booktitle = "Findings of the Association for Computational Linguistics: EMNLP 2023",
    month = dec,
    year = "2023",
    address = "Singapore",
    publisher = "Association for Computational Linguistics",
    url = "https://aclanthology.org/2023.findings-emnlp.1039",
    doi = "10.18653/v1/2023.findings-emnlp.1039",
    pages = "15537--15549",
    abstract = "The uniform information density (UID) hypothesis states that humans tend to distribute information roughly evenly across an utterance or discourse. Early evidence in support of the UID hypothesis came from Genzel and Charniak (2002), which proposed an entropy rate constancy principle based on the probability of English text under $n$-gram language models. We re-evaluate the claims of Genzel and Charniak (2002) with neural language models, failing to find clear evidence in support of entropy rate constancy. We conduct a range of experiments across datasets, model sizes, and languages and discuss implications for the uniform information density hypothesis and linguistic theories of efficient communication more broadly.",
}

@misc{welleck2024decodingmetagenerationinferencetimealgorithms,
      title={From Decoding to Meta-Generation: Inference-time Algorithms for Large Language Models}, 
      author={Sean Welleck and Amanda Bertsch and Matthew Finlayson and Hailey Schoelkopf and Alex Xie and Graham Neubig and Ilia Kulikov and Zaid Harchaoui},
      year={2024},
      eprint={2406.16838},
      archivePrefix={arXiv},
      primaryClass={cs.CL},
      url={https://arxiv.org/abs/2406.16838}, 
}

@InProceedings{ross2011reduction,
  title = 	 {A Reduction of Imitation Learning and Structured Prediction to No-Regret Online Learning},
  author = 	 {Ross, Stephane and Gordon, Geoffrey and Bagnell, Drew},
  booktitle = 	 {Proceedings of the Fourteenth International Conference on Artificial Intelligence and Statistics},
  pages = 	 {627--635},
  year = 	 {2011},
  editor = 	 {Gordon, Geoffrey and Dunson, David and Dudík, Miroslav},
  volume = 	 {15},
  series = 	 {Proceedings of Machine Learning Research},
  address = 	 {Fort Lauderdale, FL, USA},
  month = 	 {11--13 Apr},
  publisher =    {PMLR},
  pdf = 	 {http://proceedings.mlr.press/v15/ross11a/ross11a.pdf},
  url = 	 {https://proceedings.mlr.press/v15/ross11a.html},
  abstract = 	 {Sequential prediction problems such as imitation learning, where future observations depend on previous predictions (actions), violate the common i.i.d. assumptions made in statistical learning. This leads to poor performance in theory and often in practice. Some recent approaches provide stronger guarantees in this setting, but remain somewhat unsatisfactory as they train either non-stationary or stochastic policies and require a large number of iterations. In this paper, we propose a new iterative algorithm, which trains a stationary deterministic policy, that can be seen as a no regret algorithm in an online learning setting. We show that any such no regret algorithm, combined with additional reduction assumptions, must find a policy with good performance under the distribution of observations it induces in such sequential settings. We demonstrate that this new approach outperforms previous approaches on two challenging imitation learning problems and a benchmark sequence labeling problem.}
}

@InProceedings{ross2010efficient,
  title = 	 {Efficient Reductions for Imitation Learning},
  author = 	 {Ross, Stephane and Bagnell, Drew},
  booktitle = 	 {Proceedings of the Thirteenth International Conference on Artificial Intelligence and Statistics},
  pages = 	 {661--668},
  year = 	 {2010},
  editor = 	 {Teh, Yee Whye and Titterington, Mike},
  volume = 	 {9},
  series = 	 {Proceedings of Machine Learning Research},
  address = 	 {Chia Laguna Resort, Sardinia, Italy},
  month = 	 {13--15 May},
  publisher =    {PMLR},
  pdf = 	 {http://proceedings.mlr.press/v9/ross10a/ross10a.pdf},
  url = 	 {https://proceedings.mlr.press/v9/ross10a.html},
  abstract = 	 {Imitation Learning, while applied successfully on many large real-world problems, is typically addressed as a standard supervised learning problem, where it is assumed the training and testing data are i.i.d..  This is not true in imitation learning as the learned policy influences the future test inputs (states) upon which it will be tested. We show that this leads to compounding errors and a regret bound that grows quadratically in the time horizon of the task. We propose two alternative algorithms for imitation learning where training occurs over several episodes of interaction. These two approaches share in common that the learner’s policy is slowly modified from executing the expert’s policy to the learned policy. We show that this leads to stronger performance guarantees and demonstrate the improved performance on two challenging problems: training a learner to play 1) a 3D racing game (Super Tux Kart) and 2) Mario Bros.; given input images from the games and corresponding actions taken by a human expert and near-optimal planner respectively.}
}

@Misc{xFormers2022,
  author =       {Benjamin Lefaudeux and Francisco Massa and Diana Liskovich and Wenhan Xiong and Vittorio Caggiano and Sean Naren and Min Xu and Jieru Hu and Marta Tintore and Susan Zhang and Patrick Labatut and Daniel Haziza and Luca Wehrstedt and Jeremy Reizenstein and Grigory Sizov},
  title =        {xFormers: A modular and hackable Transformer modelling library},
  howpublished = {\url{https://github.com/facebookresearch/xformers}},
  year =         {2022}
}

@inproceedings{dettmers2022bnb,
 author = {Dettmers, Tim and Lewis, Mike and Belkada, Younes and Zettlemoyer, Luke},
 booktitle = {Advances in Neural Information Processing Systems},
 editor = {S. Koyejo and S. Mohamed and A. Agarwal and D. Belgrave and K. Cho and A. Oh},
 pages = {30318--30332},
 publisher = {Curran Associates, Inc.},
 title = {GPT3.int8(): 8-bit Matrix Multiplication for Transformers at Scale},
 url = {https://proceedings.neurips.cc/paper_files/paper/2022/file/c3ba4962c05c49636d4c6206a97e9c8a-Paper-Conference.pdf},
 volume = {35},
 year = {2022}
}

@inproceedings{basu2021mirostatneuraltextdecoding,
    title={Mirostat: A Neural Text Decoding Algorithm that Directly Controls Perplexity}, 
    author={Sourya Basu and Govardana Sachitanandam Ramachandran and Nitish Shirish Keskar and Lav R. Varshney},
    year={2021},
    booktitle={Proceedings of the International Conference on Learning Representations: ICLR 2021},
}

@inproceedings{Ranzato2015SequenceLT,
  title={Sequence Level Training with Recurrent Neural Networks},
  author={Marc'Aurelio Ranzato and Sumit Chopra and Michael Auli and Wojciech Zaremba},
  booktitle={Proceedings of the International Conference on Learning Representations: ICLR 2016},
  year = {2016},
}

@inproceedings{Welleck2019NeuralTG,
  title={Neural Text Generation with Unlikelihood Training},
  author={Sean Welleck and Ilia Kulikov and Stephen Roller and Emily Dinan and Kyunghyun Cho and Jason Weston},
  booktitle={Proceedings of the International Conference on Learning Representations: ICLR 2020},
  year = {2020},
}

@ARTICLE{williams1989learning,
  author={Williams, Ronald J. and Zipser, David},
  journal={Neural Computation}, 
  title={A Learning Algorithm for Continually Running Fully Recurrent Neural Networks}, 
  year={1989},
  volume={1},
  number={2},
  pages={270-280},
  keywords={},
  doi={10.1162/neco.1989.1.2.270}}

@inproceedings{fan-etal-2018-hierarchical,
    title = "Hierarchical Neural Story Generation",
    author = "Fan, Angela  and
      Lewis, Mike  and
      Dauphin, Yann",
    editor = "Gurevych, Iryna  and
      Miyao, Yusuke",
    booktitle = "Proceedings of the 56th Annual Meeting of the Association for Computational Linguistics (Volume 1: Long Papers)",
    month = jul,
    year = "2018",
    address = "Melbourne, Australia",
    publisher = "Association for Computational Linguistics",
    url = "https://aclanthology.org/P18-1082",
    doi = "10.18653/v1/P18-1082",
    pages = "889--898",
    abstract = "We explore story generation: creative systems that can build coherent and fluent passages of text about a topic. We collect a large dataset of 300K human-written stories paired with writing prompts from an online forum. Our dataset enables hierarchical story generation, where the model first generates a premise, and then transforms it into a passage of text. We gain further improvements with a novel form of model fusion that improves the relevance of the story to the prompt, and adding a new gated multi-scale self-attention mechanism to model long-range context. Experiments show large improvements over strong baselines on both automated and human evaluations. Human judges prefer stories generated by our approach to those from a strong non-hierarchical model by a factor of two to one.",
}

@inproceedings{pillutla2021mauve,
 author = {Pillutla, Krishna and Swayamdipta, Swabha and Zellers, Rowan and Thickstun, John and Welleck, Sean and Choi, Yejin and Harchaoui, Zaid},
 booktitle = {Advances in Neural Information Processing Systems},
 editor = {M. Ranzato and A. Beygelzimer and Y. Dauphin and P.S. Liang and J. Wortman Vaughan},
 pages = {4816--4828},
 publisher = {Curran Associates, Inc.},
 title = {MAUVE: Measuring the Gap Between Neural Text and Human Text using Divergence Frontiers},
 url = {https://proceedings.neurips.cc/paper_files/paper/2021/file/260c2432a0eecc28ce03c10dadc078a4-Paper.pdf},
 volume = {34},
 year = {2021}
}

@inproceedings{holtzman2020curious,
    title={The Curious Case of Neural Text Degeneration},
    author={Ari Holtzman and Jan Buys and Li Du and Maxwell Forbes and Yejin Choi},
    booktitle={Proceedings of the International Conference on Learning Representations: ICLR 2020},
    year = {2020},
}

@inproceedings{zhang-etal-2021-trading,
    title = "Trading Off Diversity and Quality in Natural Language Generation",
    author = "Zhang, Hugh  and
      Duckworth, Daniel  and
      Ippolito, Daphne  and
      Neelakantan, Arvind",
    editor = "Belz, Anya  and
      Agarwal, Shubham  and
      Graham, Yvette  and
      Reiter, Ehud  and
      Shimorina, Anastasia",
    booktitle = "Proceedings of the Workshop on Human Evaluation of NLP Systems (HumEval)",
    month = apr,
    year = "2021",
    address = "Online",
    publisher = "Association for Computational Linguistics",
    url = "https://aclanthology.org/2021.humeval-1.3",
    pages = "25--33",
    abstract = "For open-ended language generation tasks such as storytelling or dialogue, choosing the right decoding algorithm is vital for controlling the tradeoff between generation \textit{quality} and \textit{diversity}. However, there presently exists no consensus on which decoding procedure is best or even the criteria by which to compare them. In this paper, we cast decoding as a tradeoff between response quality and diversity, and we perform the first large-scale evaluation of decoding methods along the entire quality-diversity spectrum. Our experiments confirm the existence of the likelihood trap: the counter-intuitive observation that high likelihood sequences are often surprisingly low quality. We also find that when diversity is a priority, all methods perform similarly, but when quality is viewed as more important, nucleus sampling (Holtzman et al., 2019) outperforms all other evaluated decoding algorithms.",
}

@inproceedings{hashimoto-etal-2019-unifying,
    title = "Unifying Human and Statistical Evaluation for Natural Language Generation",
    author = "Hashimoto, Tatsunori B.  and
      Zhang, Hugh  and
      Liang, Percy",
    editor = "Burstein, Jill  and
      Doran, Christy  and
      Solorio, Thamar",
    booktitle = "Proceedings of the 2019 Conference of the North {A}merican Chapter of the Association for Computational Linguistics: Human Language Technologies, Volume 1 (Long and Short Papers)",
    month = jun,
    year = "2019",
    address = "Minneapolis, Minnesota",
    publisher = "Association for Computational Linguistics",
    url = "https://aclanthology.org/N19-1169",
    doi = "10.18653/v1/N19-1169",
    pages = "1689--1701",
    abstract = "How can we measure whether a natural language generation system produces both high quality and diverse outputs? Human evaluation captures quality but not diversity, as it does not catch models that simply plagiarize from the training set. On the other hand, statistical evaluation (i.e., perplexity) captures diversity but not quality, as models that occasionally emit low quality samples would be insufficiently penalized. In this paper, we propose a unified framework which evaluates both diversity and quality, based on the optimal error rate of predicting whether a sentence is human- or machine-generated. We demonstrate that this error rate can be efficiently estimated by combining human and statistical evaluation, using an evaluation metric which we call HUSE. On summarization and chit-chat dialogue, we show that (i) HUSE detects diversity defects which fool pure human evaluation and that (ii) techniques such as annealing for improving quality actually decrease HUSE due to decreased diversity.",
}

@inproceedings{hewitt-etal-2022-truncation,
    title = "Truncation Sampling as Language Model Desmoothing",
    author = "Hewitt, John  and
      Manning, Christopher  and
      Liang, Percy",
    editor = "Goldberg, Yoav  and
      Kozareva, Zornitsa  and
      Zhang, Yue",
    booktitle = "Findings of the Association for Computational Linguistics: EMNLP 2022",
    month = dec,
    year = "2022",
    address = "Abu Dhabi, United Arab Emirates",
    publisher = "Association for Computational Linguistics",
    url = "https://aclanthology.org/2022.findings-emnlp.249",
    doi = "10.18653/v1/2022.findings-emnlp.249",
    pages = "3414--3427",
    abstract = "Long samples of text from neural language models can be of poor quality. Truncation sampling algorithms{--}like top-p or top-k{---}address this by setting some words{'} probabilities to zero at each step. This work investigates why these methods are important, and how to improve them. We propose thinking of a neural language model as a mixture of a true distribution and a smoothing distribution that avoids infinite perplexity. In this light, truncation algorithms aim to perform desmoothing, estimating a subset of the support of the true distribution. Finding a good subset is crucial: we show that top-p unnecessarily truncates high-probability words, for example causing it to truncate all words but Trump for a document that starts with Donald. We introduce eta-sampling, which truncates words below an entropy-dependent probability threshold. Compared to previous algorithms, our eta-sampling generates more plausible long documents according to humans, is better at breaking out of repetition, and behaves more reasonably on a battery of test distributions.",
}

@inproceedings{braverman2020calibration,
author = {Braverman, Mark and Chen, Xinyi and Kakade, Sham and Narasimhan, Karthik and Zhang, Cyril and Zhang, Yi},
title = {Calibration, entropy rates, and memory in language models},
year = {2020},
publisher = {JMLR.org},
abstract = {Building accurate language models that capture meaningful long-term dependencies is a core challenge in natural language processing. Towards this end, we present a calibration-based approach to measure long-term discrepancies between a generative sequence model and the true distribution, and use these discrepancies to improve the model. Empirically, we show that state-of-the-art language models, including LSTMs and Transformers, are miscalibrated: the entropy rates of their generations drift dramatically upward over time. We then provide provable methods to mitigate this phenomenon. Furthermore, we show how this calibration-based approach can also be used to measure the amount of memory that language models use for prediction.},
booktitle = {Proceedings of the 37th International Conference on Machine Learning},
articleno = {102},
numpages = {11},
series = {ICML'20}
}

@inproceedings{PlattProbabilisticOutputs1999,
  author = {Platt, J.},
  booktitle = {Advances in Large Margin Classifiers},
  title = {Probabilistic outputs for support vector machines and comparisons to regularized likelihood methods},
  year = 1999
}

@inproceedings{guo2017on,
  title = 	 {On Calibration of Modern Neural Networks},
  author =       {Chuan Guo and Geoff Pleiss and Yu Sun and Kilian Q. Weinberger},
  booktitle = 	 {Proceedings of the 34th International Conference on Machine Learning},
  pages = 	 {1321--1330},
  year = 	 {2017},
  editor = 	 {Precup, Doina and Teh, Yee Whye},
  volume = 	 {70},
  series = 	 {Proceedings of Machine Learning Research},
  month = 	 {06--11 Aug},
  publisher =    {PMLR},
  pdf = 	 {http://proceedings.mlr.press/v70/guo17a/guo17a.pdf},
  url = 	 {https://proceedings.mlr.press/v70/guo17a.html},
  abstract = 	 {Confidence calibration – the problem of predicting probability estimates representative of the true correctness likelihood – is important for classification models in many applications. We discover that modern neural networks, unlike those from a decade ago, are poorly calibrated. Through extensive experiments, we observe that depth, width, weight decay, and Batch Normalization are important factors influencing calibration. We evaluate the performance of various post-processing calibration methods on state-of-the-art architectures with image and document classification datasets. Our analysis and experiments not only offer insights into neural network learning, but also provide a simple and straightforward recipe for practical settings: on most datasets, temperature scaling – a single-parameter variant of Platt Scaling – is surprisingly effective at calibrating predictions.}
}

@inproceedings{kumar2019verified,
 author = {Kumar, Ananya and Liang, Percy S and Ma, Tengyu},
 booktitle = {Advances in Neural Information Processing Systems},
 editor = {H. Wallach and H. Larochelle and A. Beygelzimer and F. d\textquotesingle Alch\'{e}-Buc and E. Fox and R. Garnett},
 pages = {},
 publisher = {Curran Associates, Inc.},
 title = {Verified Uncertainty Calibration},
 url = {https://proceedings.neurips.cc/paper/2019/file/f8c0c968632845cd133308b1a494967f-Paper.pdf},
 volume = {32},
 year = {2019}
}

@inproceedings{zadrozny2002transforming,
author = {Zadrozny, Bianca and Elkan, Charles},
title = {Transforming Classifier Scores into Accurate Multiclass Probability Estimates},
year = {2002},
isbn = {158113567X},
publisher = {Association for Computing Machinery},
address = {New York, NY, USA},
url = {https://doi.org/10.1145/775047.775151},
doi = {10.1145/775047.775151},
abstract = {Class membership probability estimates are important for many applications of data mining in which classification outputs are combined with other sources of information for decision-making, such as example-dependent misclassification costs, the outputs of other classifiers, or domain knowledge. Previous calibration methods apply only to two-class problems. Here, we show how to obtain accurate probability estimates for multiclass problems by combining calibrated binary probability estimates. We also propose a new method for obtaining calibrated two-class probability estimates that can be applied to any classifier that produces a ranking of examples. Using naive Bayes and support vector machine classifiers, we give experimental results from a variety of two-class and multiclass domains, including direct marketing, text categorization and digit recognition.},
booktitle = {Proceedings of the Eighth ACM SIGKDD International Conference on Knowledge Discovery and Data Mining},
pages = {694–699},
numpages = {6},
location = {Edmonton, Alberta, Canada},
series = {KDD '02}
}
\bibliographystyle{icml2025}

%%%%%%%%%%%%%%%%%%%%%%%%%%%%%%%%%%%%%%%%%%%%%%%%%%%%%%%%%%%%%%%%%%%%%%%%%%%%%%%
%%%%%%%%%%%%%%%%%%%%%%%%%%%%%%%%%%%%%%%%%%%%%%%%%%%%%%%%%%%%%%%%%%%%%%%%%%%%%%%
% APPENDIX
%%%%%%%%%%%%%%%%%%%%%%%%%%%%%%%%%%%%%%%%%%%%%%%%%%%%%%%%%%%%%%%%%%%%%%%%%%%%%%%
%%%%%%%%%%%%%%%%%%%%%%%%%%%%%%%%%%%%%%%%%%%%%%%%%%%%%%%%%%%%%%%%%%%%%%%%%%%%%%%
\newpage
\appendix
\onecolumn

\section{Proofs}\label{appendix:proofs}

Recall notation: we are given prompts $X \in \cX$ drawn from some prompt distribution $X \sim q$, and responses $Y \in \cY$ drawn from the true conditional distribution $Y \sim p^*_X$ for $p^*_X \in \Delta^\cY$. For simplicity, let $\cY$ be the set $\cV^T$ of length $T$ strings over a vocabulary $\cV$. We then train a language model $\hat p: \cX \to \Delta^\cY$ to fit the true conditional distribution $p^*$.

\begin{algorithm}[t]
\caption{Future entropy scaling}\label{alg:futureent-appendix}
\textbf{Inputs:} model $\hat p$, length $T$, vocab $\cV$, future entropy fitting algorithm $\cA$, future entropy dataset size $n$, sample size $m$, prompt distribution $q$, true conditional distribution $p^*$, optimization tolerate $\eps$
\begin{enumerate}[1:]
    \item Initialize $\alpha_1 = ... = \alpha_T = 0$, $\hat f_2 = ... = \hat f_{T+1} = 0$.
    \item For $t = T, ..., 1$:
    \item $\quad$ Choose $\alpha_t$ to minimize expected log loss at step $t$, until the gradient is at most $\eps$:
    \begin{align*}
        \alpha_t = \argmin_{\alpha_t'} \cL_t\left(p^*\ \|\ \hat p_{\alpha', \hat f}^\text{(ent)}\right)
    \end{align*}
    $\quad$ where $\alpha' = (0, ..., 0, \alpha_t', \alpha_{t+1}, ..., \alpha_T)$.
    
    $\quad$ ($\cL_t$: Equation~\ref{equation:log-loss}, $\hat p_{\alpha', \hat f}^\text{(ent)}$: Equation~\ref{equation:approx-future-ent})
    \item $\quad$ Fit the future entropy predictor $\hat f_t$ as follows:
    \item $\qquad$ Sample prefixes $\left(X^{(i)}, Y_{<t-1}^{(i)}\right)_{i=1}^n$ with $X^{(i)} \simiid q,\ Y_{<t-1}^{(i)} \sim p^*_{X^{(i)}}$.
    \item $\qquad$ For each token $v \in \cV$, compute labels $(h^{(i, v)})_{i=1}^n$ by passing each prefix $\left(X^{(i)}, \left[Y_{<t-1}^{(i)}, v\right]\right)$ into Algorithm~\ref{alg:futureentest}, along with inputs $\hat p_{\alpha, \hat f}^\text{(ent)}$, $T$, $m$.
    \item $\qquad$ Fit one future entropy predictor for each token $v$, setting $\hat f_t(X, [Y_{<t-1}, v]) = \hat f_{t, v}(X, Y_{<t-1})$, where each $\hat f_{t, v}$ is the output $\cA\left\{\left(X^{(i)}, Y_{<t-1}^{(i)}, h^{(i, v)}\right)_{i=1}^n\right\}$.
    \item Return $(\alpha_1, ..., \alpha_T), (\hat f_2, ..., \hat f_{T+1})$.
\end{enumerate}
\end{algorithm}

We say that $\hat p$ is \textit{entropy calibrated} if its entropy over generations is equal to its log loss, in expectation over the prompt:
\begin{align}
     H(\hat p) &= \cL(p^*\ \|\ \hat p),
\end{align}
where the total entropy and total log loss are given by
\begin{align}
    H(\hat p) &= \EE_{X \sim q}\EE_{\hat Y \sim \hat p_X} [-\log \hat p_X(\hat Y)], \\
    \cL(p^*\ \|\ \hat p) &= \EE_{X \sim q}\EE_{Y \sim p^*_X} [-\log \hat p_X(Y)].
\end{align}
We can also write the per-step entropy and per-step log loss as
\begin{align}
    H_t(\hat p) &= \EE_{X \sim q} \EE_{\hat Y \sim \hat p_X} [-\log \hat p_X(\hat Y_t \mid \hat Y_{<t})], \\
    \cL_t(p^*\ \|\ \hat p) &= \EE_{X \sim q} \EE_{Y \sim p^*_X} [-\log \hat p_X(Y_t \mid Y_{<t})].
\end{align}
Let the total entropy calibration error be given by
\begin{align}
    \text{EntCE}(p^*\ \|\ \hat p) &= \left| H(\hat p) - \cL(\hat p\ \|\ p^*) \right|\nonumber \\
    &= \left| \sum_{t=1}^T H_t(\hat p) - \cL_t(\hat p\ \|\ p^*)  \right|.
\end{align}
Our goal will be to calibrate the model $\hat p$ while preserving its log loss, which we will do by the following adjustment:
\begin{align}
    \hat p_{\alpha, \hat f; X}^\text{(ent)}(Y_t \mid Y_{<t}) &\propto \exp\bigg\{ (1 + \alpha_t) \log \hat p_X(Y_t \mid Y_{<t}) - \alpha_t \hat f_{t+1}(X, Y_{\leq t}) \bigg\},
\end{align}
where $\alpha_1, ..., \alpha_t$ denote the adjustment parameters, and $\hat f_2, ..., \hat f_{T+1}$ denote future entropy predictors (with $\hat f_{T+1} = 0$), whose goal is to approximate the future entropy. Using Algorithm~\ref{alg:futureent-appendix} (copied from Algorithm~\ref{alg:futureent} for convenience) to fit each $\alpha_t, \hat f_t$, we show the following result:
\begin{theorem}\label{theorem:appendix}
    Suppose that Assumption~\ref{assumption:fitting} holds, where each future entropy predictor attains test error $\delta$. Also, let $(\alpha, \hat f)$ be the output of Algorithm~\ref{alg:futureent-appendix}, where each $\alpha_t$ is an $\eps$-stationary point. Then, we have
    \begin{align*}
        \left|\text{EntCE}\left(p^*\ \|\ \hat p_{\alpha, \hat f}^\text{(ent)}\right)\right| &\leq 2T\delta + \sum_{t=1}^T (1 + \alpha_t) \eps, \\
        \cL\left(p^*\ \|\ \hat p_{\alpha, \hat f}^\text{(ent)}\right) &\leq \cL(p^*\ \|\ \hat p).
    \end{align*}
\end{theorem}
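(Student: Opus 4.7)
I would establish this first, as it follows almost immediately from the algorithm's structure. The step-$t$ conditional $\hat p_{\alpha,\hat f;X}^{\text{(ent)}}(Y_t\mid Y_{<t})$ depends on the adjustment parameters only through $\alpha_t$ and $\hat f_{t+1}$, so $\cL_t(p^*\|\hat p_{\alpha,\hat f}^{\text{(ent)}})$—whose outer expectation is over $(X,Y_{\leq t})\sim(q,p^*)$—depends only on $\alpha_t$ once $\hat f_{t+1}$ has been fixed by a previous iteration. At iteration $t$ of Algorithm~\ref{alg:futureent-appendix} we descend on $\cL_t$ starting from $\alpha_t=0$; at that value the step-$t$ conditional coincides with $\hat p(\cdot\mid Y_{<t})$, giving $\cL_t|_{\alpha_t=0}=\cL_t(p^*\|\hat p)$. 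Descent therefore yields $\cL_t(p^*\|\hat p_{\alpha,\hat f}^{\text{(ent)}})\leq\cL_t(p^*\|\hat p)$, and since no subsequent iteration modifies $\alpha_t$ or $\hat f_{t+1}$ this survives to termination. Summing over $t$ gives the log-loss bound.

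\textbf{Part 2 (calibration bound).} I would bound $|H_t-\cL_t|$ per step and then sum. Expanding the adjustment's definition gives $-\log\hat p^{\text{(ent)}}(Y_t\mid Y_{<t})=-(1+\alpha_t)\log\hat p(Y_t\mid Y_{<t})+\alpha_t\hat f_{t+1}(X,Y_{\leq t})+\log Z_t(X,Y_{<t})$, so $H_t-\cL_t$ decomposes into expectation-mismatch terms for $\log\hat p$, $\hat f_{t+1}$, and $\log Z_t$ under prefix law $\hat p^{\text{(ent)}}$ versus $p^*$. The $\eps$-stationarity condition $|\partial\cL_t/\partial\alpha_t|\leq\eps$ unfolds to $|\EE_{(X,Y_{<t})\sim(q,p^*)}[\EE_{Y_t\sim\hat p^{\text{(ent)}}(\cdot\mid Y_{<t})}[\phi]-\EE_{Y_t\sim p^*(\cdot\mid Y_{<t})}[\phi]]|\leq\eps$ for $\phi=\log\hat p-\hat f_{t+1}$; combined with the $(1+\alpha_t)$ coefficient on $\log\hat p$ in the expansion, this controls the token-direction mismatch and gives the $(1+\alpha_t)\eps$ contribution. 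The remaining prefix-direction mismatch is controlled by identifying $\hat f_t(X,Y_{<t})$ with the true future entropy $H_{\hat p^{\text{(ent)}}}(Y_{\geq t}\mid Y_{<t})$ of the already-frozen suffix: the labels produced by Algorithm~\ref{alg:futureentest} are mean-zero and bounded in $[0,T\log|\cV|]$, so Assumption~\ref{assumption:fitting} yields $\delta$ test error on this target. The chain-rule identity $H_{\hat p^{\text{(ent)}}}(Y_{\geq t}\mid Y_{<t})=H_{\hat p^{\text{(ent)}}}(Y_t\mid Y_{<t})+\EE_{Y_t\sim\hat p^{\text{(ent)}}}[H_{\hat p^{\text{(ent)}}}(Y_{>t}\mid Y_{\leq t})]$ then lets me express each per-step entropy as $\hat f_t-\EE_{\hat p^{\text{(ent)}}}[\hat f_{t+1}]$ up to $O(\delta)$; after summing over $t$, the prefix-law-dependent pieces telescope, leaving $2T\delta$ worth of regression error (one $\delta$ entering through $H$ and one through $\cL$).

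\textbf{Main obstacle.} The delicate part is the prefix-mismatch bookkeeping in Part 2. I have to verify that (i) Algorithm~\ref{alg:futureentest}'s Monte Carlo labels satisfy Assumption~\ref{assumption:fitting} with noise scale $\sigma=O(T\log|\cV|)$, so that $\delta$ test error is achievable with $n=\text{poly}(T\log|\cV|,\delta)$ samples; (ii) each $\hat f_t$ is fit against the already-frozen suffix of $\hat p^{\text{(ent)}}$, so that the substitution ``$\hat f_t\approx$ true future entropy'' is a clean independent regression per step and the errors add only linearly in $T$ rather than cascading across iterations; and (iii) the $(1+\alpha_t)$ prefactor tracks cleanly through the token-direction bound, which it does because it sits in front of $\log\hat p$ in the $-\log\hat p^{\text{(ent)}}$ expansion. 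The key conceptual input that makes all of this go through is the paper's observation that the adjustment is a first-order Taylor expansion of globally normalized temperature scaling, so the stationary-point reasoning that exactly calibrates global temperature scaling survives at cost $(1+\alpha_t)\eps+O(\delta)$ per step. Assembling these pieces produces the stated bound $|\text{EntCE}|\leq 2T\delta+\sum_t(1+\alpha_t)\eps$.
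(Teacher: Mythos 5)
Your proposal is correct and takes essentially the same route as the paper: your Part 1 is the content of Lemma~\ref{lemma:logloss} (only $\cL_t$ depends on $\alpha_t$, and $\alpha_t=0$ recovers $\hat p$, so the per-step minimization can only improve log loss), and your Part 2 combines Lemma~\ref{lemma:compute-gradient} (stationarity unfolds to a $Y_t$-swap bound for $\log\hat p-\hat f_{t+1}$ under $p^*$-prefixes, scaled by $(1+\alpha_t)$, plus $2\delta$ to replace $\hat f_{t+1}$ by the true future entropy) with Lemma~\ref{lemma:fitting} (the frozen suffix makes each regression target well-defined), your chain-rule telescoping being exactly the paper's hybrid-prefix decomposition written conditionally. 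The only caveat is that your opening framing of bounding $|H_t-\cL_t|$ per step is not literally what happens (those per-step gaps are not individually small because of the prefix-law mismatch), but the telescoping you then describe is precisely the paper's assembly, so the substance matches.
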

The proof proceeds as follows: first, recall that for each step $t$, we choose $\alpha_t$ to minimize $\cL_t\left(p^*\ \|\ \hat p_{\alpha, \hat f}^\text{(ent)}\right)$. The first lemma will show that if the future entropy predictor $\hat f_{t+1}$ fitted in the previous iteration has at most $\delta$ error (in expectation over $Y_{<t}$ and uniformly over $Y_t \in \cV$), then this choice of $\alpha_t$ produces a calibration-like guarantee.
\begin{lemma}\label{lemma:compute-gradient}
    Suppose that $\alpha_t$ is an $\eps$-stationary point with respect to $\cL_t$:
    \begin{align*}
        \left| \frac{d}{d\alpha_t} \cL_t\left(p^*\ \|\ \hat p_{\alpha, \hat f}^\text{(ent)}\right) \right| \leq \eps,
    \end{align*}
    and that the future entropy predictor $\hat f_{t+1}$ attains at most $\delta$ error, in expectation over $Y_{<t}$ and uniformly over $Y_t \in \cV$:
    \begin{align*}
        \max_{Y_t \in V} \EE_{X \sim q}\EE_{Y_{< t} \sim p^*_X}\left| \hat f_{t+1}(X, Y_{\leq t}) - H_{\hat p_{\alpha, \hat f; X}^\text{(ent)}}(Y_{>t} \mid Y_{\leq t}) \right| \leq \delta.
    \end{align*}
    Then, we have the following calibration guarantee:
    \begin{align*}
        \Bigg| &\EE_{X \sim q} \EE_{Y_{\leq t} \sim p^*_X} \EE_{\hat Y_{>t} \sim \hat p_{\alpha, \hat f; X}^\text{(ent)}(\cdot \mid Y_{\leq t})}\left[-\log \hat p_{\alpha, \hat f; X}^\text{(ent)}(Y_{\leq t}, \hat Y_{>t})\right] \\
        - &\EE_{X \sim q} \EE_{Y_{< t} \sim p^*_X} \EE_{\hat Y_{\geq t} \sim \hat p_{\alpha, \hat f; X}^\text{(ent)}(\cdot \mid Y_{< t})}\left[-\log \hat p_{\alpha, \hat f; X}^\text{(ent)}(Y_{< t}, \hat Y_{\geq t})\right]  \Bigg| \leq (1 + \alpha_t)\eps + 2\delta.
    \end{align*}
\end{lemma}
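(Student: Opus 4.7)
The plan is to abbreviate $\tilde p := \hat p_{\alpha,\hat f}^{\text{(ent)}}$ and let $A,B$ denote the two expectations inside the lemma's absolute value. First I would apply the chain rule $-\log \tilde p_X(Y_{\leq t},\hat Y_{>t}) = -\log \tilde p_X(Y_{<t}) - \log \tilde p_X(Y_t\mid Y_{<t}) - \log \tilde p_X(\hat Y_{>t}\mid Y_{\leq t})$ (and the analogous identity for $B$) and take expectations. The prefix term $\EE_{X,Y_{<t}\sim p^*}[-\log\tilde p_X(Y_{<t})]$ is identical in $A$ and $B$, so it cancels and $A-B$ reduces to $(\cL_t(p^*\|\tilde p) - \tilde H_t) + (\EE[H_{\tilde p_X}(Y_{>t}\mid Y_{\leq t})] - \EE[H_{\tilde p_X}(Y_{>t}\mid Y_{<t},\hat Y_t)])$, where $\tilde H_t := \EE_{X,Y_{<t}\sim p^*,\hat Y_t\sim \tilde p}[-\log\tilde p_X(\hat Y_t\mid Y_{<t})]$ and $H_{\tilde p_X}$ is the \emph{true} future entropy of $\tilde p$.

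Second, I would invoke the assumed bound on $\hat f_{t+1}$ to replace both true future-entropy expectations by their predictor counterparts. The hypothesis gives error at most $\delta$ in expectation over the prefix uniformly in the final token, so it applies to both the data-prefix expectation (with $Y_t\sim p^*$) and the model-prefix expectation (with $\hat Y_t\sim \tilde p$) after conditioning on the value of that token. Two applications of the triangle inequality then show that the quantity to bound differs from $(\cL_t(p^*\|\tilde p) - \tilde H_t) + (\EE[\hat f_{t+1}(X,Y_{\leq t})] - \EE[\hat f_{t+1}(X,Y_{<t},\hat Y_t)])$ by at most $2\delta$.

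The core algebraic step is to substitute the softmax identity $-\log\tilde p_X(Y_t\mid Y_{<t}) = -(1+\alpha_t)\log\hat p_X(Y_t\mid Y_{<t}) + \alpha_t\hat f_{t+1}(X,Y_{\leq t}) + \log Z_{X,Y_{<t}}$ into both $\cL_t$ and $\tilde H_t$. The log-partition $\log Z_{X,Y_{<t}}$ depends only on $(X,Y_{<t})$, whose distribution is $(q,p^*)$ in both terms, so it cancels exactly. In the residual, $-\log\hat p$ carries coefficient $(1+\alpha_t)$ and $\hat f_{t+1}$ carries coefficient $\alpha_t$; adding the extra $\hat f_{t+1}$-difference from the future-entropy terms bumps its coefficient to $(1+\alpha_t)$ as well, so the whole expression factors as $(1+\alpha_t)\bigl[\EE_{Y_t\sim p^*}(-\log\hat p_X(Y_t\mid Y_{<t}) + \hat f_{t+1}(X,Y_{\leq t})) - \EE_{\hat Y_t\sim \tilde p}(-\log\hat p_X(\hat Y_t\mid Y_{<t}) + \hat f_{t+1}(X,Y_{<t},\hat Y_t))\bigr]$.

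Finally, I would verify by direct computation that the bracketed quantity is exactly $\frac{d}{d\alpha_t}\cL_t(p^*\|\tilde p)$: differentiating $\log\tilde p_X(Y_t\mid Y_{<t})$ produces $\log\hat p_X(Y_t\mid Y_{<t}) - \hat f_{t+1}(X,Y_{\leq t})$ minus the $\tilde p$-expectation of the same quantity (the derivative of the log-partition), and integrating the negative of this against $p^*$ gives precisely the bracket. The $\eps$-stationarity hypothesis then bounds the bracket by $\eps$ in absolute value, yielding $|A-B|\leq (1+\alpha_t)\eps + 2\delta$. The main subtlety is the $\log Z$ cancellation in paragraph three, which depends crucially on defining $\tilde H_t$ with a \emph{data} prefix $Y_{<t}\sim p^*$ (matching the prefix distribution of $\cL_t$) rather than a model prefix; everything else is routine chain-rule bookkeeping, triangle inequalities, and the softmax log-derivative.
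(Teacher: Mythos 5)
Your proposal is correct and is essentially the paper's proof traversed in the opposite direction: the paper starts from the $\eps$-bounded softmax log-derivative of $\cL_t$, multiplies by $(1+\alpha_t)$, absorbs the shared normalizing constant and prefix log-probabilities, and swaps $\hat f_{t+1}$ for the true future entropy at cost $2\delta$, whereas you start from the target difference, cancel the prefix and log-partition terms, make the same $2\delta$ swap, and recognize the $(1+\alpha_t)$-scaled remainder as exactly $\frac{d}{d\alpha_t}\cL_t$. All key ingredients — the log-partition cancellation via the common $(q,p^*)$ prefix distribution, the coefficient bookkeeping, the uniform-over-token error bound applied under both $Y_t\sim p^*$ and $\hat Y_t\sim\hat p^{\text{(ent)}}_{\alpha,\hat f}$, and the final stationarity step — coincide with the paper's argument.
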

This bound can be thought of as a partial calibration guarantee in the sense that it allows us to swap $Y_t \sim p^*$ and $\hat Y_t \sim \hat p^\text{(ent)}_{\alpha, \hat f}$ in the expectation.

To show that Algorithm~\ref{alg:futureent-appendix} improves log loss, note that each $\alpha_t$ is initialized to $0$, so the initial model $\hat p^\text{(ent)}_{\alpha, \hat f}$ is equal to $\hat p$. Then, it suffices to show that each iteration of the algorithm improves the log loss, relative to the previous iteration. This statement is true by the following lemma, which states that at each step $t$ in the algorithm, optimizing $\cL_t$ is equivalent to optimizing the overall log loss $\cL$:
\begin{lemma}\label{lemma:logloss}
    Let $\alpha_{t+1}, ..., \alpha_T$ be set arbitrarily, and let $\alpha_1 = ... = \alpha_{t-1} = 0$. Also, let $\hat f$ be set arbitrarily. Then,
    \begin{align*}
        \argmin_{\alpha_t'} \cL_t\left(p^*\ \|\ \hat p_{\alpha', \hat f}^\text{(ent)}\right) = \argmin_{\alpha_t'} \cL\left(p^*\ \|\ \hat p_{\alpha', \hat f}^\text{(ent)}\right),
    \end{align*}
    where $\alpha' = (0, ..., 0, \alpha_t', \alpha_{t+1}, ..., \alpha_T)$.
\end{lemma}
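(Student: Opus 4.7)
My plan is to show that, under the lemma's hypotheses, $\cL\left(p^*\ \|\ \hat p_{\alpha', \hat f}^{\text{(ent)}}\right)$ differs from $\cL_t\left(p^*\ \|\ \hat p_{\alpha', \hat f}^{\text{(ent)}}\right)$ only by a constant in $\alpha_t'$, so that the two argmins coincide. The two ingredients are (i) the chain-rule decomposition of the total log loss into a sum of per-step losses, and (ii) the observation that each per-step conditional in Equation~\ref{equation:approx-future-ent} depends on $\alpha'$ through a single coordinate.

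First, I would unroll
\[
-\log \hat p^{\text{(ent)}}_{\alpha', \hat f; X}(Y) = \sum_{s=1}^T -\log \hat p^{\text{(ent)}}_{\alpha', \hat f; X}(Y_s \mid Y_{<s})
\]
and take expectations over $X \sim q$ and $Y \sim p^*_X$; marginalizing $Y$ down to $Y_{\leq s}$ inside each summand yields
\begin{align*}
\cL\left(p^*\ \|\ \hat p^{\text{(ent)}}_{\alpha', \hat f}\right) = \sum_{s=1}^T \cL_s\left(p^*\ \|\ \hat p^{\text{(ent)}}_{\alpha', \hat f}\right).
\end{align*}
Next, I would inspect Equation~\ref{equation:approx-future-ent} and note that the conditional at step $s$ uses only the scalar weight $\alpha_s$ and the fixed predictor $\hat f_{s+1}$, so it is a function of $\alpha_s'$ alone. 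Because the lemma holds $\hat f$ fixed there is no indirect dependence on the other coordinates of $\alpha'$. For $s < t$ we additionally have $\alpha_s = 0$ by assumption, so the conditional collapses to $\hat p_X(Y_s \mid Y_{<s})$; for $s > t$ the conditional depends on the prescribed $\alpha_s$, which is held fixed. In every case $\cL_s$ is constant in $\alpha_t'$, and only the $s = t$ term varies.

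Combining these observations gives $\cL = \cL_t + C$ with $C$ independent of $\alpha_t'$, from which $\argmin_{\alpha_t'} \cL = \argmin_{\alpha_t'} \cL_t$ is immediate. There is no serious obstacle; the lemma is essentially a bookkeeping statement. The only place that warrants explicit care is the decoupling claim: I would verify directly from Equation~\ref{equation:approx-future-ent} that both the $(1+\alpha_s)\log \hat p_X(Y_s \mid Y_{<s})$ term and the $-\alpha_s \hat f_{s+1}(X, Y_{\leq s})$ term at time $s$ depend only on $\alpha_s$ and on the fixed $\hat f_{s+1}$, so that no coupling across time indices is introduced once $\hat f$ is treated as an input rather than a quantity implicitly defined by $\alpha'$.
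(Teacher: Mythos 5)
Your proposal is correct and follows essentially the same route as the paper's proof: decompose the total log loss as $\cL = \sum_{s=1}^T \cL_s$ and observe from Equation~\ref{equation:approx-future-ent} that, with $\hat f$ held fixed, the step-$s$ conditional depends only on $\alpha_s$, so every summand with $s \neq t$ is constant in $\alpha_t'$ and the two argmins coincide. Your extra care in spelling out the decoupling (including treating $\hat f$ as a fixed input rather than a function of $\alpha'$) is exactly the fact the paper invokes implicitly, so there is nothing to add.
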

The final lemma involves showing that each future entropy predictor outputted by the algorithm attains low error and satisfies the condition in Lemma~\ref{lemma:compute-gradient}. This lemma relies on the fact that the future entropy $H_{\hat p_{\alpha, \hat f; X}^\text{(ent)}}(Y_{>t} \mid Y_{\leq t})$ only depends on $\alpha_{t+1}, ..., \alpha_T$ and $\hat f_{t+2}, ..., \hat f_{T+1}$, because it only involves generation steps $t+1$ and onward. Therefore, after $\alpha_{t+1}$ is chosen, the generation process is fixed for steps $t+1$ and onward, so we can fit a future entropy predictor over those steps despite not having yet chosen $\alpha_1, ..., \alpha_t$. These facts, along with the black box fitting procedure provided in Assumption~\ref{assumption:fitting}, lead to the following lemma:
\begin{lemma}\label{lemma:fitting}
    For any $\alpha = (\alpha_1, ..., \alpha_T)$ and $\hat f = (\hat f_2, ..., \hat f_{T+1})$, and for some fixed $t$, let $\alpha' = (0, ..., 0, \alpha_{t}, ..., \alpha_T)$ and $\hat f' = (0, ..., 0, \hat f_{t+1}, ..., \hat f_{T+1})$ be the results of zeroing out the first $t-1$ entries of $\alpha$ and $\hat f$. Then, we have that
    \begin{align*}
        H_{\hat p_{\alpha, \hat f; X}^\text{(ent)}}(Y_{>t-1} \mid Y_{\leq t-1}) = H_{\hat p_{\alpha', \hat f'; X}^\text{(ent)}}(Y_{>t-1} \mid Y_{\leq t-1})
    \end{align*}
    for all $Y_{\leq t-1}$. Furthermore, suppose that Assumption~\ref{assumption:fitting} holds, and let $\cD = \left( X^{(i)}, Y_{<t-1}^{(i)}, h^{(i,v)} \right)_{i=1}^n$ be a dataset with 
    \begin{align*}
        h^{(i,v)} &= H_{\hat p_{\alpha', \hat f'; X}^\text{(ent)}}\left(Y_{>t-1} \mid \left[Y_{< t-1}^{(i)}, v\right]\right) + \eps_{i,v}\\
        X^{(i)} &\simiid q,\ Y_{<t-1}^{(i)} \sim p^*_{X^{(i)}}, \eps_{i,v} \sim \cE
    \end{align*}
    for some token $v \in \cV$, dataset size $n = \text{poly}(T \log \cV,\delta)$, and some mean-zero noise distribution $\cE$ bounded by $T \log \cV$. Then, letting $\cA$ denote the black box fitting procedure in Assumption~\ref{assumption:fitting}, we have that $\hat f_{t, v} = \cA(\cD)$ satisfies
    \begin{align*}
        \EE_{X \sim q}\EE_{Y_{<t-1} \sim p^*_X}\left| \hat f_{t,v}(X, Y_{<t-1}) - H_{\hat p_{\alpha, \hat f; X}^\text{(ent)}}(Y_{>t-1} \mid [Y_{< t-1}, v]) \right| \leq \delta.
    \end{align*}
\end{lemma}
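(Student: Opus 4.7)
The plan is to split the lemma into its two assertions and dispatch each with a different technique: the equality of future entropies is structural and comes from a careful reading of Equation~\ref{equation:approx-future-ent}, while the regression bound is a direct invocation of Assumption~\ref{assumption:fitting} after certifying its distributional hypotheses.

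For the structural claim, I would first observe that each per-step conditional $\hat p_{\alpha, \hat f; X}^\text{(ent)}(Y_s \mid Y_{<s})$, being defined by a separate softmax over $Y_s$ with $Y_{<s}$ held fixed, depends on $(\alpha, \hat f)$ only through the pair $(\alpha_s, \hat f_{s+1})$. The chain-rule factorization
\[
\hat p_{\alpha, \hat f; X}^\text{(ent)}(Y_{>t-1} \mid Y_{\leq t-1}) = \prod_{s=t}^{T} \hat p_{\alpha, \hat f; X}^\text{(ent)}(Y_s \mid Y_{<s})
\]
then shows that this conditional density depends on the free parameters only through $\alpha_t, \ldots, \alpha_T$ and $\hat f_{t+1}, \ldots, \hat f_{T+1}$. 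Since $\alpha$ and $\alpha'$ agree on indices $\geq t$ and $\hat f, \hat f'$ agree on indices $\geq t+1$, the two densities coincide and so do their Shannon entropies, giving the first claim.

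For the regression bound, my plan is to verify that the dataset $\cD$ satisfies the premises of Assumption~\ref{assumption:fitting} with target $f^*(X, Y_{<t-1}) := H_{\hat p_{\alpha', \hat f'; X}^\text{(ent)}}(Y_{>t-1} \mid [Y_{<t-1}, v])$ and noise bound $\sigma = T \log |\cV|$. By construction the inputs $(X^{(i)}, Y_{<t-1}^{(i)})$ are i.i.d.\ draws from $(q, p^*)$; the target is a deterministic function of the input (with $v$ fixed); and the noise $\eps_{i,v}$ is i.i.d., mean-zero, and bounded by $\sigma$. Since $n = \text{poly}(T \log |\cV|, \delta) = \text{poly}(\sigma, \delta)$, the assumption yields $\EE |\hat f_{t,v}(X, Y_{<t-1}) - f^*(X, Y_{<t-1})| \leq \delta$. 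Finally, applying the structural claim at the prefix $[Y_{<t-1}, v]$ (which has length $t-1$) lets me replace $f^*$ with $H_{\hat p_{\alpha, \hat f; X}^\text{(ent)}}(Y_{>t-1} \mid [Y_{<t-1}, v])$ inside the absolute value, delivering the bound in the statement.

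I do not expect any step to be genuinely hard; the subtle point is the structural claim, which hinges on Equation~\ref{equation:approx-future-ent} being locally normalized over $Y_t$ rather than globally normalized over the full sequence $Y$. This local factorization is exactly what makes Algorithm~\ref{alg:futureent} well-posed: when $\hat f_t$ is fit during the backward pass, the parameters $\alpha_{<t}$ have not yet been chosen, but they also cannot enter the regression target, so no circularity arises. In a globally normalized analogue, $\alpha_{<t}$ would appear in the partition function and this clean backward decomposition would fail.
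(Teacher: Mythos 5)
Your proposal is correct and follows essentially the same route as the paper's proof: the structural equality via the factorization of $\hat p_{\alpha, \hat f; X}^\text{(ent)}(Y_{>t-1} \mid Y_{\leq t-1})$ into per-step softmaxes that depend only on $(\alpha_s, \hat f_{s+1})$ for $s \geq t$, followed by a direct invocation of Assumption~\ref{assumption:fitting} with noise bound $T \log |\cV|$ and the interchange of $(\alpha', \hat f')$ with $(\alpha, \hat f)$ justified by that equality. Your verification of the assumption's hypotheses is in fact slightly more explicit than the paper's, which simply states that the second half ``follows directly.''
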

We use these lemmas to prove Theorem~\ref{theorem:appendix} as follows:
\begin{proof}[Proof of Theorem~\ref{theorem:appendix}]
     Let $\alpha = (\alpha_1, ..., \alpha_T)$ and $\hat f = (\hat f_2, ..., \hat f_{T+1})$ denote the outputs of the algorithm. It suffices to show the following three inequalities for all $t$:
    \begin{enumerate}[(a)]
        \item Prediction error bound: the predictor $\hat f_{t+1}$ satisfies
        \begin{align*}
            \max_{Y_t \in V} \EE_{X \sim q}\EE_{Y_{< t} \sim p^*_X}\left| \hat f_{t+1}(X, Y_{\leq t}) - H_{\hat p_{\alpha, \hat f; X}^\text{(ent)}}(Y_{>t} \mid Y_{\leq t}) \right| \leq \delta.
        \end{align*}
        \item Calibration bound: after iteration $t$ of the algorithm, we have
        \begin{align*}
            \Bigg| &\EE_{X \sim q} \EE_{Y_{\leq t} \sim p^*_X} \EE_{\hat Y_{>t} \sim \hat p_{\alpha, \hat f; X}^\text{(ent)}(\cdot \mid Y_{\leq t})}\left[-\log \hat p_{\alpha, \hat f; X}^\text{(ent)}(Y_{\leq t}, \hat Y_{>t})\right] \\
            - &\EE_{X \sim q} \EE_{Y_{< t} \sim p^*_X} \EE_{\hat Y_{\geq t} \sim \hat p_{\alpha, \hat f; X}^\text{(ent)}(\cdot \mid Y_{< t})}\left[-\log \hat p_{\alpha, \hat f; X}^\text{(ent)}(Y_{< t}, \hat Y_{\geq t})\right]  \Bigg| \leq (1 + \alpha_t)\eps + 2\delta.
        \end{align*}
        \item Log loss improvement: letting $\alpha^{(t)} = (0, ..., 0, \alpha_t, ..., \alpha_T)$ and $\hat f^{(t)} = (0, ..., 0, \hat f_{t+1}, ..., \hat f_{T+1})$, we have
        \begin{align*}
            \cL\left(p^*\ \|\ \hat p_{\alpha^{(t)}, \hat f^{(t)}}^\text{(ent)}\right) \leq \cL\left(p^*\ \|\ \hat p_{\alpha^{(t+1)}, \hat f^{(t+1)}}^\text{(ent)}\right).
        \end{align*}
    \end{enumerate}
    The theorem follows from combining these inequalities for all $t$: first, to show that log loss improves, it suffices to apply inequality (c) (log loss improvement) for all $t$, where $\hat p_{\alpha^{(1)}, \hat f^{(1)}}^\text{(ent)} = \hat p_{\alpha, \hat f}^\text{(ent)}$ and $\hat p_{\alpha^{(T+1)}, \hat f^{(T+1)}}^\text{(ent)} = \hat p$. Similarly, the calibration result follows from applying inequality (b) (calibration bound) for all $t$ with triangle inequality:
    \begin{align*}
        \left| \text{EntCE}\left(p^*\ \|\ \hat p_{\alpha, \hat f}^\text{(ent)} \right) \right| &= \left| \EE_{X \sim q} \EE_{Y \sim p^*_X}\left[- \log \hat p_{\alpha, \hat f; X}^\text{(ent)}(Y)\right] - \EE_{X \sim q} \EE_{Y \sim \hat p_{\alpha, \hat f; X}^\text{(ent)}(Y)}\left[- \log \hat p_{\alpha, \hat f; X}^\text{(ent)}(Y)\right] \right| \\
        &= \Bigg| \sum_{t=1}^T \EE_{X \sim q} \EE_{Y_{\leq t} \sim p^*_X} \EE_{\hat Y_{>t} \sim \hat p_{\alpha, \hat f; X}^\text{(ent)}(\cdot \mid Y_{\leq t})} \left[- \log \hat p_{\alpha, \hat f; X}^\text{(ent)}(Y_{\leq t}, \hat Y_{> t})\right] \\
        &\qquad- \EE_{X \sim q} \EE_{Y_{< t} \sim p^*_X} \EE_{\hat Y_{\geq t} \sim \hat p_{\alpha, \hat f; X}^\text{(ent)}(\cdot \mid Y_{< t})} \left[- \log \hat p_{\alpha, \hat f; X}^\text{(ent)}(Y_{< t}, \hat Y_{\geq t})\right]  \Bigg| \\
        &\leq \sum_{t=1}^T[(1 + \alpha_t)\eps + 2\delta].
    \end{align*}
Then, showing inequalities (a), (b), and (c) for all $t$ completes the proof. First, note that if inequality (a) (prediction error bound) holds for all $t$, then the other two inequalities follow directly from the lemmas: inequality (a) ensures the condition in Lemma~\ref{lemma:compute-gradient} is satisfied, directly proving inequality (b) (calibration bound). Inequality (c) (log loss improvement) follows from the fact that $\alpha_T$ is chosen via $\argmin_{\alpha_T'} \cL_T\left( p^*\ \|\ \hat p_{\alpha', \hat f}^\text{(ent)}\right)$, which by Lemma~\ref{lemma:logloss} is equivalent to minimizing the overall log loss $\cL\left( p^*\ \|\ \hat p_{\alpha', \hat f}^\text{(ent)}\right)$.

To show inequality (a) (prediction error bound), first note that for $t = T$, it holds trivially because the future entropy is $0$. For $t = 1, ..., T-1$, the prediction error bound follows directly from applying Lemma~\ref{lemma:fitting} for each $\hat f_{t+1, v}$ for $v \in \cV$, where each noisy future entropy label computed via parallel sampling (Algorithm~\ref{alg:futureentest}) has mean equal to the future entropy and is bounded by $(T - t) \log \cV$.
\end{proof}
The proofs of the three lemmas proceed as follows:
\begin{proof}[Proof of Lemma~\ref{lemma:compute-gradient}]
    Taking the derivative of the log loss $\cL_t$ with respect to $\alpha_t$, we have
    \begin{align*}
        \eps &\geq \left| \frac{d}{d\alpha_t} \cL_t\left(p^*\ \|\ \hat p_{\alpha, \hat f}^\text{(ent)}\right) \right| \\
        &= \left| \frac{d}{d\alpha_t} \EE_{X \sim q} \EE_{Y \sim p^*_X} [- \mathbb{1}_{Y_t}(\cdot)^T\log \text{softmax}( (1 + \alpha_t) \log \hat p_X(\cdot \mid Y_{<t}) - \alpha_t \hat f_{t+1}(X, [Y_{< t},\cdot]) )] \right| \\
        &= \left| \EE_{X \sim q} \EE_{Y \sim p^*_X}\left[ -\left( \mathbb{1}_{Y_t}(\cdot) - \hat p_{\alpha, \hat f; X}^\text{(ent)}(\cdot \mid Y_{<t}) \right)^T(\log \hat p_X(\cdot \mid Y_{<t}) - \hat f_{t+1}(X, [Y_{< t}, \cdot])) \right]  \right|,
    \end{align*}
    where we use $f(\cdot) \in \RR^{|\cV|}$ to denote the vector $[f(v)]_{v \in \cV}$, the indicator function is given by $\mathbb{1}_{Y_t}(v) = 1$ iff $Y_t = v$, and $\text{softmax}: \RR^{|\cV|} \to \RR^{|\cV|}$ applies the softmax operation, which exponentiates each entry and then normalizes the vector by its sum. Splitting this term into two expectations results in the expression
    \begin{align*}
        &= \Bigg| \EE_{X \sim q} \EE_{Y_{\leq t} \sim p^*_X}\left[ -(\log \hat p_X(Y_t \mid Y_{<t}) - \hat f_{t+1}(X, Y_{\leq t})) \right] \\
        &- \EE_{X \sim q} \EE_{Y_{< t} \sim p^*_X} \EE_{\hat Y_t \sim \hat p_{\alpha, \hat f; X}^\text{(ent)}(\cdot \mid Y_{<t})}\left[ -(\log \hat p_X(\hat Y_t \mid Y_{<t}) - \hat f_{t+1}(X, [Y_{< t}, \hat Y_t])) \right] \Bigg|,
    \end{align*}
    where the two terms differ in whether $Y_t \sim p^*$ or $\hat Y_t \sim \hat p_{\alpha, \hat f; X}^\text{(ent)}$. Multiplying both sides by $(1 + \alpha_t)$, we have
    \begin{align*}
        (1 + \alpha_t) \eps &\geq \Bigg| \EE_{X \sim q} \EE_{Y_{\leq t} \sim p^*_X}\left[ -(1 + \alpha_t)(\log \hat p_X(Y_t \mid Y_{<t}) - \hat f_{t+1}(X, Y_{\leq t})) \right] \\
        &- \EE_{X \sim q} \EE_{Y_{< t} \sim p^*_X} \EE_{\hat Y_t \sim \hat p_{\alpha, \hat f; X}^\text{(ent)}(\cdot \mid Y_{<t})}\left[ -(1 + \alpha_t)(\log \hat p_X(\hat Y_t \mid Y_{<t}) - \hat f_{t+1}(X, [Y_{< t}, \hat Y_t])) \right] \Bigg|.
    \end{align*}
    Next, noticing that both expressions include unnormalized logits for the distribution $p_{\alpha, \hat f; X}^\text{(ent)}$ applied to either $Y_t$ or $\hat Y_t$, we can subtract the same normalizing constant from both expressions, resulting in
    \begin{align*}
        &= \Bigg| \EE_{X \sim q} \EE_{Y_{\leq t} \sim p^*_X}\left[ -\left(\log\hat p_{\alpha, \hat f; X}^\text{(ent)}(Y_t \mid Y_{<t}) - \hat f_{t+1}(X, Y_{\leq t}) \right) \right] \\
        &- \EE_{X \sim q} \EE_{Y_{< t} \sim p^*_X} \EE_{\hat Y_t \sim \hat p_{\alpha, \hat f; X}^\text{(ent)}(\cdot \mid Y_{<t})}\left[ -\left(\log \hat p_{\alpha, \hat f; X}^\text{(ent)}(\hat Y_t \mid Y_{<t}) - \hat f_{t+1}(X, [Y_{< t}, \hat Y_t])\right) \right] \Bigg|.
    \end{align*}
    Next, to turn each conditional probability into a joint probability, we can add $\EE_{X \sim q} \EE_{Y_{\leq t} \sim p^*_X}\left[ -\log\hat p_{\alpha, \hat f; X}^\text{(ent)}(Y_{<t}) \right]$ to both expressions:
    \begin{align*}
        &= \Bigg| \EE_{X \sim q} \EE_{Y_{\leq t} \sim p^*_X}\left[ -\left(\log\hat p_{\alpha, \hat f; X}^\text{(ent)}(Y_t, Y_{<t}) - \hat f_{t+1}(X, Y_{\leq t}) \right) \right] \\
        &- \EE_{X \sim q} \EE_{Y_{< t} \sim p^*_X} \EE_{\hat Y_t \sim \hat p_{\alpha, \hat f; X}^\text{(ent)}(\cdot \mid Y_{<t})}\left[ -\left(\log \hat p_{\alpha, \hat f; X}^\text{(ent)}(\hat Y_t, Y_{<t}) - \hat f_{t+1}(X, [Y_{< t}, \hat Y_t])\right) \right] \Bigg|.
    \end{align*}
    At this point, we can use the fact that $\hat f_{t+1}$ is within $\delta$ of the future entropy (in expectation over $X \sim q,\ Y_{<t} \sim p^*_X$ and uniformly over $Y_t$) to produce the bound
    \begin{align*}
        (1 + \alpha_t)\eps &+ 2\delta \geq \Bigg| \EE_{X \sim q} \EE_{Y_{\leq t} \sim p^*_X}\left[ -\left(\log \hat p_{\alpha, \hat f; X}^\text{(ent)}(Y_t, Y_{<t}) - H_{\hat p_{\alpha, \hat f; X}^\text{(ent)}}(Y_{>t} \mid Y_{\leq t}) \right) \right] \\
        &- \EE_{X \sim q} \EE_{Y_{< t} \sim p^*_X} \EE_{\hat Y_t \sim \hat p_{\alpha, \hat f; X}^\text{(ent)}(\cdot \mid Y_{<t})}\left[ -\left(\log \hat p_{\alpha, \hat f; X}^\text{(ent)}(\hat Y_t, Y_{<t}) - H_{\hat p_{\alpha, \hat f; X}^\text{(ent)}}(Y_{>t} \mid [Y_{< t}, \hat Y_t])\right) \right] \Bigg|.
    \end{align*}
    Finally, note that the future entropy is defined as
    \begin{align*}
        H_{\hat p_{\alpha, \hat f; X}^\text{(ent)}}(Y_{>t} \mid Y_{\leq t}) &= \EE_{\hat Y_{>t} \sim \hat p_{\alpha, \hat f; X}^\text{(ent)}(\cdot \mid Y_{\leq t})}\left[- \log \hat p_{\alpha, \hat f; X}^\text{(ent)}(\hat Y_{>t} \mid Y_{\leq t})\right],
    \end{align*}
    which we can substitute into the previous equation to produce the desired result:
    \begin{align*}
        (1 + \alpha_t)\eps + 2\delta &\geq \Bigg| \EE_{X \sim q} \EE_{Y_{\leq t} \sim p^*_X} \EE_{\hat Y_{>t} \sim \hat p_{\alpha, \hat f; X}^\text{(ent)}(\cdot \mid Y_{\leq t})}\left[ -\log \hat p_{\alpha, \hat f; X}^\text{(ent)}(\hat Y_{>t}, Y_t, Y_{<t}) \right] \\
        &- \EE_{X \sim q} \EE_{Y_{< t} \sim p^*_X} \EE_{\hat Y_{\geq t} \sim \hat p_{\alpha, \hat f; X}^\text{(ent)}(\cdot \mid Y_{<t})}\left[ -\log \hat p_{\alpha, \hat f; X}^\text{(ent)}(\hat Y_{>t}, \hat Y_t, Y_{<t}) \right] \Bigg|.
    \end{align*}
\end{proof}
\begin{proof}[Proof of Lemma~\ref{lemma:logloss}]
    Let $t$ denote the time step of interest. Writing the full log loss as a sum over $s$, we have
    \begin{align*}
        \cL\left(p^*\ \|\ \hat p_{\alpha, \hat f}^\text{(ent)}\right) &= \sum_{s=1}^T \cL_s\left(p^*\ \|\ \hat p_{\alpha, \hat f}^\text{(ent)}\right).
    \end{align*}
    By the definition of $\hat p_{\alpha, \hat f}^\text{(ent)}$, the $t$-th parameter $\alpha_{t}$ has no effect on summands $s \neq t$. Therefore, optimizing the entire sum is equivalent to optimizing only the summand corresponding to $s = t$, proving the desired result.
\end{proof}
\begin{proof}[Proof of Lemma~\ref{lemma:fitting}]
    First, to show that
    \begin{align*}
        H_{\hat p_{\alpha, \hat f; X}^\text{(ent)}}(Y_{>t-1} \mid Y_{\leq t-1}) = H_{\hat p_{\alpha', \hat f'; X}^\text{(ent)}}(Y_{>t-1} \mid Y_{\leq t-1})
    \end{align*}
    where $\alpha', \hat f'$ are the results of zeroing out the first $t-1$ entries of $\alpha, \hat f$, we can simply write out the definition of the future entropy:
    \begin{align*}
        H_{\hat p_{\alpha, \hat f; X}^\text{(ent)}}(Y_{>t-1} \mid Y_{\leq t-1}) &= \EE_{\hat Y_{>t-1} \sim \hat p_{\alpha, \hat f; X}^\text{(ent)}(\cdot \mid Y_{\leq t-1})}\left[- \log \hat p_{\alpha, \hat f; X}^\text{(ent)}(\hat Y_{>t-1} \mid Y_{\leq t-1})\right],
    \end{align*}
    where we can write out the probability as
    \begin{align*}
        \hat p_{\alpha, \hat f; X}^\text{(ent)}(\hat Y_{>t-1} \mid Y_{\leq t-1}) &= \prod_{s = t}^T \hat p_{\alpha, \hat f; X}^\text{(ent)}(\hat Y_s \mid Y_{\leq t-1}, \hat Y_{t,...,s-1}) \\
        &= \prod_{s = t}^T \mathbb{1}_{\hat Y_s}^T \text{softmax}\bigg((1 + \alpha_s) \log \hat p_X(\cdot \mid Y_{\leq t-1}, \hat Y_{t,...,s-1}) \\
        &- \alpha_s \hat f_{s+1}(X, [Y_{\leq t-1}, \hat Y_{t,...,s-1}, \cdot] \bigg).
    \end{align*}
    This expression has no dependence on the first $t-1$ entries $\alpha_1, ..., \alpha_{t-1}$ of $\alpha$, and no dependence on the first $t-1$ entries $\hat f_2, ..., \hat f_t$ of $\hat f$, proving the first half of the lemma.

    The second half of the lemma follows directly from applying Assumption~\ref{assumption:fitting}, where $\alpha', \hat f'$ and $\alpha, \hat f$ can be interchanged by the fact that their future entropies over steps $t, ..., T$ are the same.
\end{proof}

\section{Derivations}\label{appendix:connection}
\subsection{Entropy calibration from binary calibration}
Recall that for a binary classifier $\hat f: \cX \to [0,1]$, where $f^*: \cX \to [0,1]$ denotes the true conditional distribution, binary calibration asks whether the model's probability corresponds to the actual fraction of ones in reality:
\begin{align*}
    \EE_{X \sim q}\EE_{Y \sim f^*_X}[Y \mid \hat f_X = p] = p.
\end{align*}
First, note that the right hand side can be replaced by
\begin{align*}
    \EE_{X \sim q}\EE_{Y \sim f^*_X}[Y \mid \hat f_X = p] = \EE_{X \sim q}\EE_{\hat Y \sim \hat f_X}[\hat Y \mid \hat f_X = p].
\end{align*}
Next, we can weaken this requirement by making the expectation non-conditional, or
\begin{align*}
    \EE_{X \sim q}\EE_{Y \sim f^*_X}Y = \EE_{X \sim q}\EE_{\hat Y \sim \hat f_X}\hat Y,
\end{align*}
which simply asks whether the overall rate of ones under $\hat f$ is the same as the overall rate of ones in reality. The most natural extension of this definition to multiclass calibration is top-class calibration, 
\begin{align*}
    \EE_{X \sim q}\EE_{Y \sim f^*_X}\left[\mathbb{1}\left\{Y = \argmax_{y'} \hat f_X(y')\right\} \mid \max_{y'} \hat f_X(y') = p\right] = p,
\end{align*}
which states that across all instances where the model assigns $p$ probability to the top class, the actual label should be equal to the top class $p$ fraction of the time on average. Like before, we can replace the right hand side by
\begin{align*}
    &\EE_{X \sim q}\EE_{Y \sim f^*_X}\left[\mathbb{1}\left\{Y = \argmax_{y'} \hat f_X(y')\right\} \mid \max_{y'} \hat f_X(y') = p\right] \\
    &\quad = \EE_{X \sim q}\EE_{\hat Y \sim \hat f_X}\left[\mathbb{1}\left\{\hat Y = \argmax_{y'} \hat f_X(y')\right\} \mid \max_{y'} \hat f_X(y') = p\right],
\end{align*}
where $Y \sim f^*_X$ and $\hat Y \sim \hat f_X$ are interchanged. In this expression, the top class probability $\max_{y'} \hat f_X(y')$ can be thought of as the confidence of $\hat f_X$, while the zero-one loss function $\mathbb{1}\left\{\hat Y = \argmax_{y'} \hat f_X(y')\right\}$ defines the metric the confidence should be calibrated to --- the model's confidence should correspond to the loss it incurs in reality. For language models, it is natural to replace the zero-one loss with the log loss, which produces the definition
\begin{align*}
    \EE_{X \sim q}\EE_{Y \sim f^*_X}\left[-\log \hat f_X(Y) \mid H(f_X) = h\right] &= h \\
    &= \EE_{X \sim q}\EE_{\hat Y \sim \hat f_X}\left[-\log \hat f_X(\hat Y) \mid H(f_X) = h\right],
\end{align*}
which asks whether the model's entropy corresponds to the log loss it incurs in reality. We study the unconditional version of this definition
\begin{align*}
    \EE_{X \sim q}\EE_{Y \sim f^*_X}\left[-\log \hat f_X(Y) \right] = \EE_{X \sim q}\EE_{\hat Y \sim \hat f_X}\left[-\log \hat f_X(\hat Y) \right],
\end{align*}
which simply asks whether the model's entropy matches its log loss on average. We study unconditional calibration for simplicity, but the same techniques to calibrate unconditionally would likely work for conditional calibration as well if one buckets the inputs $X$ by their entropy $H(f_X)$.

\subsection{Future entropy adjustment from global temperature adjustment}
To derive future entropy adjustment from global temperature adjustment, recall that the global temperature adjustment with respect to inverse temperature $\alpha$ (where $\tau = 1 / (1 + \alpha)$) is given by
\begin{align*}
    p_{\alpha}^\text{(global)}(Y_1, ..., Y_T) = \frac{p(Y_1, ..., Y_T)^{1+\alpha}}{\sum_{Y' \in \cV^T} p(Y_1', ..., Y_T')^{1+\alpha}}.
\end{align*}
Factoring this joint distribution into a conditional distribution for each $t$, we have
\begin{align*}
    \log p_{\alpha}^\text{(global)}(Y_t \mid Y_{<t}) = \log \frac{ \sum_{Y_{>t}}p(Y_{<t}, Y_t, Y_{>t})^{1+\alpha}}{\sum_{Y'_t, Y_{>t}}p(Y_{<t}, Y'_t, Y_{>t})^{1+\alpha}}.
\end{align*}
Taking the gradient of the log probability with respect to $\alpha$, we have
\begin{align*}
    \frac{d}{d\alpha}\log p_{\alpha}^\text{(global)}(Y_t \mid Y_{<t}) &= \text{softmax}\left\{ (1+\alpha) \log p(Y_{<t}, Y_t, Y_{>t} = \cdot) \right\}^T\log p(Y_{<t}, Y_t, Y_{>t} = \cdot) \\
    &- \text{softmax}\left\{ (1+\alpha) \log p(Y_{<t}, [Y_t, Y_{>t}] = \cdot) \right\}^T\log p(Y_{<t}, [Y_t, Y_{>t}] = \cdot),
\end{align*}
where the first softmax is over $Y_{>t}$ and the second softmax is over both $Y_t$ and $Y_{>t}$. Simplifying this expression results in
\begin{align*}
    &= \log p(Y_t \mid Y_{<t}) + \EE_{Y_{>t} \sim p_\alpha^\text{(global)}(\cdot \mid Y_{\leq t})}\log p(Y_{>t} \mid Y_{\leq t}) - \EE_{Y_{\geq t} \sim p_\alpha^\text{(global)}(\cdot \mid Y_{< t})}\log p(Y_{\geq t} \mid Y_{< t}).
\end{align*}
Then, the first-order approximation of $\log p_{\alpha}^\text{(global)}(Y_t \mid Y_{<t})$ centered around $\alpha=0$ is given by
\begin{align*}
    \log p_{\alpha}^\text{(global)}(Y_t \mid Y_{<t}) &\approx \log p_{\alpha=0}^\text{(global)}(Y_t \mid Y_{<t}) + \alpha \frac{d}{d\alpha}\log p_{\alpha}^\text{(global)}(Y_t \mid Y_{<t}) \Bigg|_{\alpha=0} \\
    &= \log p(Y_t \mid Y_{<t}) \\
    &+ \alpha\bigg[\log p(Y_t \mid Y_{<t}) + \EE_{Y_{>t} \sim p_{\alpha=0}^\text{(global)}(\cdot \mid Y_{\leq t})}\log p(Y_{>t} \mid Y_{\leq t}) \\
    &- \EE_{Y_{\geq t} \sim p_{\alpha=0}^\text{(global)}(\cdot \mid Y_{< t})}\log p(Y_{\geq t} \mid Y_{< t}) \bigg] \\
    &= (1 + \alpha) \log p(Y_t \mid Y_{<t}) - \alpha \EE_{Y_{>t} \sim p(\cdot \mid Y_{\leq t})}[-\log p(Y_{>t} \mid Y_{\leq t})] + C_{Y_{\leq t}},
\end{align*}
where the final term is constant with respect to $Y_t$.

\subsection{Future entropy adjustment from MaxEnt RL}
The future entropy adjustment can also be derived in the MaxEnt RL framework~\citep{ziebart-2008-maximum}, where the reward function is given by $r(x,y) = \log \hat p_x(y)$ with $\hat p$ denoting the base model. Specifically, we can write the MaxEnt RL objective as
\begin{align*}
    \max_{\tilde p} \EE_{X\sim q}\EE_{Y\sim\tilde p_X} r_X(Y) - \alpha \text{KL}(\tilde p\ \|\ \hat p).
\end{align*}
Then, the value function for this objective is given by
\begin{align*}
    V_X(Y_{\leq t}) &= \EE_{Y_{>t} \sim \tilde p_X(Y_{>t} \mid Y_{\leq t})} \log \hat p_X(Y_{>t} \mid Y_{\leq t}),
\end{align*}
and the Q function is given by
\begin{align*}
    Q_X(Y_{< t}, Y_t) &= r_X(Y_t \mid Y_{< t}) + V_X(Y_{\leq t}) \\
    &= \log \hat p_X(Y_t \mid Y_{< t}) + \EE_{Y_{>t} \sim \tilde p_X(Y_{>t} \mid Y_{\leq t})} \log \hat p_X(Y_{>t} \mid Y_{\leq t}).
\end{align*}
Using this Q function to define the KL-regularized policy then results in
\begin{align*}
    \tilde p_{\alpha; X}(Y_t \mid Y_{<t}) &\propto \exp\left\{\log \hat p_X(Y_t \mid Y_{<t}) + \alpha Q_X(Y_{<t}, Y_t) \right\} \\
    &= \exp\left\{(1 + \alpha) \log \hat p_X(Y_t \mid Y_{<t}) - \alpha \EE_{Y_{>t} \sim \tilde p_{\alpha; X}(Y_{>t} \mid Y_{\leq t})} [-\log \hat p_X(Y_{>t} \mid Y_{\leq t})] \right\},
\end{align*}
which is the future entropy adjustment.

\subsection{Scaling in the simplified setting}
Recall our simplified setup: the model sees $m$ tokens drawn i.i.d.\ from an $\alpha$ power law distribution over a vocabulary of size $v$, and it stores the count of each token it sees. At generation time, the model generates a sequence of length $L$ as follows: if the context contains only tokens the model has seen more than once, it behaves normally and produces the next token according to its fitted unigram distribution. But if the context contains at least one token that the model saw only once, then it instead produces the next tokens according to some derailed distribution with entropy larger by some constant $C_H$.

First, if the per-step derailing probability $q$ is small, we can compute expected entropy at time $t$ as follows using the binomial approximation:
\begin{align*}
    H_t(\hat p) &= (1 - q)^t H_0 + (1 - (1-q)^t) (H_0 + C_H) \\
    &\approx (1 - qt) H_0 + (1 - (1 - qt)) (H_0 + C_H) \\
    &= H_0 + qt C_H,
\end{align*}
so the overall miscalibration is given by
\begin{align*}
    \sum_{t=1}^L H_t(\hat p) - H_0 &= \sum_{t=1}^L qtC_H \\
    &= q C_H \frac{L(L-1)}{2}.
\end{align*}
Next, to characterize the scaling of the expected per-step derailing probability $q$ with respect to dataset size $m$, we first note that
\begin{align*}
    q = \frac{K_{m,1}}{m},
\end{align*}
where $K_{m,1}$ is a random variable denoting the number of items seen exactly once in the training set of size $m$. Taking the expectation with respect to random draws of the training set, we have
\begin{align*}
    \EE K_{m,1} &= \EE \sum_{i=1}^v \mathbb{1}\{\text{count}_m(i) = 1\} \\
    &= \sum_{i=1}^v \EE \mathbb{1}\{\text{count}_m(i) = 1\} \\
    &= \sum_{i=1}^v mp_i(1 - p_i)^{m-1},
\end{align*}
where $p_i = Z / i^\alpha$ is the power law probability of the $i$th item, with $Z = \sum_{i=1}^v 1/i^\alpha$ denoting the normalizing constant. Next, taking $v \to \infty$ following the infinite urn setup in \citet{good-1953-population,karlin-1967-central}, we compute
\begin{align*}
    \int_{i=1}^\infty mp_i(1 - p_i)^{m-1} di &= \int_{i=1}^\infty mZi^{-\alpha}(1 - Zi^{-\alpha})^{m-1} di \\
    &= \frac{1}{\alpha} Z^{\frac{1}{\alpha}} (m-1)^{\frac{1}{\alpha}} \gamma(1-1/\alpha, (m-1) Z),
\end{align*}
where
\begin{align*}
    \gamma(a, x) &= \int_0^x t^{a-1}e^{-t}dt
\end{align*}
is the lower incomplete gamma function. Taking $m \to \infty$ and using the fact that $\gamma(a, x) \to \Gamma(a)$ for $x \to \infty$, we have that
\begin{align*}
    \EE \frac{K_{m,1}}{m} \sim \frac{1}{\alpha} Z^{\frac{1}{\alpha}} m^{\frac{1}{\alpha}-1} \Gamma(1-1/\alpha),
\end{align*}
as desired. This expression can also be found in Equation~{23} of \citet{karlin-1967-central}.

\section{Experimental details}\label{appendix:experimental}
We study four model families (\textbf{Qwen2.5} (0.5B, 1.5B, 3B, 7B, 14B, 32B, 72B)~\citep{qwen2025qwen25technicalreport}, \textbf{Llama~3} (1B, 3B, 8B, 70B)~\citep{grattafiori2024llama3herdmodels}, \textbf{Llama~2} (7B, 13B, 70B)~\citep{touvron2023llama2openfoundation}, and \textbf{Pythia} (410M, 1.4B, 2.8B, 6.9B, 12B)~\citep{biderman2023pythiasuiteanalyzinglarge}) applied to the following three datasets:
\begin{enumerate}[(a)]
    \item \textbf{WikiText-103}~\citep{merity2016pointersentinelmixturemodels}: given 128 tokens of context from a Wikipedia passage, the model is tasked with completing the passage.
    \item \textbf{WritingPrompts}~\citep{fan-etal-2018-hierarchical}: given a writing prompt from the writingprompts subreddit along with 128 tokens of context from a human-written story, the model is tasked with completing the story.
    \item \textbf{CodeContests}~\citep{li-2022-competition}: given a coding problem from one of five websites (e.g.\ Codeforces) and 128 tokens of context from a human-written solution, the model is tasked with completing the solution.
\end{enumerate}
In each setting, we use 5000 examples and limit the generation to at most 1024 tokens. For generation we use vLLM~\citep{kwon2023efficient} with the xFormers attention kernel~\citep{xFormers2022} and no quantization, and we use HuggingFace~\citep{wolf-etal-2020-transformers} with 4-bit quantization~\citep{dettmers2022bnb} to compute logprobs. All experiments are run using PyTorch~\citep{pytorch-2019}, and all plots are produced using Matplotlib~\citep{Hunter:2007}. For better readability, the entropy over time plots (Figure~\ref{figure:ent-over-time}) are produced with exponential smoothing ($\alpha=0.2$). All experiments are run on 1-4 NVIDIA-A100-SXM4-80GB GPUs, or 1-4 NVIDIA RTX 6000 Ada Generation 49.1GB GPUs.

\section{Example generations}\label{appendix:examples}

In this section, we print excerpts from generations of Qwen2.5-14B applied to WikiText, where we choose three random excerpts each from high, medium, and low entropy buckets (i.e.\ randomly chosen from the first, 16th, and 32nd entropy buckets). Qualitatively, low entropy generations are either repetitive or contain verbatim copies of the training set, medium entropy generations are high quality, and high entropy generations are incoherent.

\par\noindent\rule{\textwidth}{0.4pt}

\textbf{Low entropy}:

\par\noindent\rule{\textwidth}{0.4pt}

\textbf{Generation (Entropy=0.548)}: \\
- 3.15 Tropical Storm Mischa\\
- 3.16 Tropical Storm Nigel\\
- 3.17 Tropical Depression Seventeen\\
- 3.18 Tropical Storm Patty\\
- 3.19 Hurricane Rupert\\
- 3.20 Tropical Storm Sarah\\
- 3.21 Tropical Storm Tory\\
- 3.22 Tropical Storm Whitney\\
- 3.23 Tropical Depression Twenty-two\\
- 3.24 Tropical Storm Vince\\
- 3.25 Tropical Storm Wiloma\\
- 4 Impact\\
- 5 Season effects\\
- 6

\par\noindent\rule{\textwidth}{0.4pt}

\textbf{Generation (Entropy=0.068)}: \\
= Performance Review =\\
"Performance Review" is the eighth episode of the second season of the American comedy television series The Office, and the show's fourteenth episode overall. It was written by Larry Wilmore and directed by Paul Feig. It first aired on November 15, 2005 on NBC. The episode guest stars Melora Hardin as Jan Levinson.\\
The series depicts the everyday lives of office employees in the Scranton, Pennsylvania branch of the fictional Dunder Mifflin Paper Company. In this episode, Michael Scott (Steve Carell) conducts job performance reviews with his employees, and struggles

\par\noindent\rule{\textwidth}{0.4pt}

\textbf{Generation (Entropy=0.817)}:  rebuilt superstructure in a pagoda mast style, displacing her to 32,000 t. These modifications brought her speed down to 21.5 kn (39.8 km/h; 24.7 mph), causing her to be assigned to second-line duties, conducting training operations through 1939.\\
Following the outbreak of World War II in 1941, Yamashiro took part in the Indochina Incident in late 1940 and early 1941. Shortly before the attack on Pearl Harbor and the Japanese entrance into the war, she conducted

\par\noindent\rule{\textwidth}{0.4pt}

\textbf{Medium entropy:}

\par\noindent\rule{\textwidth}{0.4pt}

\textbf{Generation (Entropy=2.574)}:  season, any confrontation between contestants or Gleib during a stunt will lead to a girl screaming briefly in anguish before leaving the set for the rest of the game. While only a few teams have reached this period of the game while the game transitioned to a conclusive period, which concluded with Gleib instigating one team to perform another stunt during the bonus time. The winnings range from \$500 to \$5,000 for each round.\\
Idiotest debuted on August 13, 2014, airing on GSN. An official showcase occurred the following evening on August 19, featuring the

\par\noindent\rule{\textwidth}{0.4pt}

\textbf{Generation (Entropy=2.557)}:  measured only across the glacier, not along the PIG's length, and the cross-x data are interpolated.)\\
In May 2006, scientists found an increase of 1iq Celsius over warm ocean currents surrounding Antarctica -- an average of almost .1 iqC warming over the last 100 years.\\
In 2005, University of Bristol (UK) researchers report, "Recent changes in Antarctic ice streams" and found that "This slowing was likely driven by a piece of ice shelf breaking away from Pine Island Glacier. However, the slow down was only temporary and the effect seemed only to have been temporary."

\par\noindent\rule{\textwidth}{0.4pt}

\textbf{Generation (Entropy=2.522)}:  premiere of the thirteenth season and then departed permanently, as part of a major overhaul of the cast. She returned in a guest-acting role for the show's series finale. Abby first appeared on television in June 1979, two years after Jacobs created Dallas, a series about Texas oilmen whose motivations were less virtuous than its male and female leads. He used the same theme for Knots Landing, however, the series was more regulated and politically correct. Whereas Chester's antagonists were generally viewed as brutish or psychologically ill, Abby was by definition the rich, glamorous and cunning oil tycoon's daughter;

\par\noindent\rule{\textwidth}{0.4pt}

\textbf{High entropy:}

\par\noindent\rule{\textwidth}{0.4pt}

\textbf{Generation (Entropy=4.922)}:  the Common who reads them. It made our reading easy to carry the inflection marks to comwith al-Fa$\Box$$\Box$ pratient al-Qay$\Box$ari pensal bearing 'the Mariacheron of every native' ``alchemy'' $\Box$ the knowledge of formation through the transformation of macroscopic matter in molten liquid but usually precipitated by boiling at low temperatures for fluorine is not involved in the mineral as clays'' is the quality to a word of decoration to have a heart of rock. $\Box$Then he commits the inflection marks to a reasonable argument about the $\Box$diocean $\Box$ mugeatun; represent letter $\Box$ then

\par\noindent\rule{\textwidth}{0.4pt}

\textbf{Generation (Entropy=7.606)}: +vZpaufcxgoo$\Box$400$\Box$$\Box$ivril.$\Box$$\Box$$\Box$$\Box$wxiaoB("'d:ikr.dehktober.
1.$\Box$$\Box$$\Box$0.z50web/pist$\Box$.cs.html$\Box$W$\Box$$\Box$$\Box$$\Box$$\Box$$\Box$7$\Box$ $\Box$$\Box$L'.$\Box$$\Box$er.qbe ))PointShowriksedid$\Box$.$\Box$2)com 2016tanatton/l*tiservizs<sane $\Box$$\Box$t $\Box$$\Box$ $\Box$$\Box$fs=1.$\Box$.tele$\Box$$\Box$,$\Box$$\Box$$\Box$$\Box$$\Box$$\Box$$\Box$$\Box$$\Box$2.1 $\Box$coordPt="]01

\par\noindent\rule{\textwidth}{0.4pt}

\textbf{Generation (Entropy=5.031)}:  varieties like Lemon Pop, Canadian Grape, Peruvian Peach, Kiwi - tossed with autoimmune syrups and served on the rocks.\\
16. Maple Syrup recipe\\
When on the cuban lemons growing in the homemade garden and the layout of the lemons personalized with the heart graphics are some of the items around the quarters. When it comes to the Lijoy mosquit$\Box$were magazines\\
which are not available anywhere. But just offing Nashville\\
You need them to find their way.\\
Polymorphous wonders of the Cross Shades Align by universal rights and bound package for commerciality.\\
You may be happy Be fit

\par\noindent\rule{\textwidth}{0.4pt}

\newpage
\section{Additional Experiments}\label{appendix:additional}

\begin{figure*}[h!]
\begin{center}
\centerline{\includegraphics[width=\linewidth]{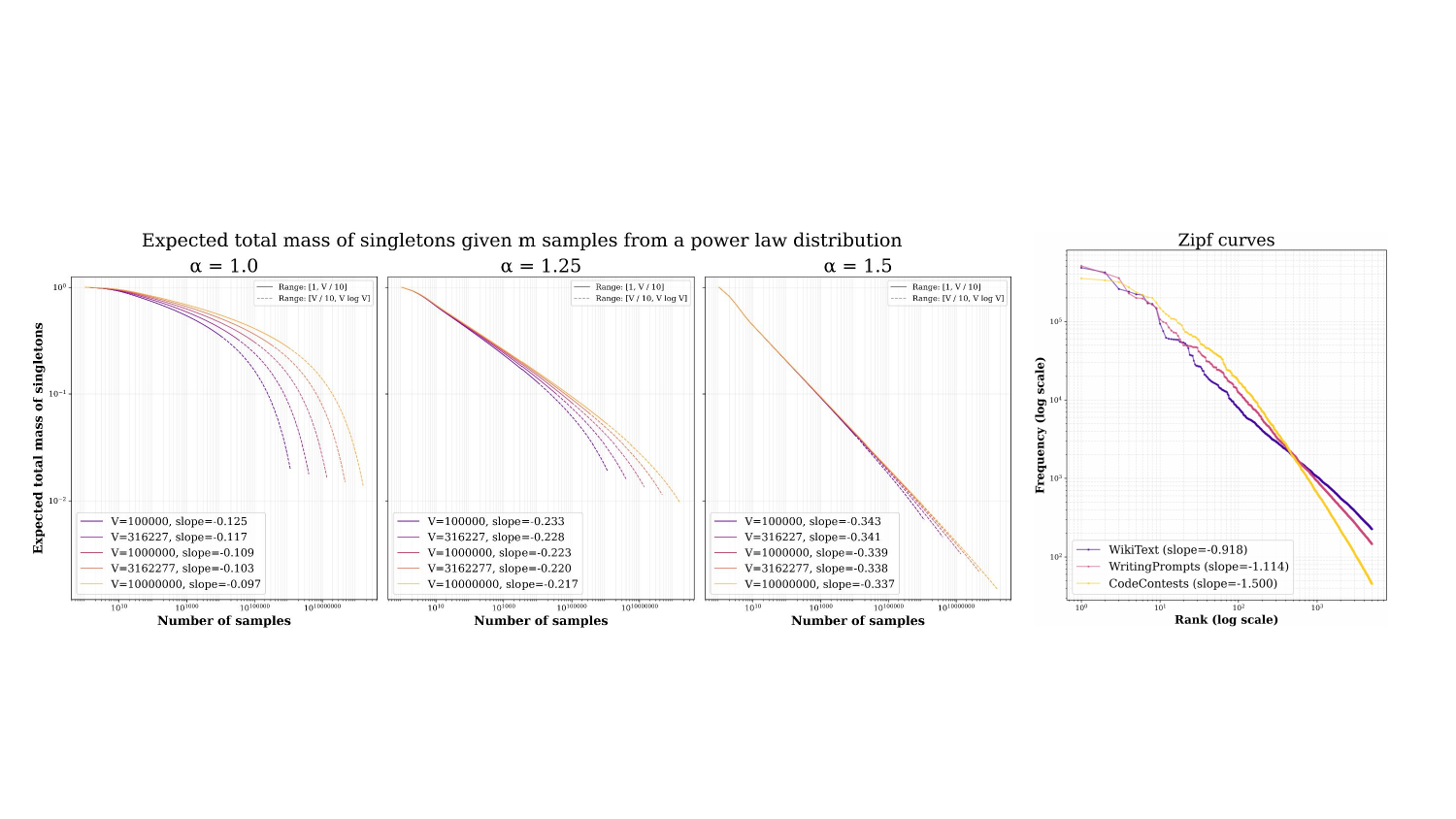}}
\caption{Left: the expected total mass of tokens seen exactly once, given m samples from a power law distribution over a vocabulary of size $v$, for three settings of the power law exponent $\alpha = 1.0, 1.25, 1.5$. Their relationship is roughly log-log linear up to $m \approx v/3$, with slope slightly steeper than the asymptotic expression of $1/\alpha - 1$. Right: log frequency versus log rank of the top 5000 unigrams in three datasets. The power law exponent $\alpha$, given by the slope of each curve, is close to $1$ for WikiText and WritingPrompts, while it is $1.5$ for CodeContests, suggesting that text has heavier tails than code. Together, these plots suggest that the singleton mass should decay more slowly with $m$ for WikiText and WritingPrompts than for CodeContests.}
\label{figure:singleton}
\end{center}
\vskip -0.2in
\end{figure*}

\begin{figure*}[h!]
\begin{center}
\centerline{\includegraphics[width=\linewidth]{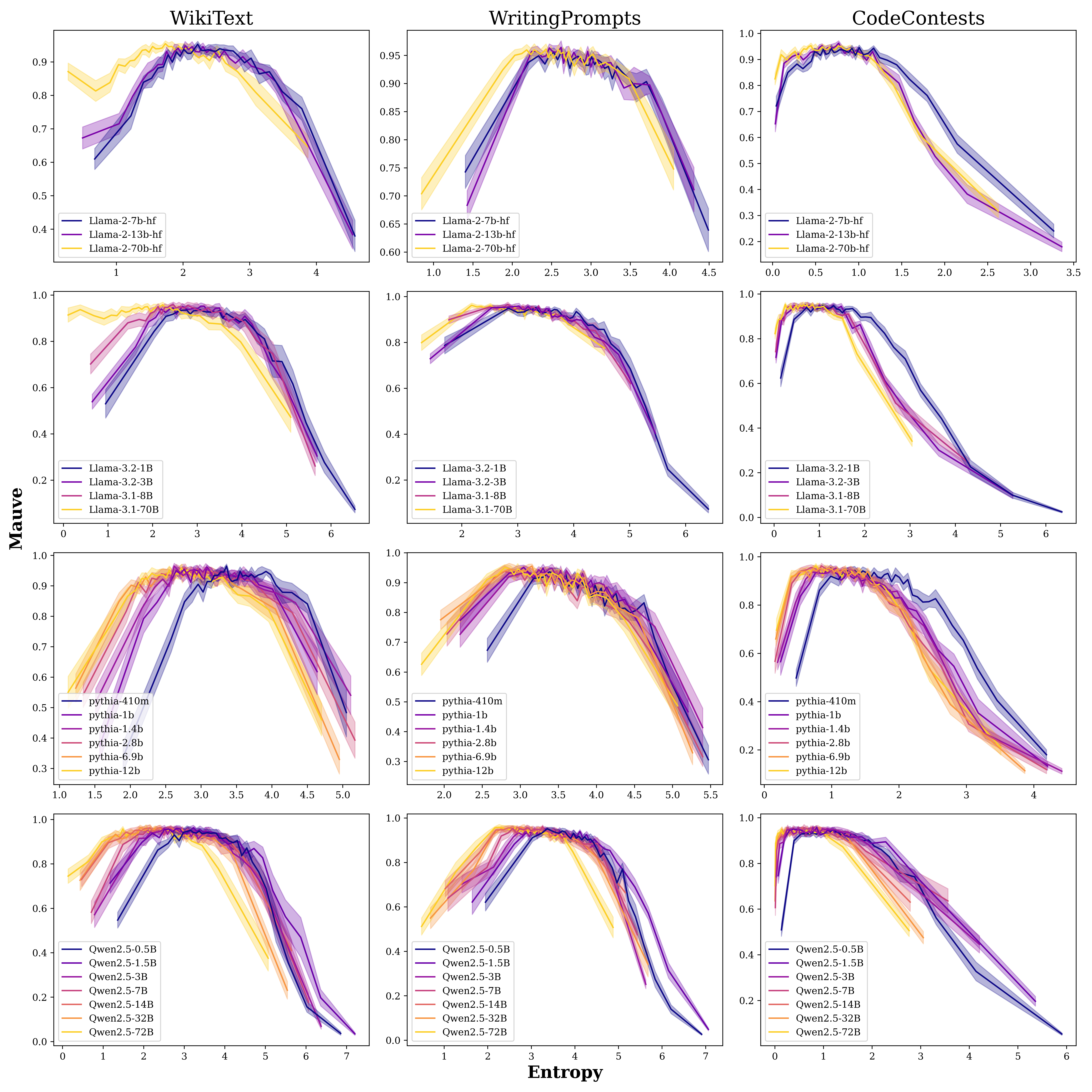}}
\caption{MAUVE for excerpts of model generations plotted against the entropy (in nats) of the excerpt, with models colored by size (see Appendix~\ref{appendix:additional} for the full plots containing all model families). These plots show that sample quality drops when entropy is too high or low.}
\label{figure:mauve}
\end{center}
\vskip -0.2in
\end{figure*}

\begin{figure*}[h!]
\begin{center}
\centerline{\includegraphics[width=\linewidth]{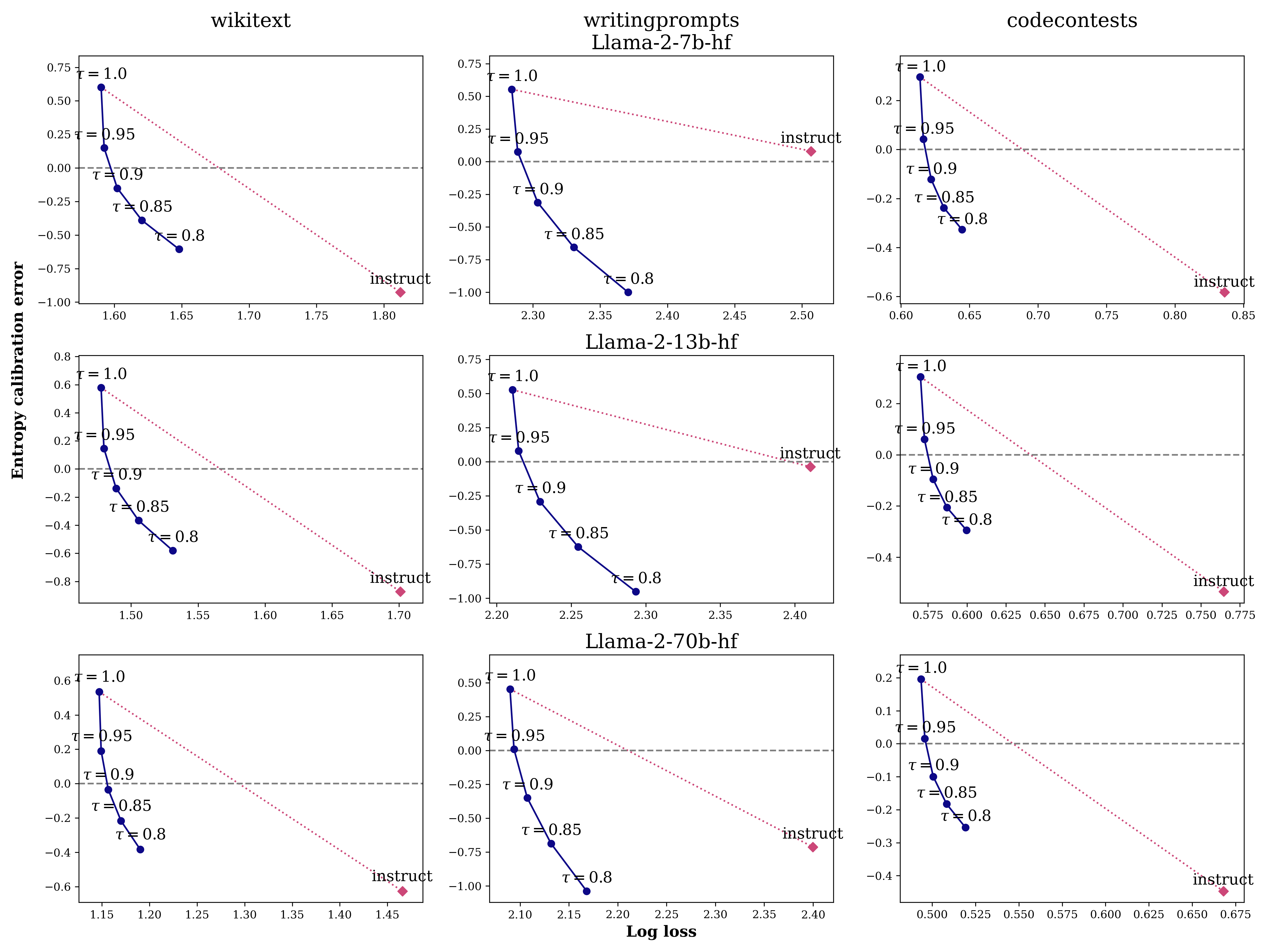}}
\caption{Entropy calibration error versus log loss for all Llama~2 models: each plot contains per-step-averaged calibration error versus log loss for the base model ($\tau=1.0$) compared to the instruction-tuned version, along with various temperature settings.}
\end{center}
\vskip -0.2in
\end{figure*}

\begin{figure*}[h!]
\begin{center}
\centerline{\includegraphics[width=\linewidth]{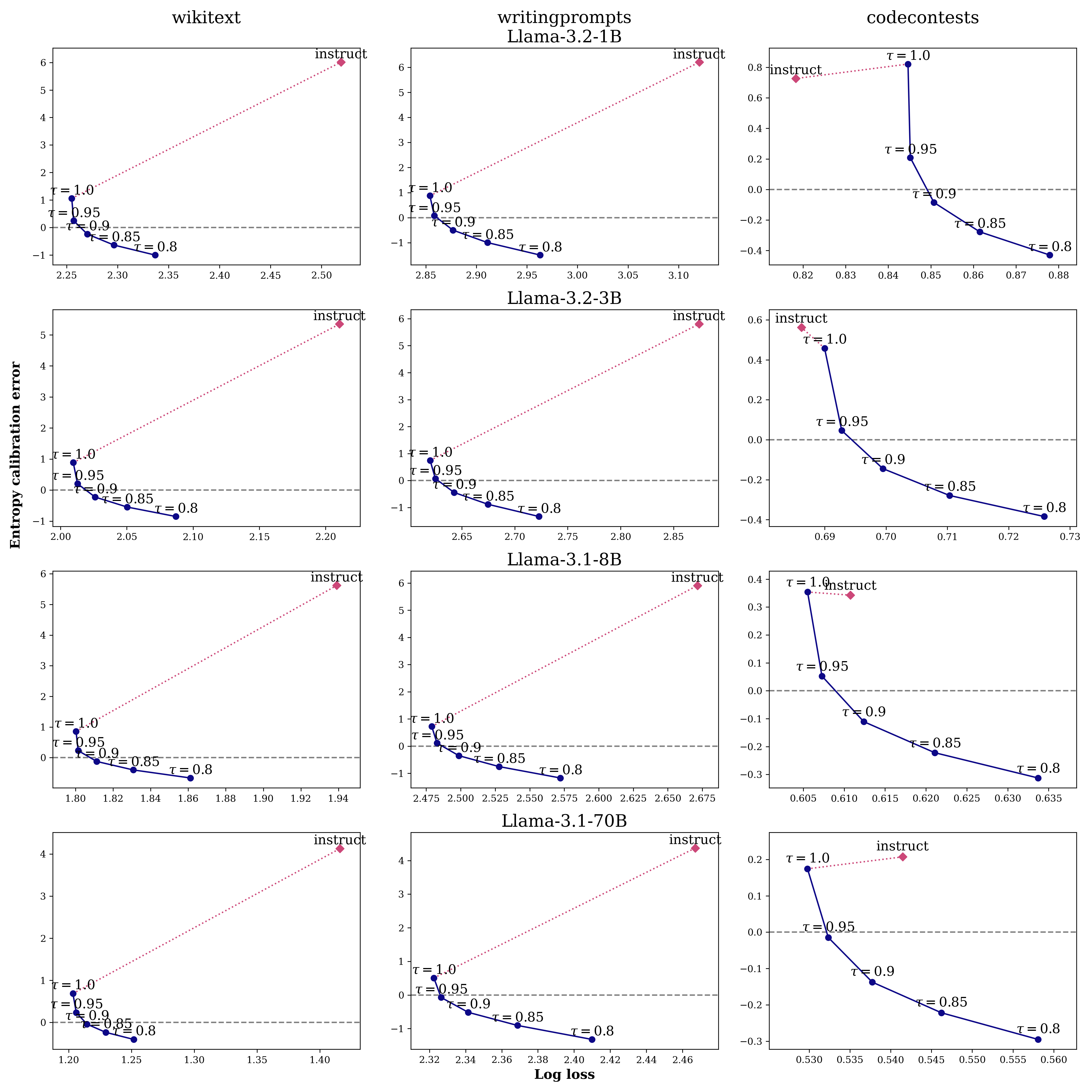}}
\caption{Entropy calibration error versus log loss for all Llama~3 models: each plot contains per-step-averaged calibration error versus log loss for the base model ($\tau=1.0$) compared to the instruction-tuned version, along with various temperature settings. Unlike the other model families, instruction tuning on Llama~3 seems to increase calibration error instead of decreasing it. Based on issues that others have also had with these models, we suspect that there might be unresolved issues with the tokenizer configuration. We use the same standard code for all models, and hope to recreate these plots when the issues with the model are resolved.}
\end{center}
\vskip -0.2in
\end{figure*}

\begin{figure*}[h!]
\begin{center}
\centerline{\includegraphics[width=0.8\linewidth]{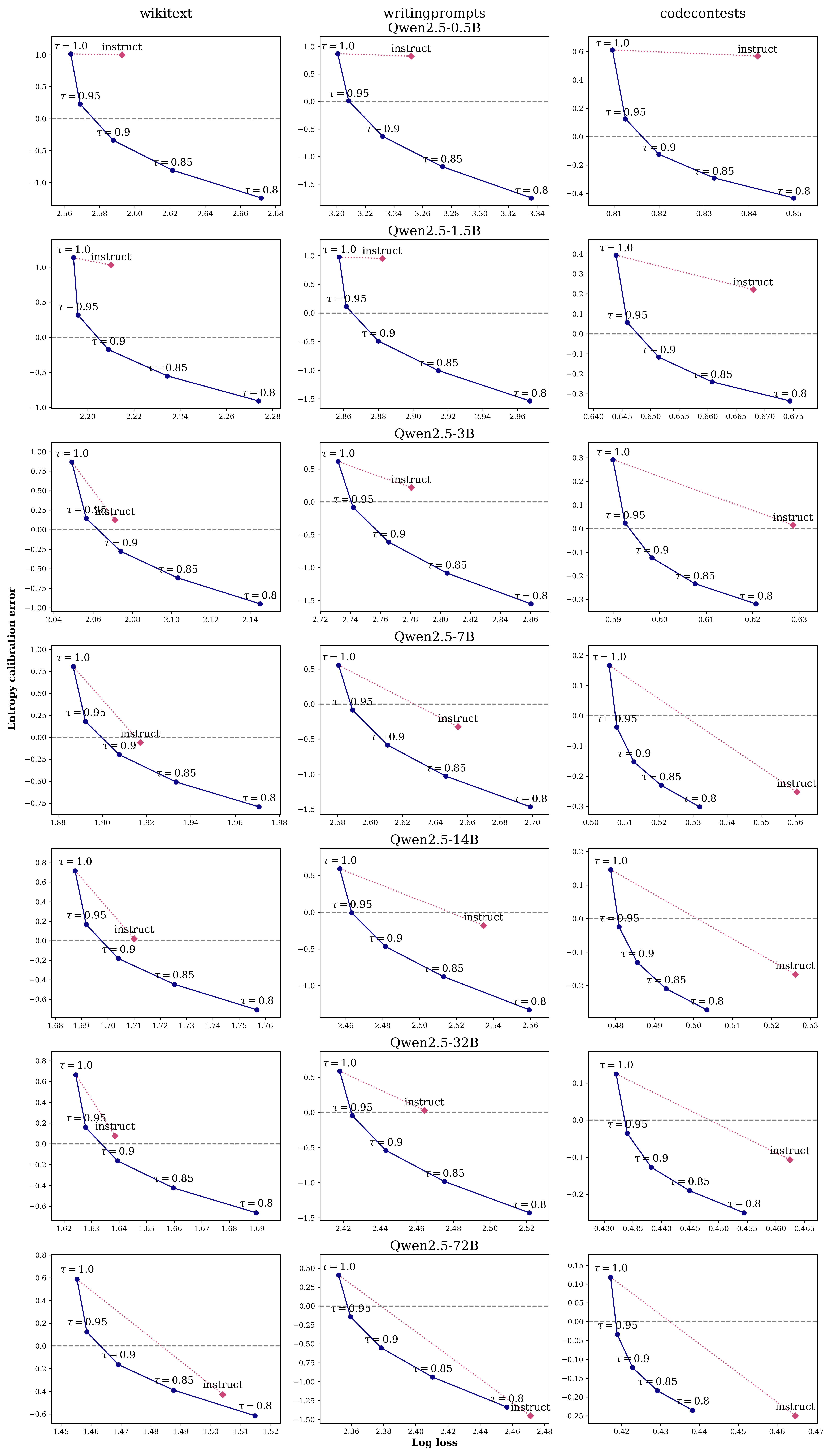}}
\caption{Entropy calibration error versus log loss for all Qwen2.5 models: each plot contains per-step-averaged calibration error versus log loss for the base model ($\tau=1.0$) compared to the instruction-tuned version, along with various temperature settings.}
\end{center}
\vskip -0.2in
\end{figure*}

$\quad$
\newpage
$\quad$
\newpage
$\quad$
\newpage
$\quad$
\newpage
$\quad$
\newpage
\section*{NeurIPS Paper Checklist}

\begin{enumerate}

\item {\bf Claims}
    \item[] Question: Do the main claims made in the abstract and introduction accurately reflect the paper's contributions and scope?
    \item[] Answer: \answerYes{} % Replace by \answerYes{}, \answerNo{}, or \answerNA{}.
    \item[] Justification: The main claims made in the abstract and introduction are tied to specific experiments and theoretical results in the paper.
    \item[] Guidelines:
    \begin{itemize}
        \item The answer NA means that the abstract and introduction do not include the claims made in the paper.
        \item The abstract and/or introduction should clearly state the claims made, including the contributions made in the paper and important assumptions and limitations. A No or NA answer to this question will not be perceived well by the reviewers. 
        \item The claims made should match theoretical and experimental results, and reflect how much the results can be expected to generalize to other settings. 
        \item It is fine to include aspirational goals as motivation as long as it is clear that these goals are not attained by the paper. 
    \end{itemize}

\item {\bf Limitations}
    \item[] Question: Does the paper discuss the limitations of the work performed by the authors?
    \item[] Answer: \answerYes{} % Replace by \answerYes{}, \answerNo{}, or \answerNA{}.
    \item[] Justification: Limitations are discussed throughout the paper (e.g., when discussing the power law and scaling exponents, the practical feasibility of the algorithm, etc.).
    \item[] Guidelines:
    \begin{itemize}
        \item The answer NA means that the paper has no limitation while the answer No means that the paper has limitations, but those are not discussed in the paper. 
        \item The authors are encouraged to create a separate "Limitations" section in their paper.
        \item The paper should point out any strong assumptions and how robust the results are to violations of these assumptions (e.g., independence assumptions, noiseless settings, model well-specification, asymptotic approximations only holding locally). The authors should reflect on how these assumptions might be violated in practice and what the implications would be.
        \item The authors should reflect on the scope of the claims made, e.g., if the approach was only tested on a few datasets or with a few runs. In general, empirical results often depend on implicit assumptions, which should be articulated.
        \item The authors should reflect on the factors that influence the performance of the approach. For example, a facial recognition algorithm may perform poorly when image resolution is low or images are taken in low lighting. Or a speech-to-text system might not be used reliably to provide closed captions for online lectures because it fails to handle technical jargon.
        \item The authors should discuss the computational efficiency of the proposed algorithms and how they scale with dataset size.
        \item If applicable, the authors should discuss possible limitations of their approach to address problems of privacy and fairness.
        \item While the authors might fear that complete honesty about limitations might be used by reviewers as grounds for rejection, a worse outcome might be that reviewers discover limitations that aren't acknowledged in the paper. The authors should use their best judgment and recognize that individual actions in favor of transparency play an important role in developing norms that preserve the integrity of the community. Reviewers will be specifically instructed to not penalize honesty concerning limitations.
    \end{itemize}

\item {\bf Theory assumptions and proofs}
    \item[] Question: For each theoretical result, does the paper provide the full set of assumptions and a complete (and correct) proof?
    \item[] Answer: \answerYes{} % Replace by \answerYes{}, \answerNo{}, or \answerNA{}.
    \item[] Justification: The assumption is stated formally and the appendix contains a full proof.
    \item[] Guidelines:
    \begin{itemize}
        \item The answer NA means that the paper does not include theoretical results. 
        \item All the theorems, formulas, and proofs in the paper should be numbered and cross-referenced.
        \item All assumptions should be clearly stated or referenced in the statement of any theorems.
        \item The proofs can either appear in the main paper or the supplemental material, but if they appear in the supplemental material, the authors are encouraged to provide a short proof sketch to provide intuition. 
        \item Inversely, any informal proof provided in the core of the paper should be complemented by formal proofs provided in appendix or supplemental material.
        \item Theorems and Lemmas that the proof relies upon should be properly referenced. 
    \end{itemize}

    \item {\bf Experimental result reproducibility}
    \item[] Question: Does the paper fully disclose all the information needed to reproduce the main experimental results of the paper to the extent that it affects the main claims and/or conclusions of the paper (regardless of whether the code and data are provided or not)?
    \item[] Answer: \answerYes{} % Replace by \answerYes{}, \answerNo{}, or \answerNA{}.
    \item[] Justification: Please see Appendix~\ref{appendix:experimental} for experimental details.
    \item[] Guidelines:
    \begin{itemize}
        \item The answer NA means that the paper does not include experiments.
        \item If the paper includes experiments, a No answer to this question will not be perceived well by the reviewers: Making the paper reproducible is important, regardless of whether the code and data are provided or not.
        \item If the contribution is a dataset and/or model, the authors should describe the steps taken to make their results reproducible or verifiable. 
        \item Depending on the contribution, reproducibility can be accomplished in various ways. For example, if the contribution is a novel architecture, describing the architecture fully might suffice, or if the contribution is a specific model and empirical evaluation, it may be necessary to either make it possible for others to replicate the model with the same dataset, or provide access to the model. In general. releasing code and data is often one good way to accomplish this, but reproducibility can also be provided via detailed instructions for how to replicate the results, access to a hosted model (e.g., in the case of a large language model), releasing of a model checkpoint, or other means that are appropriate to the research performed.
        \item While NeurIPS does not require releasing code, the conference does require all submissions to provide some reasonable avenue for reproducibility, which may depend on the nature of the contribution. For example
        \begin{enumerate}
            \item If the contribution is primarily a new algorithm, the paper should make it clear how to reproduce that algorithm.
            \item If the contribution is primarily a new model architecture, the paper should describe the architecture clearly and fully.
            \item If the contribution is a new model (e.g., a large language model), then there should either be a way to access this model for reproducing the results or a way to reproduce the model (e.g., with an open-source dataset or instructions for how to construct the dataset).
            \item We recognize that reproducibility may be tricky in some cases, in which case authors are welcome to describe the particular way they provide for reproducibility. In the case of closed-source models, it may be that access to the model is limited in some way (e.g., to registered users), but it should be possible for other researchers to have some path to reproducing or verifying the results.
        \end{enumerate}
    \end{itemize}

\item {\bf Open access to data and code}
    \item[] Question: Does the paper provide open access to the data and code, with sufficient instructions to faithfully reproduce the main experimental results, as described in supplemental material?
    \item[] Answer: \answerYes{} % Replace by \answerYes{}, \answerNo{}, or \answerNA{}.
    \item[] Justification: Code will be provided upon acceptance.
    \item[] Guidelines:
    \begin{itemize}
        \item The answer NA means that paper does not include experiments requiring code.
        \item Please see the NeurIPS code and data submission guidelines (\url{https://nips.cc/public/guides/CodeSubmissionPolicy}) for more details.
        \item While we encourage the release of code and data, we understand that this might not be possible, so “No” is an acceptable answer. Papers cannot be rejected simply for not including code, unless this is central to the contribution (e.g., for a new open-source benchmark).
        \item The instructions should contain the exact command and environment needed to run to reproduce the results. See the NeurIPS code and data submission guidelines (\url{https://nips.cc/public/guides/CodeSubmissionPolicy}) for more details.
        \item The authors should provide instructions on data access and preparation, including how to access the raw data, preprocessed data, intermediate data, and generated data, etc.
        \item The authors should provide scripts to reproduce all experimental results for the new proposed method and baselines. If only a subset of experiments are reproducible, they should state which ones are omitted from the script and why.
        \item At submission time, to preserve anonymity, the authors should release anonymized versions (if applicable).
        \item Providing as much information as possible in supplemental material (appended to the paper) is recommended, but including URLs to data and code is permitted.
    \end{itemize}

\item {\bf Experimental setting/details}
    \item[] Question: Does the paper specify all the training and test details (e.g., data splits, hyperparameters, how they were chosen, type of optimizer, etc.) necessary to understand the results?
    \item[] Answer: \answerYes{} % Replace by \answerYes{}, \answerNo{}, or \answerNA{}.
    \item[] Justification: Please see Appendix~\ref{appendix:experimental} for experimental details.
    \item[] Guidelines:
    \begin{itemize}
        \item The answer NA means that the paper does not include experiments.
        \item The experimental setting should be presented in the core of the paper to a level of detail that is necessary to appreciate the results and make sense of them.
        \item The full details can be provided either with the code, in appendix, or as supplemental material.
    \end{itemize}

\item {\bf Experiment statistical significance}
    \item[] Question: Does the paper report error bars suitably and correctly defined or other appropriate information about the statistical significance of the experiments?
    \item[] Answer: \answerNA{} % Replace by \answerYes{}, \answerNo{}, or \answerNA{}.
    \item[] Justification: Error bars are provided where they would make sense (see, e.g., Figure~\ref{figure:mauve}) but omitted when they would not make sense or would clutter the plots visually.
    \item[] Guidelines:
    \begin{itemize}
        \item The answer NA means that the paper does not include experiments.
        \item The authors should answer "Yes" if the results are accompanied by error bars, confidence intervals, or statistical significance tests, at least for the experiments that support the main claims of the paper.
        \item The factors of variability that the error bars are capturing should be clearly stated (for example, train/test split, initialization, random drawing of some parameter, or overall run with given experimental conditions).
        \item The method for calculating the error bars should be explained (closed form formula, call to a library function, bootstrap, etc.)
        \item The assumptions made should be given (e.g., Normally distributed errors).
        \item It should be clear whether the error bar is the standard deviation or the standard error of the mean.
        \item It is OK to report 1-sigma error bars, but one should state it. The authors should preferably report a 2-sigma error bar than state that they have a 96\% CI, if the hypothesis of Normality of errors is not verified.
        \item For asymmetric distributions, the authors should be careful not to show in tables or figures symmetric error bars that would yield results that are out of range (e.g. negative error rates).
        \item If error bars are reported in tables or plots, The authors should explain in the text how they were calculated and reference the corresponding figures or tables in the text.
    \end{itemize}

\item {\bf Experiments compute resources}
    \item[] Question: For each experiment, does the paper provide sufficient information on the computer resources (type of compute workers, memory, time of execution) needed to reproduce the experiments?
    \item[] Answer: \answerYes{} % Replace by \answerYes{}, \answerNo{}, or \answerNA{}.
    \item[] Justification: Please see Appendix~\ref{appendix:experimental}.
    \item[] Guidelines:
    \begin{itemize}
        \item The answer NA means that the paper does not include experiments.
        \item The paper should indicate the type of compute workers CPU or GPU, internal cluster, or cloud provider, including relevant memory and storage.
        \item The paper should provide the amount of compute required for each of the individual experimental runs as well as estimate the total compute. 
        \item The paper should disclose whether the full research project required more compute than the experiments reported in the paper (e.g., preliminary or failed experiments that didn't make it into the paper). 
    \end{itemize}
    
\item {\bf Code of ethics}
    \item[] Question: Does the research conducted in the paper conform, in every respect, with the NeurIPS Code of Ethics \url{https://neurips.cc/public/EthicsGuidelines}?
    \item[] Answer: \answerYes{} % Replace by \answerYes{}, \answerNo{}, or \answerNA{}.
    \item[] Justification: The paper only uses public datasets and no human subjects. The work is mostly theoretical, and societal implications are discussed below.
    \item[] Guidelines:
    \begin{itemize}
        \item The answer NA means that the authors have not reviewed the NeurIPS Code of Ethics.
        \item If the authors answer No, they should explain the special circumstances that require a deviation from the Code of Ethics.
        \item The authors should make sure to preserve anonymity (e.g., if there is a special consideration due to laws or regulations in their jurisdiction).
    \end{itemize}

\item {\bf Broader impacts}
    \item[] Question: Does the paper discuss both potential positive societal impacts and negative societal impacts of the work performed?
    \item[] Answer: \answerYes{} % Replace by \answerYes{}, \answerNo{}, or \answerNA{}.
    \item[] Justification: While our work primarily involves analysis and theory, it has implications for downstream tasks like creative writing and code generation. The advancement of language model capabilities in these domains would lead to useful tools, but would also disrupt online communities and people's livelihoods. We hope that language models can be deployed responsibly, in ways that maintain the health and well-being of the communities they are trained on.
    \item[] Guidelines:
    \begin{itemize}
        \item The answer NA means that there is no societal impact of the work performed.
        \item If the authors answer NA or No, they should explain why their work has no societal impact or why the paper does not address societal impact.
        \item Examples of negative societal impacts include potential malicious or unintended uses (e.g., disinformation, generating fake profiles, surveillance), fairness considerations (e.g., deployment of technologies that could make decisions that unfairly impact specific groups), privacy considerations, and security considerations.
        \item The conference expects that many papers will be foundational research and not tied to particular applications, let alone deployments. However, if there is a direct path to any negative applications, the authors should point it out. For example, it is legitimate to point out that an improvement in the quality of generative models could be used to generate deepfakes for disinformation. On the other hand, it is not needed to point out that a generic algorithm for optimizing neural networks could enable people to train models that generate Deepfakes faster.
        \item The authors should consider possible harms that could arise when the technology is being used as intended and functioning correctly, harms that could arise when the technology is being used as intended but gives incorrect results, and harms following from (intentional or unintentional) misuse of the technology.
        \item If there are negative societal impacts, the authors could also discuss possible mitigation strategies (e.g., gated release of models, providing defenses in addition to attacks, mechanisms for monitoring misuse, mechanisms to monitor how a system learns from feedback over time, improving the efficiency and accessibility of ML).
    \end{itemize}
    
\item {\bf Safeguards}
    \item[] Question: Does the paper describe safeguards that have been put in place for responsible release of data or models that have a high risk for misuse (e.g., pretrained language models, image generators, or scraped datasets)?
    \item[] Answer: \answerNA{} % Replace by \answerYes{}, \answerNo{}, or \answerNA{}.
    \item[] Justification: The paper does not release any data or models.
    \item[] Guidelines:
    \begin{itemize}
        \item The answer NA means that the paper poses no such risks.
        \item Released models that have a high risk for misuse or dual-use should be released with necessary safeguards to allow for controlled use of the model, for example by requiring that users adhere to usage guidelines or restrictions to access the model or implementing safety filters. 
        \item Datasets that have been scraped from the Internet could pose safety risks. The authors should describe how they avoided releasing unsafe images.
        \item We recognize that providing effective safeguards is challenging, and many papers do not require this, but we encourage authors to take this into account and make a best faith effort.
    \end{itemize}

\item {\bf Licenses for existing assets}
    \item[] Question: Are the creators or original owners of assets (e.g., code, data, models), used in the paper, properly credited and are the license and terms of use explicitly mentioned and properly respected?
    \item[] Answer: \answerYes{} % Replace by \answerYes{}, \answerNo{}, or \answerNA{}.
    \item[] Justification: The paper uses publicly available code packages and datasets and cites them.
    \item[] Guidelines:
    \begin{itemize}
        \item The answer NA means that the paper does not use existing assets.
        \item The authors should cite the original paper that produced the code package or dataset.
        \item The authors should state which version of the asset is used and, if possible, include a URL.
        \item The name of the license (e.g., CC-BY 4.0) should be included for each asset.
        \item For scraped data from a particular source (e.g., website), the copyright and terms of service of that source should be provided.
        \item If assets are released, the license, copyright information, and terms of use in the package should be provided. For popular datasets, \url{paperswithcode.com/datasets} has curated licenses for some datasets. Their licensing guide can help determine the license of a dataset.
        \item For existing datasets that are re-packaged, both the original license and the license of the derived asset (if it has changed) should be provided.
        \item If this information is not available online, the authors are encouraged to reach out to the asset's creators.
    \end{itemize}

\item {\bf New assets}
    \item[] Question: Are new assets introduced in the paper well documented and is the documentation provided alongside the assets?
    \item[] Answer: \answerNA{} % Replace by \answerYes{}, \answerNo{}, or \answerNA{}.
    \item[] Justification: The paper does not release new assets.
    \item[] Guidelines:
    \begin{itemize}
        \item The answer NA means that the paper does not release new assets.
        \item Researchers should communicate the details of the dataset/code/model as part of their submissions via structured templates. This includes details about training, license, limitations, etc. 
        \item The paper should discuss whether and how consent was obtained from people whose asset is used.
        \item At submission time, remember to anonymize your assets (if applicable). You can either create an anonymized URL or include an anonymized zip file.
    \end{itemize}

\item {\bf Crowdsourcing and research with human subjects}
    \item[] Question: For crowdsourcing experiments and research with human subjects, does the paper include the full text of instructions given to participants and screenshots, if applicable, as well as details about compensation (if any)? 
    \item[] Answer: \answerNA{} % Replace by \answerYes{}, \answerNo{}, or \answerNA{}.
    \item[] Justification: The paper does not involve crowdsourcing nor research with human subjects.
    \item[] Guidelines:
    \begin{itemize}
        \item The answer NA means that the paper does not involve crowdsourcing nor research with human subjects.
        \item Including this information in the supplemental material is fine, but if the main contribution of the paper involves human subjects, then as much detail as possible should be included in the main paper. 
        \item According to the NeurIPS Code of Ethics, workers involved in data collection, curation, or other labor should be paid at least the minimum wage in the country of the data collector. 
    \end{itemize}

\item {\bf Institutional review board (IRB) approvals or equivalent for research with human subjects}
    \item[] Question: Does the paper describe potential risks incurred by study participants, whether such risks were disclosed to the subjects, and whether Institutional Review Board (IRB) approvals (or an equivalent approval/review based on the requirements of your country or institution) were obtained?
    \item[] Answer: \answerNA{} % Replace by \answerYes{}, \answerNo{}, or \answerNA{}.
    \item[] Justification: The paper does not involve crowdsourcing nor research with human subjects.
    \item[] Guidelines:
    \begin{itemize}
        \item The answer NA means that the paper does not involve crowdsourcing nor research with human subjects.
        \item Depending on the country in which research is conducted, IRB approval (or equivalent) may be required for any human subjects research. If you obtained IRB approval, you should clearly state this in the paper. 
        \item We recognize that the procedures for this may vary significantly between institutions and locations, and we expect authors to adhere to the NeurIPS Code of Ethics and the guidelines for their institution. 
        \item For initial submissions, do not include any information that would break anonymity (if applicable), such as the institution conducting the review.
    \end{itemize}

\item {\bf Declaration of LLM usage}
    \item[] Question: Does the paper describe the usage of LLMs if it is an important, original, or non-standard component of the core methods in this research? Note that if the LLM is used only for writing, editing, or formatting purposes and does not impact the core methodology, scientific rigorousness, or originality of the research, declaration is not required.
    %this research? 
    \item[] Answer: \answerNA{} % Replace by \answerYes{}, \answerNo{}, or \answerNA{}.
    \item[] Justification: The core method development in this research does not involve LLMs as any important, original, or non-standard components.
    \item[] Guidelines:
    \begin{itemize}
        \item The answer NA means that the core method development in this research does not involve LLMs as any important, original, or non-standard components.
        \item Please refer to our LLM policy (\url{https://neurips.cc/Conferences/2025/LLM}) for what should or should not be described.
    \end{itemize}

\end{enumerate}

\end{document}